\icmltitlerunning{Mediated Uncoupled Learning}
\begin{document}

\twocolumn[
\icmltitle{Mediated Uncoupled Learning:\\Learning Functions without Direct Input-output Correspondences}



\icmlsetsymbol{equal}{*}

\begin{icmlauthorlist}
\icmlauthor{Ikko Yamane}{dauphine,riken}
\icmlauthor{Junya Honda}{kyoto,riken}
\icmlauthor{Florian Yger}{dauphine,riken}
\icmlauthor{Masashi Sugiyama}{riken,tokyo}
\end{icmlauthorlist}

\icmlaffiliation{dauphine}{LAMSADE, CNRS, Universit{\' e} Paris-Dauphine, PSL Research University, 75016 PARIS, FRANCE}
\icmlaffiliation{riken}{RIKEN AIP, Tokyo, Japan}
\icmlaffiliation{kyoto}{Kyoto University, Kyoto, Japan}
\icmlaffiliation{tokyo}{The University of Tokyo, Tokyo, Japan}

\icmlcorrespondingauthor{Ikko Yamane}{ikko.yamane@dauphine.psl.eu}

\icmlkeywords{Weakly supervised learning, semi-supervised learning}

\vskip 0.3in
]



\customPrintAffiliationsAndNotice{}{%
This is a revised version of our ICML paper appearing in \textit{Proceedings of the
$\mathit{38}^{th}$ International Conference on Machine Learning},
 PMLR 139, 2021, with correction to Figure~\ref{fig:toy_dim_vs_mse}.
Copyright 2021 by the author(s).}


\begin{abstract}
  Ordinary supervised learning is useful when we have paired training data of input $X$ and output $Y$.
  However, such paired data can be difficult to collect in practice.
  In this paper, we consider the task of predicting $Y$ from $X$ when we have no paired data of them, but we have two separate, independent datasets of $X$ and $Y$ each observed with some mediating variable $U$,
  that is, we have two datasets $S_X = \{(X_i, U_i)\}$ and $S_Y = \{(U'_j, Y'_j)\}$.
  A naive approach is to predict $U$ from $X$ using $S_X$ and then $Y$ from $U$ using $S_Y$, but we show that this is not statistically consistent.
  Moreover, predicting $U$ can be more difficult than predicting $Y$ in practice, e.g., when $U$ has higher dimensionality.
  To circumvent the difficulty, we propose a new method that avoids predicting $U$ but directly learns $Y = f(X)$ by training $f(X)$ with $S_{X}$ to predict $h(U)$ which is trained with $S_{Y}$ to approximate $Y$.
  We prove statistical consistency and error bounds of our method and experimentally confirm its practical usefulness.
\end{abstract}

\section{Introduction}
Supervised learning methods have been popular as powerful tools for many prediction tasks when we have training data consisting of direct correspondences between the output variable \(Y\) to be predicted and the input variable \(X\) to be used for the prediction~\cite{murphy_machine_2012, mohri2012foundations, shalev-shwartz_understanding_2014}.

However, in some applications, it is difficult or expensive to collect training data consisting of \((X, Y)\)-pairs~\citep{chapelle_semi-supervised_2006, zhu_semi-supervised_nodate, van_engelen_survey_2020}.
For example, consider the case where we want to learn a function predicting sentiment of an image~\citep{mittal_image_sentiment}.
Even if we do not have image data labeled with sentiment information, we might be able to find separate datasets
consisting of images with text captions~\citep{xu_show_2015}
and texts with sentiment labels~\citep{medhat_sentiment_2014}.
Another example is translation between minor languages.
If there are bilingual corpora in those languages, we could apply supervised learning techniques.
However, it can be hard to obtain such training data since there may not be many speakers bilingual in minor languages.
Instead, we may have a better chance to find separate translation corpora in each language with a major one such as English.

In this paper, we consider the situation in which
we do not have access to direct correspondences between \(X\) and \(Y\),
but we only have two separate datasets of \(X\) and \(Y\),
each observed with some \emph{mediating variable} \(U\),
\(S_X = \{(X_i, U_i)\}\) and \(S_Y = \{(U'_j, Y'_j)\}\),
where \((X_i, U_i)\) and \((U'_j, Y'_j)\) are independent and thus we have no paired data of \(X\) and \(Y\).
Note that $U_{i}$ and $U'_{j}$ are generally different samples.
In the example of image sentiment prediction, text captions can be used as the mediating variable \(U\);
\(S_X\) corresponds to image data with text captions,
and \(S_Y\) corresponds to text data with sentiment labels.
We call this framework \emph{mediated uncoupled learning}.

A naive approach is to separately learn the function \(U = g(X)\) using \((X, U)\)-data and the function \(Y = h(U)\) using \((U, Y)\)-data.
Then, one can predict \(Y\) from \(X\) by chaining the estimated functions as
\(\widehat{Y} \coloneqq \widehat{h}(\widehat{U})\) with \(\widehat{U} \coloneqq \widehat{g}(X)\),
where \(\widehat{h}\) and \(\widehat{g}\) are estimates of \(h\) and \(g\), respectively.
However, we show that this method is not statistically consistent
since the point prediction \(\widehat{U}\) does not carry enough information for predicting \(Y\),
unless \(Y\) and \(U\) have linear relationship or \(U\) is a deterministic function of \(X\) (see the detailed discussion in Section~\ref{sec:naive}).

One can fix the inconsistency by (implicitly or explicitly) estimating the conditional probability density function (p.d.f.) \(p(u\given x)\)
of \(U\) given \(X\) in place of the deterministic function \(g\).
Then, one can predict \(Y\) given \(X=x\) by calculating \(\int h(u) \widehat{p}(u \given x) \drm u\)
with the estimated conditional p.d.f.\@ \(\widehat{p}(u \given x)\).
However, this approach involves the task of conditional density estimation or learning generative models, which needs delicate modeling and training~\citep{salimans_improved_gans, gulrajani_improved_wgan, kingma_improved_variational}.
Moreover, it requires integrating the estimated function at the prediction time, which can be computationally inefficient.

A cause of the weaknesses of these naive methods is that they try to predict \(U\),
which is unnecessary in order to solve the original task of predicting \(Y\).
To circumvent this issue, we propose a method that learns a function \(f\) directly predicting \(Y\) from \(X\)
without attempting to predict $U$.
Our proposed method first learns the correspondence $Y = h(U)$, and then train $f$ so that
$f(X)$ will best predict the output of $h(U)$.
This simple approach allows us to use state-of-the-art supervised learning methods as building blocks out of the box
while providing excellent theoretical and practical properties.
Our theoretical analysis shows the statistical consistency and provides an excess error bound for our method.
Finally, we demonstrate the practical usefulness of the proposed method through experiments.

\section{Problem Setup}
\label{sec:problem_setting}
Our goal is to estimate a function that predicts a \(\mathcal{Y}\)-valued output variable \(Y\) from an \(\mathcal{X}\)-valued input variable \(X\),
where \(\mathcal{X}\subseteq \Re^{d_X}\) (\(d_{X} \in \Nat\)) and \(\mathcal{Y} \subseteq \Re\) are measurable spaces,%
\footnote{We can easily extend all the results to the case where \(\mathcal{Y}\) is multi-dimensional.
This manuscript focuses on the one-dimensional case for the ease of notation.}
and \((X, Y)\) follows an unknown probability distribution with density \(p(x, y)\).
More specifically, the function that we want to estimate is
the conditional expectation of \(Y\) given \(X\), which is characterized as the minimizer of the mean squared error (MSE):
\(f^* \coloneqq \E[Y \given X] = \argmin_{f \in L^2_{X}} \E[(f(X) - Y)^2]\),
where \(L^2_{X} \coloneqq \{f: \mathcal{X} \to \Re \given \norm{f}_2^2 \coloneqq \E[f(X)^2] < \infty\}\) and \(\E[\cdot]\) denotes the expectation over all involved variables.
Note that the minimizer is unique in the sense that any minimizer \(f\)
has distance zero from \(f^*\) in the \(L^2_{X}\)-norm: \(\norm{f - f^*}_2^2 = \E[(f(X) - Y)^2] - \E[(f^*(X) - Y)^2] = 0\).
The MSE can be used for classification too, which corresponds to adopting the squared loss as a surrogate loss.%
\footnote{In the case of multi-class classification, we can use the squared loss with the one-hot representation for class labels.}

Unlike standard supervised problems, we have no access to direct supervision provided by joint samples of \((X, Y)\).
Instead, we assume that there exists a \(\mathcal{U}\)-valued \emph{mediating variable} \(U\) for which there is a joint density function \(p(x, u, y)\) of \((X, U, Y)\), where \(\mathcal{U}\subseteq \Re^{d_U}\) (\(d_{u} \in \Nat\)) is a measurable space,
and we are given two sets of i.i.d.\@ samples, \(\{(X_i, U_i)\}_{i=1}^n \iid p(x, u)\) and \(\{(U'_i, Y'_i)\}_{i=1}^{n'} \iid p(u, y)\).
Here, \(p(x, u)\) and \(p(u, y)\) are the marginal p.d.f.-s of \((X, U)\) and \((U, Y)\), respectively, that are compatible with \(p(x, u, y)\).
We call these data \emph{mediated uncoupled data} since \(X\) and \(Y\) are observed separately with a common variable \(U\) that mediates between them.

We assume the \emph{conditional mean independence} given by
\begin{equation}
    \E[Y \given U] = \E[Y \given U, X = x] \label{eq:u_sufficient}
\end{equation}
for every \(x \in \cX\).
The condition ensures that \((U, Y)\) has enough information to learn \(\E[Y \given X]\).
Note that the conditional independence of \(Y\) and \(X\) given \(U\), i.e., \(Y \indep X \given U\),
implies the conditional mean independence, Eq.~\eqref{eq:u_sufficient}, but the converse is not true.
We discuss the case where this assumption is not satisfied in Section~\ref{sec:assump}.

A related but different problem setting of learning without input-output correspondences was studied in \citet{zhang2019learning}.
They considered the situation in which, using our notation, \(U \indep X \given Y\) and the conditional probability of \(Y\) given \(U\) is given instead of \((U, Y)\)-pairs, which is a typical scenario in learning with noisy labels~\citep{angluin_learning_1988, blanchard_classification_2016, natarajan_learning_2013}.
Also, these methods focus on the case where \(Y\) is discrete.
\citet{yamane_uplift_2018} considered estimation of causal effect using data separately labeled with either treatment or outcome
but not both at the same time.
The type of training data is similar to ours, but the problem setup is essentially different.

\citet{dhir_integrating_2020} proposed a method for causal discovery under a similar setup in which not all combinations of the variables of interest are jointly observed. The form of data that they assumed includes ours as a special case, and they provided some real-world examples in which such data arise. However, their goal is to infer causal directions betweeen variables but not to predict their values, and thus their method is not applicable to our problem.

\section{Naive Approach Based on Separate Estimators}
\label{sec:naive}
A naive approach to this problem is to estimate \(g^*(x) \coloneqq \E[U \given X = x]\) and \(h^*(u) \coloneqq \E[Y \given U = u]\) as \(g(x)\) and \(h(u)\), respectively, and then combine them as \(f_\text{combine}(x) \coloneqq h(g(x))\) to estimate \(\E[Y \given X = x]\).
The combined estimator is consistent when \(h^*\) is a linear function, or \(U\) is a deterministic function of \(X\) so that
\begin{equation*}
    h^*(g^*(x))
    = h^*(\E[U \given X = x])
    = \E[h^*(U) \given X = x],
\end{equation*}
in which case, we have
\begin{align}
    h^*(g^*(x))
    &= \E[h^*(U) \given X = x]\nonumber\\
    &= \E[\E[Y \given U] \given X = x]\nonumber\\
    &= \E[\E[Y \given U, X = x] \given X = x] \quad \text{(from Eq.~\eqref{eq:u_sufficient})}\nonumber\\
    &= \E[Y \given X = x]. \label{eq:hstar_given_x}
\end{align}
Hence, the consistency of estimators of \(h^*\) and \(g^*\) will guarantee the consistency of their composite function to \(\E[Y\given X]\).

However, it fails to consistently estimate the target function \(\E[Y \given X = x]\) in many important non-linear cases.
For example, when \(h^*\) is strictly convex, and \(U\) is a stochastic function of \(X\),
Jensen's inequality implies
\begin{align*}
    h^*(g^*(x))
    &= h^*(\E[U \given X = x])\\
    &< \E[h^*(U) \given X = x]\\
    &= \E[Y \given X = x],
\end{align*}
where the last line follows from Eq.~\eqref{eq:hstar_given_x}.
Thus, the estimator under-estimates the target, and it is not consistent.

One can develop a consistent version of the naive method
by estimating the conditional density function \(p(u \given x)\) of \(U\) given \(X\) instead of the conditional expectation \(\E[U \given X = x]\).
Once \(p(u \given x)\) is estimated as \(q(u \given x)\) and \(\E[Y \given U = u]\) as \(h(u)\),
one can estimate \(\E[Y \given X = x]\) as
\begin{align}
    f_\text{integral}(x) \coloneqq \int h(u) q(u \given x) \mathrm{d}u. \label{eq:integral_approach}
\end{align}
Then, \(f_\text{integral}\) is a consistent estimator of \(\E[Y\given X = x]\) as long as \(q(u \given x)\) and \(h(u)\) are consistent because it converges to
\begin{align*}
    &\int h^*(u) p(u \given x) \mathrm{d}u\\
    &= \int \E[Y \given U = u] p(u \given x) \mathrm{d}u\\
    &= \int \E[Y \given U = u, X = x] p(u \given x) \mathrm{d}u \quad \text{(by Eq.~\eqref{eq:u_sufficient})}\nonumber\\
    &= \E[Y \given X = x].
\end{align*}
However, this modified method solves the hard intermediate problem of estimating the conditional probability density function \(p(u \given x)\).
This can be particularly problematic when we use neural networks because it has been reported that they tend to be overconfident and show poor performance in predicting the conditional probability of the output~\citep{hein_overconf_2019}.
Moreover, one needs to be able to accurately calculate the integral in Eq.~\eqref{eq:integral_approach}, e.g., by sampling from \(q(u \given x)\), at each prediction,
which is often computationally demanding and prohibitive when we need real-time responses in prediction.
Although the efficient belief-propagation based algorithm proposed by \citet{pmlr-v9-song10a} can be used when distributions are represented by reproducing kernel Hilbert space (RKHS) embeddings, it is not generally applicable when we use function classes other than RKHSs, such as neural networks.

In fact, for several problems that are solvable by performing density estimation as intermediate tasks,
directly solving the target task without solving density estimation
reportedly improves performance~\cite{sugiyama_suzuki_kanamori_2012, sugiyama_density-difference_2013, Sasaki14}.
\citet{vapnik_nature_stat_learn_1995} also argued that it is preferable to avoid solving intermediate tasks
that are more general than the target task.

Another higher-level criticism of the naive approach above from a statistical point of view is that
the intermediate step of estimating \(\E[U \given X]\) (or \(p(u \given x)\)) is performed without any attention to the target task of predicting \(Y\).
This means that those estimators are not designed in a way that the resulting prediction for \(Y\) will be accurate.

\section{Proposed Methods}
Our approach learns a function \(f\colon \mathcal{X} \to \mathcal{Y}\) that directly predicts \(Y\) from \(X\)
without predicting \(U\).
It does not try to solve the hard intermediate problem of predicting \(U\) from \(X\),
and it is free from the issues that the naive approach suffers.
Below, we describe our approach to the problem more precisely and propose two methods based on it.

\subsection{Two-step Regressed Regression (2Step-RR)}
The first proposed method consists of two steps.
The first step is for training a function \(h: \mathcal{U} \to \mathcal{Y}\) to predict \(Y\) from \(U\) using \(S_{Y}\).
Because each sample of \(U\) in \(S_{{Y}}\) is labeled with the corresponding sample of \(Y\),
this step is no more than an ordinary supervised learning task.
Now, \(h\) can predict \(Y\), but its input is \(U\), not \(X\).
To obtain a function \(f\) that takes \(X\) as input and predicts \(Y\),
we train \(f\) so that the output of \(f(X)\) will mimic that of \(h(U)\),
for which we only need \(S_{X}\) consisting of samples of \((X, U)\).

More specifically, we first train a function \(\widetilde{h} \colon \mathcal{U} \to \mathcal{Y}\) for predicting \(Y\) from \(U\):
\begin{equation*}
    \widetilde{h} = \argmin_{h\in\mathcal{H}} \frac{1}{n'}\sum_{i=1}^{n'}[(h(U'_i) - Y'_i)^2].
\end{equation*}
Then, we train another function \(\widetilde{f} \colon \mathcal{X} \to \mathcal{Y}\) for predicting \(\widetilde{h}(U)\) from \(X\):
\begin{equation*}
    \widetilde{f} = \argmin_{f\in\mathcal{F}} \frac{1}{n}\sum_{i=1}^n[(f(X_i) - \widetilde{h}(U_i))^2].
\end{equation*}
In the above, \(\mathcal{H}\) and \(\mathcal{F}\) are function classes for \(\widetilde{h}\) and \(\widetilde{f}\), respectively.
Because we train \(\widetilde{h}\) so that \(\widetilde{h}(U)\) will predict \(Y\) well,
\(\widetilde{f}(X)\) is expected to predict \(Y\) well by predicting \(\widetilde{h}(U)\).
We call this method \emph{Two-Step Regressed Regression (2Step-RR)} since the predictive function \(f\) is learned by regressing the regression of \(Y\).

Note that the objective functional in each step uses samples of either \((X, U)\) or \((U, Y)\),
not both at the same time.
Thus, we can compute it with our mediated uncoupled data,
\(\{(X_i, U_i)\}_{i=1}^n\) and \(\{(U'_i, Y'_i)\}_{i=1}^{n'}\).
We summarize the algorithm in Algorithm~\ref{alg:twostep}.
In Section~\ref{sec:theory}, we will show that this method has statistical consistency
and admits a nice non-asymptotic error bound.

\begin{algorithm}[tb]
   \caption{Two-Step Regressed Regression (2Step-RR)}
   \label{alg:twostep}
\begin{algorithmic}
   \STATE \(\widetilde{h} \gets \argmin_{h\in\mathcal{H}} \frac{1}{n'}\sum_{i=1}^{n'} (h(U'_i) - Y'_i)^2.\)
   \STATE \(\widetilde{f} \gets \argmin_{f\in\mathcal{F}} \frac{1}{n}\sum_{i=1}^n (f(X_i) - \widetilde{h}(U_i))^2.\)
   \STATE {\bfseries Return:} \(\widetilde{f}\)
\end{algorithmic}
\end{algorithm}

\subsection{Jointly Performing the Two Steps}
2Step-RR described above has nice theoretical properties (see Section~\ref{sec:theory}),
but there may be room for improvement on practical, finite-sample performance.
More specifically, while 2Step-RR uses no information for training \(f\) when \(h\) is trained,
it may be advantageous to let \(h\) adapt to the second step
in a way that it will be easier for \(f(X)\) to fit \(h(U)\).
Here, we are going to combine the two steps of 2Step-RR
to develop a variant called \emph{Joint Regressed Regression (Joint-RR)} that trains \(f\) and \(h\) at the same time.
This allows \(h\) to incorporate how well \(f(X)\) can fit \(h(U)\)
and adjust itself in favor of the training of \(f\).

\subsubsection{Joint Regressed Regression (Joint-RR)}
The procedure of Joint-RR itself is simple (see Algorithm~\ref{alg:onestep}).
We additively combine the two objective functionals used for training \(f\) and \(h\) in 2Step-RR:
\begin{align}
  \widehat{J}_w(f, h)
  &\coloneqq \frac{1}{wn}\sum_{i=1}^n (f(X_i) - h(U_i))^2\nonumber\\
  &\phantom{\coloneqq\ }+ \frac{1}{(1-w)n'}\sum_{i=1}^{n'} (h(U'_i) - Y'_i)^2, \label{eq:ub_sample}
\end{align}
where \(w \in (0, 1)\) is a weight parameter.
Then, we minimize Eq.~\eqref{eq:ub_sample} with respect to \(f\) and \(h\) jointly:
\begin{equation*}
    (\widehat{f}_{w}, \widehat{h}_{w}) \coloneqq \argmin_{f \in \mathcal{F}, h\in \mathcal{H}} \widehat{J}_w(f, h).
\end{equation*}

\begin{algorithm}[tb]
   \caption{Joint Regressed Regression (Joint-RR)}
   \label{alg:onestep}
\begin{algorithmic}
   \STATE \((\widehat{f}_{w}, \widehat{h}_{w}) \coloneqq \argmin_{f \in \mathcal{F}, h\in \mathcal{H}} \widehat{J}_w(f, h)\)
   (see Eq.~\eqref{eq:ub_sample}.)
   \STATE {\bfseries Return:} \(\widehat{f}_{w}\)
\end{algorithmic}
\end{algorithm}

In the rest of this section, we will give more detailed justification of Joint-RR as upper bound minimization.

\subsubsection{Joint-RR as Upper Bound Minimization}
\label{sec:ub_min}
We start by constructing an upper bound of the population version of the MSE that can be approximated with our mediated uncoupled data, \(S_{X}\) and \(S_{Y}\).
\begin{theorem}\label{thm:ub}
  The MSE can be bounded as
  \begin{equation}
    \E[(f(X) - Y)^2] \le J_w(f, h), \label{eq:ub_any_h}
  \end{equation}
  where
  \begin{align*}
    J_w(f, h)
    &\coloneqq \frac{1}{w}\E[(f(X) - h(U))^2]\\
    &\phantom{\coloneqq\ } + \frac{1}{1-w}\E[(h(U) - Y)^2]
  \end{align*}
  for any \(w \in (0, 1)\) and any \(h \in L_U^2 \coloneqq \{h\colon\mathcal{U} \to \mathcal{Y} \given \E[h(U)^2] < \infty\}\).
\end{theorem}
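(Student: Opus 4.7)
The plan is to derive the bound by a one-line pointwise inequality followed by taking expectations. First I would write the trivial decomposition $f(X) - Y = (f(X) - h(U)) + (h(U) - Y)$, so that bounding the MSE reduces to bounding $(a+b)^2$ where $a = f(X) - h(U)$ and $b = h(U) - Y$.

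The key lemma is the weighted Young-type inequality
\begin{equation*}
(a+b)^2 \le \tfrac{1}{w} a^2 + \tfrac{1}{1-w} b^2 \quad \text{for all } a,b \in \mathbb{R},\ w \in (0,1).
\end{equation*}
I would verify this by rearranging: the difference $\tfrac{1}{w} a^2 + \tfrac{1}{1-w} b^2 - (a+b)^2$ equals $\tfrac{1-w}{w} a^2 - 2ab + \tfrac{w}{1-w} b^2$, which is the perfect square $\bigl(\sqrt{(1-w)/w}\,a - \sqrt{w/(1-w)}\,b\bigr)^2 \ge 0$. Equivalently, one can view this as Cauchy--Schwarz applied to the vectors $(\sqrt{w}, \sqrt{1-w})$ and $(a/\sqrt{w}, b/\sqrt{1-w})$.

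Applying this pointwise with $a = f(X) - h(U)$ and $b = h(U) - Y$ and taking expectations over $(X,U,Y)$ yields
\begin{equation*}
\E[(f(X) - Y)^2] \le \tfrac{1}{w} \E[(f(X) - h(U))^2] + \tfrac{1}{1-w} \E[(h(U) - Y)^2] = J_w(f, h),
\end{equation*}
which is the claim. Note that the two expectations on the right only involve the marginals $p(x,u)$ and $p(u,y)$ respectively, so they can each be estimated from one of the two datasets $S_X$ and $S_Y$; this is what makes the bound operational for the mediated uncoupled setting.

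There is essentially no obstacle here: the argument is purely algebraic and does not even invoke the conditional mean independence assumption~\eqref{eq:u_sufficient} (that assumption is needed only to guarantee the bound can be made tight, e.g., by choosing $h = h^*$ together with $f = f^*$). The only thing worth double-checking is the admissibility of $h \in L_U^2$: finiteness of $\E[h(U)^2]$ together with the assumption that $Y$ and $f(X)$ have finite second moments (implicit in the setup, since otherwise the MSE is vacuous) ensures both terms in $J_w(f,h)$ are finite, so the inequality is non-trivial.
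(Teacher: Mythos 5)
Your proposal is correct and follows essentially the same route as the paper: decompose $f(X)-Y=(f(X)-h(U))+(h(U)-Y)$, apply the weighted inequality $(a+b)^2\le \tfrac{1}{w}a^2+\tfrac{1}{1-w}b^2$ pointwise, and take expectations. The only cosmetic difference is that you verify the elementary inequality by completing the square (or Cauchy--Schwarz), whereas the paper derives it from Jensen's inequality applied to the convex combination $w\cdot\tfrac{a}{w}+(1-w)\cdot\tfrac{b}{1-w}$.
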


A proof is in Appendix~\ref{apx:proof_ub} in the supplementary material.
The functional \(J_w(f, h)\) has terms each involving either \((X, U)\) or \((U, Y)\) but not both at the same time.
This convenient property allows us to approximate it with the mediated uncoupled data, \(S_{X} = \{(X_i, U_i)\}_{i=1}^n\) and \(S_{Y} = \{(U'_i, Y'_i)\}_{i=1}^{n'}\).
Among the upper bounds of the form in Eq.~\eqref{eq:ub_any_h}, we take the tightest one with respect to \(h\):
\begin{equation}
    \E[(f(X) - Y)^2] \le \min_{h\in L_X^2} J_w(f, h). \label{eq:min_is_still_ub}
\end{equation}
The minimization of the right-hand side of Eq.~\eqref{eq:min_is_still_ub} yields the population version of our optimization problem:
\begin{equation}
    \argmin_{f \in L_X^2} \min_{h\in L_U^2} J_w(f, h). \label{eq:ubmin_pop}
\end{equation}
In practice, we solve its empirical version with some hypothesis classes \(\mathcal{F}\) and \(\mathcal{H}\) for \(f\) and \(h\), respectively, to obtain our estimator:
\begin{equation}
    \widehat{f}_{w} \coloneqq \argmin_{f \in \mathcal{F}} \min_{h\in \mathcal{H}} \widehat{J}_w(f, h),  \label{eq:ubmin_sample}
\end{equation}
where \(\widehat{J}_w(f, h)\) is defined by Eq.~\eqref{eq:ub_sample}.

The argument above is valid for any \(w \in (0, 1)\).
How to optimize \(w\) so as to minimize the test MSE
is not trivial since we do not have \((X, Y)\)-data for validation,
and we leave this as an open question.
In our experiments, we simply fixed \(w\) to the balanced value \(1/2\), which performs well in many cases
and tends to show more stable performance compared to 2Step-RR.
In Section~\ref{sec:conn_one_two}, we will show that 2Step-RR is the limit of Joint-RR with \(w\to 1\).

\subsubsection{Closed-form Solution for Linear-in-parameter Models}
When interpretation or fast prediction is required, linear-in-parameter-models would be useful.
Fortunately, our methods admit closed-form solutions for those models.
Due to the limited space, we only present the solution to Joint-RR here.
The derivation for 2Step-RR is more straightforward.

\begin{theorem}
  Let \(f_{\bm\alpha}(\bm x) \coloneqq \bm\alpha^\top \bm\phi(\bm x)\),
  \(h_{\bm\beta}(\bm u) \coloneqq \bm\beta^\top \bm\psi(\bm u)\),
  \(\bm\theta \coloneqq (\bm\alpha^\top, \bm\beta^\top)^\top\),
  and \(\lambda \in (0, \infty)\).
  Then, the \(\ell_{2}\)-regularized solution
  \begin{equation*}
    (\widehat{\bm\alpha}, \widehat{\bm\beta})
    \coloneqq \argmin_{({\bm\alpha}, {\bm\beta})\in\Re^{b_{\cF}} \times \Re^{ b_{\cH}}} \left[ \widehat{J}_w(f_{\bm\alpha}, h_{\bm\beta}) + \lambda \bm\theta^\top\bm\theta \right]
  \end{equation*}
  is given by
  \begin{align}
    \widehat{\bm\alpha}
      &\coloneqq \bm M_1^{-1}\bm M_2 \widehat{\bm\beta},\quad\text{and}\label{eq:fast_alpha}\\
    \widehat{\bm\beta}
      &\coloneqq (\bm M_3 - \bm M_2^\top\bm M_1^{-1}\bm M_2)^{-1} \bm b_1,  \label{eq:fast_beta}
  \end{align}
  where
  \begin{align*}
    \bm M_1 &\coloneqq \frac{1}{nw}\sum_{i=1}^n\bm\varphi(X_i)\bm\varphi(X_i)^\top + \lambda\bm I_{b_{\cF}},\\
    \bm M_2 &\coloneqq \frac{1}{nw}\sum_{i=1}^n\bm\varphi(X_i)\bm\psi(U_i)^\top,\\
    \bm M_3 &\coloneqq \frac{1}{nw}\sum_{i=1}^n\bm\psi(U_i)\bm\psi(U_i)^\top\\
            &+ \frac{1}{n'(1-w)}\sum_{i=1}^{n'}\bm\psi(U'_i)\bm\psi(U'_i)^\top + \lambda\bm I_{b_{\cH}},\\
    \bm b_1 &\coloneqq \frac{1}{n'(1-w)}\sum_{i=1}^{n'}Y'_i\bm\psi(U'_i).
  \end{align*}
\end{theorem}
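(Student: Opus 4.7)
The plan is to treat this as a jointly convex, strongly convex quadratic optimization in $(\bm\alpha,\bm\beta)$ and solve the first-order optimality conditions via block elimination. Substituting $f_{\bm\alpha}$ and $h_{\bm\beta}$ into $\widehat{J}_w$ makes the regularized objective a sum of quadratic forms in $\bm\alpha$ and $\bm\beta$ plus the cross term $-\frac{2}{nw}\sum_i \bm\alpha^\top\bm\varphi(X_i)\bm\psi(U_i)^\top\bm\beta$. The Hessian is block positive definite because the $\lambda>0$ regularizer added to each block guarantees strict convexity, so the unique minimizer is characterized by the stationarity equations, and all matrix inverses that will appear are well-defined.

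First I would compute $\partial/\partial\bm\alpha = \bm 0$. The only terms involving $\bm\alpha$ are the first sum in $\widehat{J}_w$ and $\lambda\bm\alpha^\top\bm\alpha$, and differentiating yields
\begin{equation*}
\Bigl(\tfrac{1}{nw}\textstyle\sum_i\bm\varphi(X_i)\bm\varphi(X_i)^\top + \lambda\bm I_{b_{\cF}}\Bigr)\bm\alpha \;=\; \tfrac{1}{nw}\textstyle\sum_i\bm\varphi(X_i)\bm\psi(U_i)^\top\bm\beta,
\end{equation*}
which is exactly $\bm M_1\bm\alpha = \bm M_2\bm\beta$. Since $\bm M_1$ is positive definite (it is a Gram matrix plus $\lambda\bm I_{b_{\cF}}$), this gives Eq.~\eqref{eq:fast_alpha}, $\widehat{\bm\alpha} = \bm M_1^{-1}\bm M_2\widehat{\bm\beta}$.

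Next I would compute $\partial/\partial\bm\beta = \bm 0$. Both sums of $\widehat{J}_w$ contribute, and collecting the $\bm\beta$-quadratic, $\bm\beta\bm\alpha$-cross, $\bm\beta$-linear, and $\lambda\bm\beta$ terms produces
\begin{equation*}
\bm M_3\bm\beta \;-\; \bm M_2^\top\bm\alpha \;=\; \bm b_1.
\end{equation*}
Substituting the expression for $\widehat{\bm\alpha}$ from the previous step eliminates $\bm\alpha$ and gives $(\bm M_3 - \bm M_2^\top\bm M_1^{-1}\bm M_2)\widehat{\bm\beta} = \bm b_1$, which is Eq.~\eqref{eq:fast_beta} once invertibility of the Schur complement is established.

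The only non-routine step is that last invertibility check: I need to argue that $\bm M_3 - \bm M_2^\top\bm M_1^{-1}\bm M_2 \succ 0$. I would do this by noting that this matrix is precisely the Schur complement of $\bm M_1$ in the full block Hessian
\begin{equation*}
\bm H \;=\; \begin{pmatrix} \bm M_1 & -\bm M_2 \\ -\bm M_2^\top & \bm M_3 \end{pmatrix},
\end{equation*}
and the regularized objective is strongly convex (Hessian $\bm H \succeq 2\lambda\bm I$ after accounting for the regularizer on $\bm\theta$), so $\bm H \succ 0$ and hence its Schur complement is also positive definite. This certifies that the stationary point computed above is the unique global minimizer, completing the proof.
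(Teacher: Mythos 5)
Your proposal is correct and follows essentially the same route as the paper: the paper writes the regularized objective as $\bm\theta^\top(\bm A+\lambda\bm I)\bm\theta - \bm b\bm\theta$ with $\bm A$ a positive semi-definite sum of outer products and then applies the block-wise matrix inversion formula to $(\bm A+\lambda\bm I)^{-1}\bm b$, which yields exactly the two block stationarity equations $\bm M_1\bm\alpha=\bm M_2\bm\beta$ and $\bm M_3\bm\beta-\bm M_2^\top\bm\alpha=\bm b_1$ that you derive by direct differentiation and eliminate via the Schur complement. Your explicit check that the Schur complement $\bm M_3-\bm M_2^\top\bm M_1^{-1}\bm M_2$ is positive definite (as the Schur complement of a positive definite block matrix) is a welcome detail that the paper leaves implicit in its appeal to the inversion formula.
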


Eqs.~\eqref{eq:fast_alpha} and \eqref{eq:fast_beta} involve matrices of size at most \(\max(b_{\cF}, b_{\cH})\)-by-\(\max(b_{\cF}, b_{\cH})\),
which requires less computational resources in terms of both space and time compared to
a naive solution involving the inversion of a \((b_{\cF} + b_{\cH})\)-by-\((b_{\cF} + b_{\cH})\) matrix.
Details and a proof are presented in Appendix~\ref{apx:linearmodels} in the supplementary material.

\section{Theoretical Analysis}\label{sec:theory}
In this section, we present several theoretical results.

\subsection{Connection between 2Step-RR and Joint-RR}
\label{sec:conn_one_two}
2Step-RR and Joint-RR have an interesting connection that helps us better understand them.
Briefly speaking, 2Step-RR can be seen as a special case of Joint-RR
in the sense that we can obtain the former by taking the limit of the latter with \(w \to 1\).
The following theorem provides a more formal statement on the connection.
\begin{theorem}\label{thm:proactive-two-step}
    Suppose that \(\cF \subseteq L^2_X\) and \(\cH \subseteq L^2_U\)
    satisfy
    \(h^*(u) \coloneqq \E[Y \given U = u] \in \cH\)
    and \(f^*(x) \coloneqq \E[h^*(U) \given X = x] \in \cF\).
    Then,
    \begin{equation}
        (f^*, h^*) \in \lim_{w \toup 1} \argmin_{(f, h) \in \cF \times \cH}
        J_w(f, h) \label{eq:proposed_minus_C}.
    \end{equation}
\end{theorem}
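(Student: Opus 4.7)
The plan is to exploit the asymmetric scaling of the two terms of $J_w$ as $w \toup 1$: the weight $\frac{1}{1-w}$ on $\E[(h(U)-Y)^2]$ blows up while $\frac{1}{w}$ on $\E[(f(X)-h(U))^2]$ stays bounded, so any minimizing sequence must first drive the second term down to its $L^2_U$-infimum, pinning $h$ down as $h^*$, and then minimize the first term with $h^*$ in place, pinning $f$ down as $f^*$. I would make this two-stage argument precise as follows.

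First, fix any sequence $w_n \uparrow 1$ and, for each $n$, any minimizer $(f_n, h_n) \in \argmin_{(f,h) \in \cF\times\cH} J_{w_n}(f,h)$. Since $(f^*, h^*) \in \cF \times \cH$ is a competitor,
\[
\tfrac{1}{1-w_n}\E[(h_n(U)-Y)^2] \le J_{w_n}(f_n, h_n) \le J_{w_n}(f^*, h^*).
\]
Multiplying through by $1 - w_n$ and letting $n\to\infty$ yields $\limsup_n \E[(h_n(U)-Y)^2] \le \E[(h^*(U)-Y)^2]$. Because $h^* = \E[Y \given U]$ is the $L^2_U$-orthogonal projection of $Y$ onto $\sigma(U)$, the Pythagorean identity $\E[(h(U)-Y)^2] = \E[(h(U)-h^*(U))^2] + \E[(h^*(U)-Y)^2]$ forces $\E[(h_n(U)-h^*(U))^2] \to 0$, i.e., $h_n \to h^*$ in $L^2_U$.

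Second, from the same inequality, $\E[(f_n(X) - h_n(U))^2] \le w_n J_{w_n}(f^*, h^*) \to \E[(f^*(X) - h^*(U))^2]$. Combining with Step 1 via the $L^2$-triangle inequality
\[
\sqrt{\E[(f_n(X) - h^*(U))^2]} \le \sqrt{\E[(f_n(X) - h_n(U))^2]} + \sqrt{\E[(h_n(U) - h^*(U))^2]}
\]
gives $\limsup_n \E[(f_n(X) - h^*(U))^2] \le \E[(f^*(X) - h^*(U))^2]$. Since $f^* = \E[h^*(U) \given X]$ is the unique $L^2_X$-projection of $h^*(U)$ onto $\sigma(X)$ and, by hypothesis, lies in $\cF$, a second Pythagorean identity produces $\E[(f_n(X) - f^*(X))^2] \to 0$. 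Thus $(f_n, h_n) \to (f^*, h^*)$ in $L^2_X \times L^2_U$, establishing \eqref{eq:proposed_minus_C}.

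The main subtlety is in parsing the set-valued limit $\lim_{w \toup 1} \argmin_{(f,h)\in\cF\times\cH} J_w(f,h)$: I read it as the assertion that every $L^2$-accumulation point of any choice of minimizers $(f_{w_n}, h_{w_n})$ with $w_n \uparrow 1$ coincides (a.s.) with $(f^*, h^*)$, which the argument above delivers. One must also implicitly assume enough regularity (non-emptiness of the $\argmin$ sets, e.g.\ closedness of $\cF, \cH$ in $L^2$ and coercivity/continuity of $J_w$) to guarantee that such minimizers exist at each $w$; the argument bypasses any need for measurable selection because it works directly with arbitrary minimizing sequences and concludes identification of the limit via the two Hilbert-space projection identities.
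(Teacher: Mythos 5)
Your overall strategy is sound and in fact more direct than the paper's: the paper subtracts the divergent constant $\frac{1}{1-w}\E[(Y-h^*(U))^2]$ to form a renormalized objective $Q(w,f,h)$, proves continuity of the partial minimum $R(w,f)=\inf_{h}Q(w,f,h)$ at $w=1$ together with a well-separatedness property of $f\mapsto\E[(f(X)-h^*(U))^2]$, and then assembles an $\varepsilon$--$\delta$ argument; you instead compare an arbitrary sequence of minimizers against the fixed competitor $(f^*,h^*)$ and identify the limit via two Hilbert-space Pythagorean identities, which yields full $L^2$-convergence of the minimizers with much less machinery. Your first stage (showing $h_n\to h^*$ in $L^2_U$) is correct.

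However, the second stage contains a false intermediate claim: $w_n J_{w_n}(f^*,h^*)=\E[(f^*(X)-h^*(U))^2]+\frac{w_n}{1-w_n}\E[(h^*(U)-Y)^2]$ diverges to $+\infty$ as $w_n\uparrow 1$ whenever $Y$ is not a deterministic function of $U$, so it does not converge to $\E[(f^*(X)-h^*(U))^2]$, and the bound $\E[(f_n(X)-h_n(U))^2]\le w_n J_{w_n}(f^*,h^*)$ is vacuous in the limit. The conclusion you want is nonetheless true and follows by the same device you used in stage one: substitute the Pythagorean identity $\E[(h(U)-Y)^2]=\E[(h(U)-h^*(U))^2]+\E[(h^*(U)-Y)^2]$ into both sides of $J_{w_n}(f_n,h_n)\le J_{w_n}(f^*,h^*)$, cancel the common term $\frac{1}{1-w_n}\E[(h^*(U)-Y)^2]$, and drop the nonnegative term $\frac{1}{1-w_n}\E[(h_n(U)-h^*(U))^2]$ to obtain $\E[(f_n(X)-h_n(U))^2]\le\E[(f^*(X)-h^*(U))^2]$ for every $n$. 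With that repair, the rest of your argument (the $L^2$ triangle inequality followed by the projection identity for $f^*=\E[h^*(U)\given X]$) goes through and establishes the theorem.
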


We present a proof in Appendix~\ref{sec:proof_two_step} in the supplementary material.
Note that \((f^*, h^*)\) is the solution pair to the optimization problem of 2Step-RR with the population-level objective functionals:
\begin{align*}
    h^* &\in \argmin_{h \in \cH} \E[(h(U) - Y)^2]\\
    \quad \text{and} \quad
    f^* &\in \argmin_{f \in \cF} \E[(f(X) - h^*(U))^2].
\end{align*}
The theorem states that \((f^*, h^*)\) is also equal to the limit of the solution pair to the Joint-RR optimization problem with the population-level objective functional in Eq.~\eqref{eq:ubmin_pop}.

\subsection{Statistical Consistency of 2Step-RR}
Let us informally confirm the statistical consistency of 2Step-RR
under the conditional mean independence (Eq.~\eqref{eq:u_sufficient}).
When the models are correctly specified, and Eq.~\eqref{eq:u_sufficient} and appropriate convergence conditions hold,
the 2Step-RR estimator \(\widetilde{f}\) converges as \(n \to \infty\) and \(n' \to \infty\) to
\begin{align*}
    f^*(x)
    &= \E[h^*(U) \given X = x]\\
    &= \E[\E[Y \given U] \given X = x]\\
    &= \E[\E[Y \given U, X] \given X = x]\quad\text{(from Eq.~\eqref{eq:u_sufficient})}\\
    &= \E[Y \given X = x].
\end{align*}
Thus, it is consistent under the condition of Eq.~\eqref{eq:u_sufficient}.
We can formally confirm this as a corollary of Theorem~\ref{thm:gen_bound_pts} given later.

\subsection{Joint-RR is a Regularized Method}
\label{sec:UB_is_regularized}
Unlike 2Step-RR, Joint-RR is not statistically consistent,
but it can be seen as a nice regularized counterpart
in the sense that its objective function is the MSE plus
the deviation of \(f(X)\) from the conditional mean \(\E[f(X) \given U]\).

Under the assumption in Eq.~\eqref{eq:u_sufficient}, we have
\begin{align*}
    &\MSE(f) \coloneqq \E[(Y - f(X))^2]\\
    &= \E[(\E[Y \given U] - f(X))^2]
      + \E[(Y - \E[Y \given U])^2],
\end{align*}
where the last term is a constant that does not depend on \(f\).
Hence,
\begin{align}
    &\min_{h\in L_U^2} J_w(f, h)\nonumber\\
    &= \E[(\E[Y \given U] - f(X))^2]\nonumber\\
    &\phantom{=\ } + \frac{1-w}{w} \E[(f(X) - \E[f(X) \given U])^2]
    +\ \text{const.}\nonumber\\
    &= \MSE(f) + \text{const.}\nonumber\\
    &\phantom{=\ } + \underbrace{\frac{1-w}{w} \E[(f(X) - \E[f(X) \given U])^2]}_{\text{The shrinkage regularizer.}}
    \label{eq:gap_analysis}
\end{align}
for any \(w \in (0, 1)\).
See Appendix~\ref{section:calc_UB_reg} in the supplementary material for a more detailed calculation.

This shows that Joint-RR with \(w < 1\) minimizes a biased objective functional.
However, it is often favorable to trade off some bias for smaller variance in practice.
We can also see that the amplitude of the shrinkage term can be controlled by \(w\)
and it vanishes at the limit of \(w \to 1\).
In our experiments, we simply fixed \(w\) to the balanced value \(1/2\), which performs well in many cases.

\subsection{Excess Error Bound}
2Step-RR solves a simple least-squares problem twice.
This allows us to derive a non-asymptotic bound of the excess error
\(\MSE(\widetilde{f}) - \MSE(f^{\dagger})\), where \(f^{\dagger}(x) \coloneqq \E[Y \given X = x]\),
in terms of the Rademacher complexities of function classes.
\begin{theorem}\label{thm:gen_bound_pts}
  Let \(C_{\cF} \coloneqq \sup_{{f\in\cF, x\in\cX}}f(x) < \infty\),
  \(C_{\cH} \coloneqq \sup_{{h\in\cH, u\in\cU}}f(u) < \infty\),
  and \(C_{\cY} \coloneqq \sup \cY < \infty\).
  Let \(\mathfrak{R}_{n}(\cF)\) denote the Rademacher complexity of \(\cF\)
  over \(\{(X_{i}, U_{i})\}_{i=1}^{n}\)
  and \(\mathfrak{R}_{n'}(\cH)\) denote that of \(\cH\)
  over \(\{(U'_{i}, Y'_{i})\}_{i=1}^{n'}\) (see the exact definitions in Appendix~\ref{sec:rademacher} in the supplementary material).
  Suppose that \(h^{*} \in \cH\), \(f^{*} \in \cF\), and \(f^{\dagger}\in\cF\).
  Then, the excess error can be bounded with probability at least \(1 - \delta\) as
  \begin{align*}
    &\E[(\widetilde{f}(X) - f^{\dagger}(X))^{2}] = \MSE(\widetilde{f}) - \MSE(f^{\dagger})\\
    &\le 8 (C_{\cF} + C_{\cH}) (\mathfrak{R}_{n}(\cF) + \mathfrak{R}_{n}(\cH))\\
    &\phantom{\le\ } + 8 (C_{\mathcal{H}} + C_{\mathcal{Y}}) \mathfrak{R}_{n'}(\mathcal{H})\\
    &\phantom{\le\ } + 4 (C_{\cF} + C_{\cH})^2\sqrt{\frac{2}{n} \log\frac{1}{\delta}}\\
    &\phantom{\le\ } + 2 (C_{\cH} + C_{\cY})^2\sqrt{\frac{2}{n'}\log\frac{1}{\delta}}\\
    &\le \mathcal{O}_{p}\bigg(
      \mathfrak{R}_{n}(\cF)
      + \mathfrak{R}_{n}(\cH)
      + \mathfrak{R}_{n'}(\cH)
      + \frac{1}{\sqrt{n}} + \frac{1}{\sqrt{n'}}\bigg).
  \end{align*}
\end{theorem}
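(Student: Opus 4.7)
The plan is to reduce the excess MSE to an excess of the auxiliary risk $\widetilde{R}_{1}(f, h) := \E[(f(X) - h(U))^{2}]$ via $L^{2}_{X}$-orthogonality, and then control it via standard ERM analyses for the two steps of 2Step-RR. First, I would use the conditional mean independence (Eq.~\eqref{eq:u_sufficient}) and $h^{*} \in \cH$ to identify $f^{\dagger}$ with $f^{*}(x) := \E[h^{*}(U) \mid X = x]$, exactly as in the informal argument of Section~\ref{sec:naive}. Since $f^{*}$ is the $L^{2}_{X}$-projection of $h^{*}(U)$, orthogonality gives
\begin{equation*}
\E[(\widetilde{f}(X) - f^{\dagger}(X))^{2}] = \widetilde{R}_{1}(\widetilde{f}, h^{*}) - \widetilde{R}_{1}(f^{*}, h^{*}),
\end{equation*}
turning the target into an excess quantity tied directly to what the algorithm optimizes.

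Next, I would insert $\pm \widetilde{R}_{1}(\widetilde{f}, \widetilde{h})$ and $\pm \widetilde{R}_{1}(f^{*}, \widetilde{h})$ to split the excess into three parts. Expanding squares collapses the outer two brackets to $2 \E[(\widetilde{f}(X) - f^{*}(X))(\widetilde{h}(U) - h^{*}(U))]$, which Cauchy--Schwarz bounds by $2 \sqrt{A B}$, where $A := \E[(\widetilde{f}(X) - f^{*}(X))^{2}]$ and $B := \E[(\widetilde{h}(U) - h^{*}(U))^{2}]$. The middle bracket $\widetilde{R}_{1}(\widetilde{f}, \widetilde{h}) - \widetilde{R}_{1}(f^{*}, \widetilde{h})$ is a standard ERM quantity: since $\widetilde{f}$ minimizes the empirical $\widehat{R}_{1}(\cdot, \widetilde{h}) := n^{-1}\sum_{i}(f(X_{i}) - \widetilde{h}(U_{i}))^{2}$ and $f^{*} \in \cF$, it is bounded by $2 \sup_{f \in \cF, h \in \cH} |\widehat{R}_{1}(f, h) - \widetilde{R}_{1}(f, h)|$, where the supremum is over the product class because $\widetilde{h}$ is itself random (trained on $S_{Y}$, independent of $S_{X}$). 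For $B$, the identity $h^{*}(u) = \E[Y \mid U = u]$ and orthogonality give $B = R_{2}(\widetilde{h}) - R_{2}(h^{*})$ for $R_{2}(h) := \E[(h(U) - Y)^{2}]$; since $h^{*} \in \cH$, standard ERM bounds this by $2 \sup_{h \in \cH} |\widehat{R}_{2}(h) - R_{2}(h)|$. Combining the pieces yields the self-bounding inequality
\begin{equation*}
A \le 2 \sqrt{A B} + 2 \sup_{f, h} |\widehat{R}_{1}(f, h) - \widetilde{R}_{1}(f, h)|,
\end{equation*}
which I would solve as a quadratic in $\sqrt{A}$ to obtain $A$ as a linear combination of $B$ and the step-two deviation with absolute-constant coefficients.

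Finally, I would apply the standard symmetrization/Talagrand/McDiarmid pipeline to the two uniform deviations. For the step-two term, $(f(x) - h(u))^{2}$ is $2(C_{\cF} + C_{\cH})$-Lipschitz on its bounded range, so Talagrand's contraction inequality combined with the subadditivity estimate $\mathfrak{R}_{n}(\{(x,u) \mapsto f(x) - h(u)\}) \le \mathfrak{R}_{n}(\cF) + \mathfrak{R}_{n}(\cH)$ controls the loss-class complexity, and McDiarmid (with each summand bounded by $(C_{\cF}+C_{\cH})^{2}$) delivers the first Rademacher and first concentration terms of the claimed bound. An analogous argument with Lipschitz constant $2(C_{\cH} + C_{\cY})$ and range $(C_{\cH} + C_{\cY})^{2}$ handles the step-one deviation and hence $B$, yielding the remaining two terms. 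The main obstacle will be the self-bounding inequality: unwinding $2\sqrt{AB}$ into a clean linear bound on $A$ with workable constants. A secondary subtlety is that the step-two deviation must be taken uniformly over $\cF \times \cH$ (rather than over $\cF$ alone), precisely because $\widetilde{h}$ is data-dependent through the independent sample $S_{Y}$.
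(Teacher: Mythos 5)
Your proposal is correct and arrives at the same final structure as the paper's proof---a bound of the form \(A \le c_1 B + c_2 D\) with \(A = \E[(\widetilde f(X)-f^{*}(X))^{2}]\), \(B = \E[(\widetilde h(U)-h^{*}(U))^{2}]\), and \(D\) the uniform deviation of the empirical \((f,h)\)-risk, each piece then handled by symmetrization, Talagrand's contraction, and McDiarmid exactly as in the paper---but the route through the middle is genuinely different. The paper pivots on \(\bar g(x) \coloneqq \E[\widetilde h(U)\given X=x]\), splits \(A \le 2\E[(\widetilde f(X)-\bar g(X))^{2}] + 2\E[(\bar g(X) - f^{*}(X))^{2}]\), and treats the two terms separately (the second via \(\E[(\bar g(X)-f^{*}(X))^{2}]\le 4B\)); you instead pivot on \(f^{*}\) itself, writing \(A\) as an excess of \(\E[(f(X)-h^{*}(U))^{2}]\) and absorbing the \(\widetilde h\)-versus-\(h^{*}\) mismatch through the cross term \(2\E[(\widetilde f(X) - f^{*}(X))(\widetilde h(U) - h^{*}(U))] \le 2\sqrt{AB}\) followed by a self-bounding quadratic in \(\sqrt A\), which indeed resolves cleanly (e.g.\ \(2\sqrt{AB}\le A/2 + 2B\) gives \(A \le 4B+4D\)). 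Your version has a concrete advantage: the ERM-optimality step only ever compares \(\widetilde f\) against \(f^{*}\in\cF\), whereas the paper's ``\(\le 0\) from \(\widetilde f\)'s optimality'' step compares the empirical risk of \(\widetilde f\) against that of \(\bar g\), which tacitly requires \(\E[\widetilde h(U)\given X=\cdot]\in\cF\), an assumption not listed in the theorem. The price is only in bookkeeping: your \(4B\) yields roughly \(16(C_{\cH}+C_{\cY})\mathfrak{R}_{n'}(\cH)\) rather than the stated \(8(C_{\cH}+C_{\cY})\mathfrak{R}_{n'}(\cH)\) (the paper's own accounting of that term is also loose), and you need a union bound over the two independent samples so the confidence level is really \(1-2\delta\); neither affects the \(\mathcal{O}_{p}\) conclusion.
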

For instance,
when \(\cF\) and \(\cH\) are bounded linear-in-parameter models,
\(\mathfrak{R}_{n}(\cF) = \mathcal{O}(1/\sqrt{n})\),
\(\mathfrak{R}_{n}(\cH) = \mathcal{O}(1/\sqrt{n})\),
and \(\mathfrak{R}_{n'}(\cH) = \mathcal{O}(1/\sqrt{n'})\)~\citep{mohri2012foundations}.
Using Theorem~\ref{thm:gen_bound_pts}, we can bound the excess error
by \(\mathcal{O}_{p}(1/\sqrt{n} + 1/\sqrt{n'})\).
A proof is in Appendix~\ref{sec:gen_bound_pts} in the supplementary material.

\subsection{Discussion on the Assumption}
\label{sec:assump}
So far, we have focused on the ideal case in which the conditional mean independence (Eq.~\eqref{eq:u_sufficient}) holds.
However, it may be difficult to exactly ensure the condition in practice.

Here, we relax Eq.~\eqref{eq:u_sufficient} by allowing the gap between the left-hand and right-hand sides to be potentially larger than zero
but bounded by \(c^{2}\) for some constant \(c \in (0, \infty)\).
We will show that (i) even the best possible method suffers an MSE of at least \(c^{2} / 2\)
in the worse-case within this scenario
while (ii) 2Step-RR suffers an MSE of at most \(c^{2} + o(1)\).

To see the claim (ii), notice that we have already shown that \(\widetilde{f}\) converges to \(\E[\E[Y \given U] \given X = (\cdot)]\).
This implies that 2Step-RR suffers an MSE of
\begin{align*}
  &\E[(\E[\E[Y \given U] \given X] - \E[Y \given X])^{2}] + o(1)\\
  &= \E[(\E[\E[Y \given U] - \E[Y \given U, X] \given X])^{2}] + o(1)\\
  &\le \E[(\E[Y \given U] - \E[Y \given U, X])^{2}] + o(1)\\
  &\le c^{2} + o(1)
\end{align*}
under the relaxed assumption.

The following proposition is a formal statement of the claim (i). A proof is in Appendix~\ref{apx:assump} in the supplementary material.
\begin{proposition}
  \label{prop:mmlb_det_informal}
  For any estimator \(\widehat{f}_{(\cdot)}\) that takes mediated uncoupled data \(S_{X}\) and \(S_{Y}\) as input
  and produces a function from \(\cX\) to \(\cY\), we have
  \begin{equation*}
    \sup_{p^* \in \mathcal{P}_{c}}
    \E\lr[]{\lr(){\widehat{f}_{S_{X}, S_{Y}}(X) - \E[Y \given X]}^2}
    \ge \frac{1}{2}c^2,
  \end{equation*}
  where \((X, Y) \sim p^*(x, y)\),
  and \(\mathcal{P}_{c}\) is the class of p.d.f.-s \(\widetilde{p}(x, u, y)\) for which \(\E[(\E[\widetilde{Y} \given \widetilde{X}] - \E[\widetilde{Y} \given \widetilde{X}, \widetilde{U}])^{2}] \le c^{2}\),
  \(\widetilde{p}(x) = \widetilde{p}(-x)\),
  and \(\E[\widetilde{U}] = 0\) with \((\widetilde{X}, \widetilde{U}, \widetilde{Y}) \sim \widetilde{p}(x, u, y)\).
\end{proposition}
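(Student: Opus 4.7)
The plan is to establish this minimax lower bound via Le Cam's two-point method. The strategy is to exhibit two distributions $p_{1}, p_{2} \in \mathcal{P}_{c}$ that (i) induce exactly the same marginal laws on the mediated uncoupled data $(S_{X}, S_{Y})$, so that the estimator has no statistical basis for distinguishing them, and (ii) have target regression functions $\E[Y\given X]$ that are $L^{2}_{X}$-separated by a margin of order $c$. Once such a pair is in hand, the remaining argument is standard: for any (possibly randomized) estimator $\widehat{f}$, the random function $\widehat{f}_{S_{X}, S_{Y}}$ has the same joint law under $p_{1}$ and $p_{2}$, so integrating the pointwise inequality $(a - r_{1})^{2} + (a - r_{2})^{2} \ge \tfrac{1}{2}(r_{1} - r_{2})^{2}$ over $X$ and $(S_{X}, S_{Y})$ yields $L_{1}(\widehat{f}) + L_{2}(\widehat{f}) \ge \tfrac{1}{2}\E[(r_{1}(X) - r_{2}(X))^{2}]$, where $L_{i}$ denotes the expected squared $L^{2}_{X}$-error under $p_{i}$. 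Taking the maximum then gives $\sup_{p \in \mathcal{P}_{c}} L_{p}(\widehat{f}) \ge \tfrac{1}{4}\E[(r_{1}(X) - r_{2}(X))^{2}]$, so the whole task reduces to forcing $\E[(r_{1}(X) - r_{2}(X))^{2}] \ge 2c^{2}$.

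For the construction, I would take $X$ and $U$ to be independent Rademacher variables. Under $p_{1}$, set $Y = c \cdot \mathrm{sign}(X)$ deterministically; under $p_{2}$, set $Y = -c \cdot \mathrm{sign}(X)$ deterministically. In both cases, $U$ is independent of $(X, Y)$. A direct check confirms the three defining conditions of $\mathcal{P}_{c}$: the marginal $\widetilde{p}(x)$ is uniform on $\{-1, +1\}$ and hence symmetric, $\E[\widetilde{U}] = 0$, and the assumption-gap $\E[(\E[\widetilde{Y}\given\widetilde{X}] - \E[\widetilde{Y}\given\widetilde{X},\widetilde{U}])^{2}]$ equals zero because $\widetilde{Y}$ is a deterministic function of $\widetilde{X}$, so $\E[\widetilde{Y}\given\widetilde{X},\widetilde{U}] = \widetilde{Y} = \E[\widetilde{Y}\given\widetilde{X}]$. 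The data-generating marginals also coincide: $p_{1}(x, u) = p_{2}(x, u) = 1/4$ on $\{-1, +1\}^{2}$ since $X$ and $U$ are independent uniform under both, and since $Y$ has uniform marginal on $\{-c, +c\}$ under both and is independent of $U$, the law of $(U, Y)$ is uniform on $\{-1, +1\} \times \{-c, +c\}$ under both. Hence $S_{X}$ and $S_{Y}$ are identically distributed under $p_{1}$ and $p_{2}$.

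With regression functions $r_{1}(x) = c \cdot \mathrm{sign}(x)$ and $r_{2}(x) = -c \cdot \mathrm{sign}(x)$, one has $\E[(r_{1}(X) - r_{2}(X))^{2}] = \E[(2c\,\mathrm{sign}(X))^{2}] = 4c^{2}$, so the Le Cam inequality produces a lower bound of $c^{2} \ge \tfrac{1}{2}c^{2}$. The main conceptual obstacle is really the design step: picking $p_{1}$ and $p_{2}$ so that the two marginals of the uncoupled data are simultaneously matched while the targets differ by $2c$ at every $x$; taking $Y$ deterministic in $X$ with $U$ independent of everything resolves both requirements cleanly. A minor technical concern is the meaning of ``p.d.f.'' in the definition of $\mathcal{P}_{c}$: if one insists on absolutely continuous densities, the construction can be smoothed by convolving with a small symmetric Gaussian in $(x, u, y)$, which perturbs every quantity above by an arbitrarily small amount and does not affect the conclusion.
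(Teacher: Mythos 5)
Your proof is correct and uses essentially the same construction as the paper's: two distributions with $Y=\theta X$ for $\theta=\pm c/\sigma$ and $U$ independent of $(X,Y)$, so that the induced laws of $S_X$ and $S_Y$ coincide while the regression functions are $2c$-separated in $L^2$; you have merely specialized $X$ and $U$ to Rademacher variables (so $\sigma=1$). Two remarks are worth making. First, your final step is sharper than the paper's: because the two hypotheses are \emph{perfectly} confusable (total variation zero), the elementary inequality $(a-r_1)^2+(a-r_2)^2\ge\tfrac12(r_1-r_2)^2$ yields $\max(L_1,L_2)\ge\tfrac14\E[(r_1(X)-r_2(X))^2]=c^2$, with no loss from the generic Le Cam bound. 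The paper explicitly leaves the removal of the $1/2$ factor as an open question; your argument settles it for this construction and, combined with the paper's $c^2+o(1)$ upper bound for 2Step-RR, pins the worst-case error at exactly $c^2$. Second, on the membership check: you verify the gap condition $\E[(\E[Y\given X]-\E[Y\given X,U])^2]\le c^2$ as literally written in the proposition (your gap is $0$), whereas the appendix---and the surrounding discussion, which relaxes Eq.~\eqref{eq:u_sufficient}---measures the gap as $\E[(\E[Y\given U]-\E[Y\given U,X])^2]$; under that reading your construction gives exactly $\E[(cX)^2]=c^2$, so it still lies in $\mathcal{P}_c$ and the proof is unaffected. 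The only place this matters is your smoothing remark: if the gap sits exactly at $c^2$, an arbitrarily small perturbation could push it above the threshold, so you should scale $\theta$ to $\pm(1-\epsilon)c$ before smoothing and let $\epsilon\to 0$ at the end; with that (and mean-zero noise added to $Y$ so that a genuine joint density exists), the technicality is fully handled.
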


Our bound relies on Le Cam's method, which yields the \(1/2\) factor
(see Appendix~\ref{apx:assump} in the supplementary material for details).
Whether one can eliminate the factor remains as future work.

\section{Experiments}
In this section, we present experimental results.

\subsection{Experiments with Synthetic Data}
First, we present experiments with synthetic data.
Because neural networks are becoming the gold standard in many tasks,
we test the methods using neural networks as follows.
\begin{itemize}
  \item The naive method using multi-layer perceptrons
        with four layers, \(20\) hidden units in each layer, and ReLU activations.
        We refer to this method as ``Naive''.
  \item 2Step-RR using multi-layer perceptrons
        with four layers, \(20\) hidden units in each layer, and ReLU activations.
  \item Joint-RR using multi-layer perceptrons
        with four layers, \(20\) hidden units in each layer, and ReLU activations.
        We set \(w = 1/2\).
\end{itemize}
We train all models with Adam~\citep{kingma_adam_2017} for \(200\) epochs.
We implemented the methods using PyTorch~\citep{pytorch_NEURIPS2019}\footnote{The code will be available on \url{https://github.com/i-yamane/mediated_uncoupled_learning}.}
We use the default values of the implementation provided by PyTorch~\citep{pytorch_NEURIPS2019} for all the parameters of Adam: the learning rate is \(0.001\), and \(\beta\) is \((0.9, 0.999)\).
We prepare mediated uncoupled data (defined in Section~\ref{sec:problem_setting}) for training
but ordinary coupled \((X,Y)\)-data for test evaluation.
The task here is regression, and we use the MSE as the evaluation metric.

For our synthetic data, we can surely confirm whether the conditional mean independence of Eq.~\eqref{eq:u_sufficient} holds or not.
We will test the methods with varying dimensionality in both cases in which Eq.~\eqref{eq:u_sufficient} is satisfied and violated.
For the setting satisfying the condition,
we define the data distribution as follows.
\(X\) is distributed uniformly over \([-1, 1]^d\).
\(U_{j} \coloneqq X_{j}^3 + \varepsilon_u\),
where \(U_{j}\) is the \(j\)-th element of \(U\), \(X_{j}\) is the \(j\)-th element of \(X\),
and \(\varepsilon_u\) is a uniform noise over \([-0.5, 0.5]^d\).
\(Y \coloneqq \norm{U}^2 + \varepsilon_y\), where \(\varepsilon_y\) is a Gaussian noise with mean zero and variance \(0.1\).
This satisfies the condition because \(Y\) and \(X\) are independent after conditioning on \(U\).
For the setting violating the condition,
\(X\) and \(U\) are the same as in the case with the condition satisfied,
but \(Y\) depends on \(X\) rather than \(U\):
\(Y \coloneqq \norm{X}^2 + \varepsilon_y\), where \(\varepsilon_y\) is a Gaussian noise with mean zero and variance \(0.1\).
This violates the condition because \(U\) lacks some information that \(X\) has in predicting \(Y\) due to the noise \(\varepsilon_u\).
In both cases, we use \(1{,}000 \times 2\) mediated uncoupled data for training and \(10{,}000\) coupled \((X, Y)\)-data for test evaluation.

\begin{figure}[tp]
\centering
\subcaptionbox{The setting satisfying the conditional mean independence (Eq.~\eqref{eq:u_sufficient}). \label{fig:toy_sat}}{\includegraphics[width=\columnwidth*22/48]{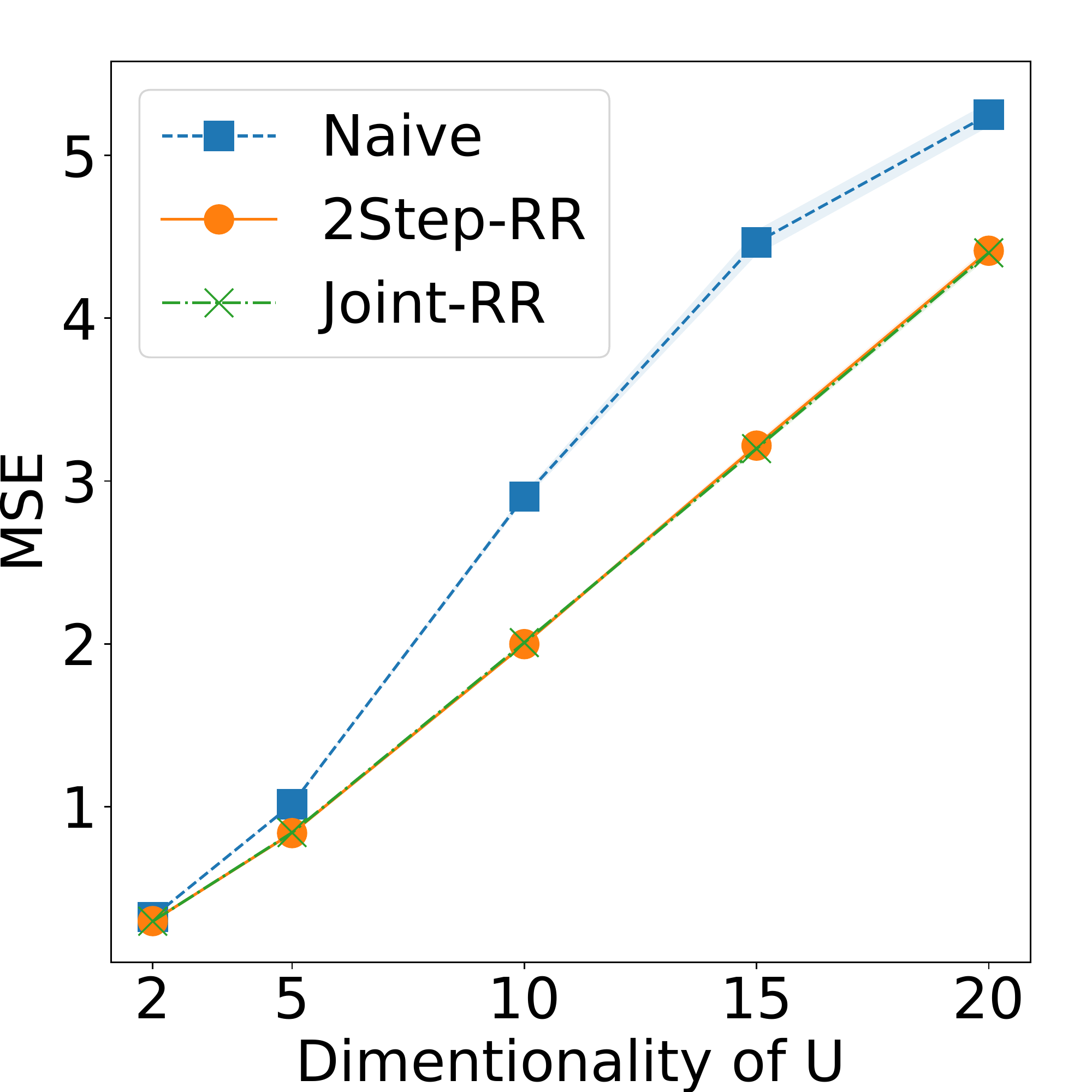}}
\hspace*{2pt}
\subcaptionbox{The setting violating the conditional mean independence (Eq.~\eqref{eq:u_sufficient}). \label{fig:toy_vio}}{\includegraphics[width=\columnwidth*22/48]{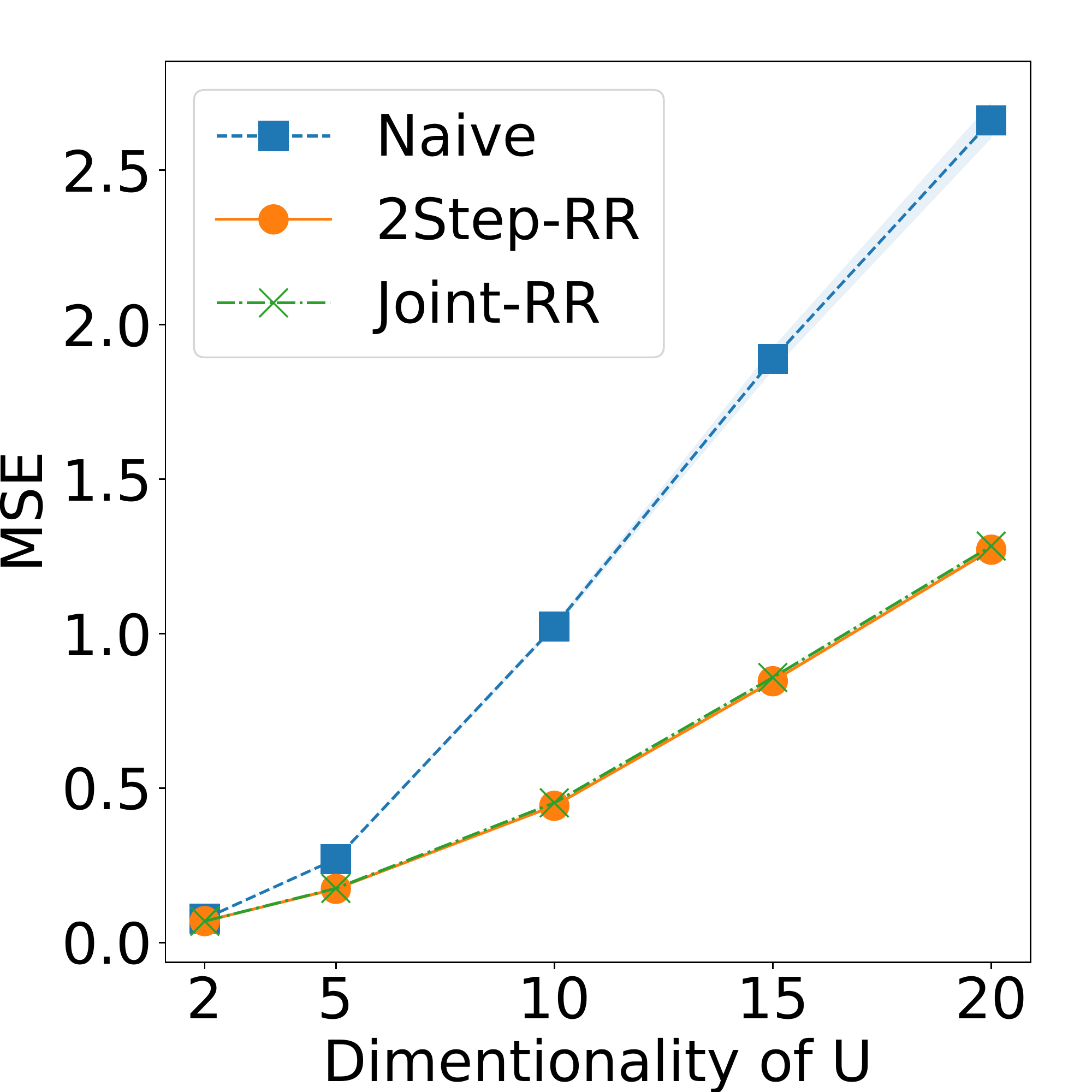}}
\caption{Results for the synthetic data experiments. The plots show the average MSEs and the shaded areas show the standard errors. Note that the standard errors are so small that the shaded area are almost unnoticable.}
\label{fig:toy_dim_vs_mse}
\end{figure}

Results are summarized in Figure~\ref{fig:toy_dim_vs_mse};
Figure~\ref{fig:toy_sat} and \ref{fig:toy_vio} are for the settings satisfying and violating Eq.~\eqref{eq:u_sufficient}, respectively.
The plots show that the proposed methods outperform the naive method.
2Step-RR and Joint-RR gave similar performances and their plots are almost indistinguishable in the figures.

Figure~\ref{fig:synthetic_satisfied} shows more detailed results for each configuration of data dimensionality,
showing that the proposed methods gave consistently lower MSEs than the naive method.
Notably, Joint-RR tends to be more stable than 2Step-RR in the sense that the deviation of the MSE is smaller.

\begin{figure}[tp]
\centering
\subcaptionbox{Dim. 2 \label{fig:syndim2}}{\includegraphics[width=\columnwidth*23/48]{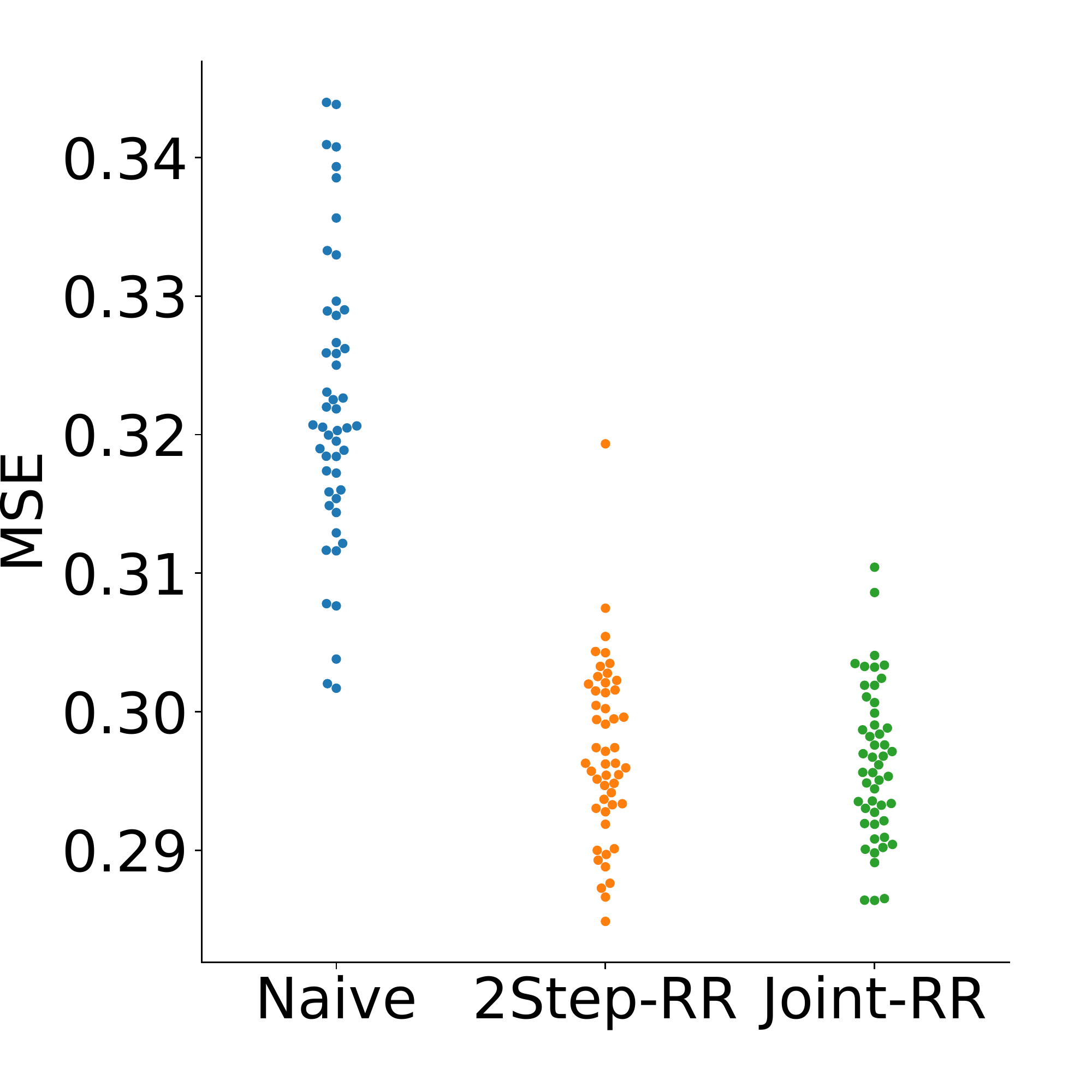}}
\subcaptionbox{Dim. 5 \label{fig:syndim2}}{\includegraphics[width=\columnwidth*23/48]{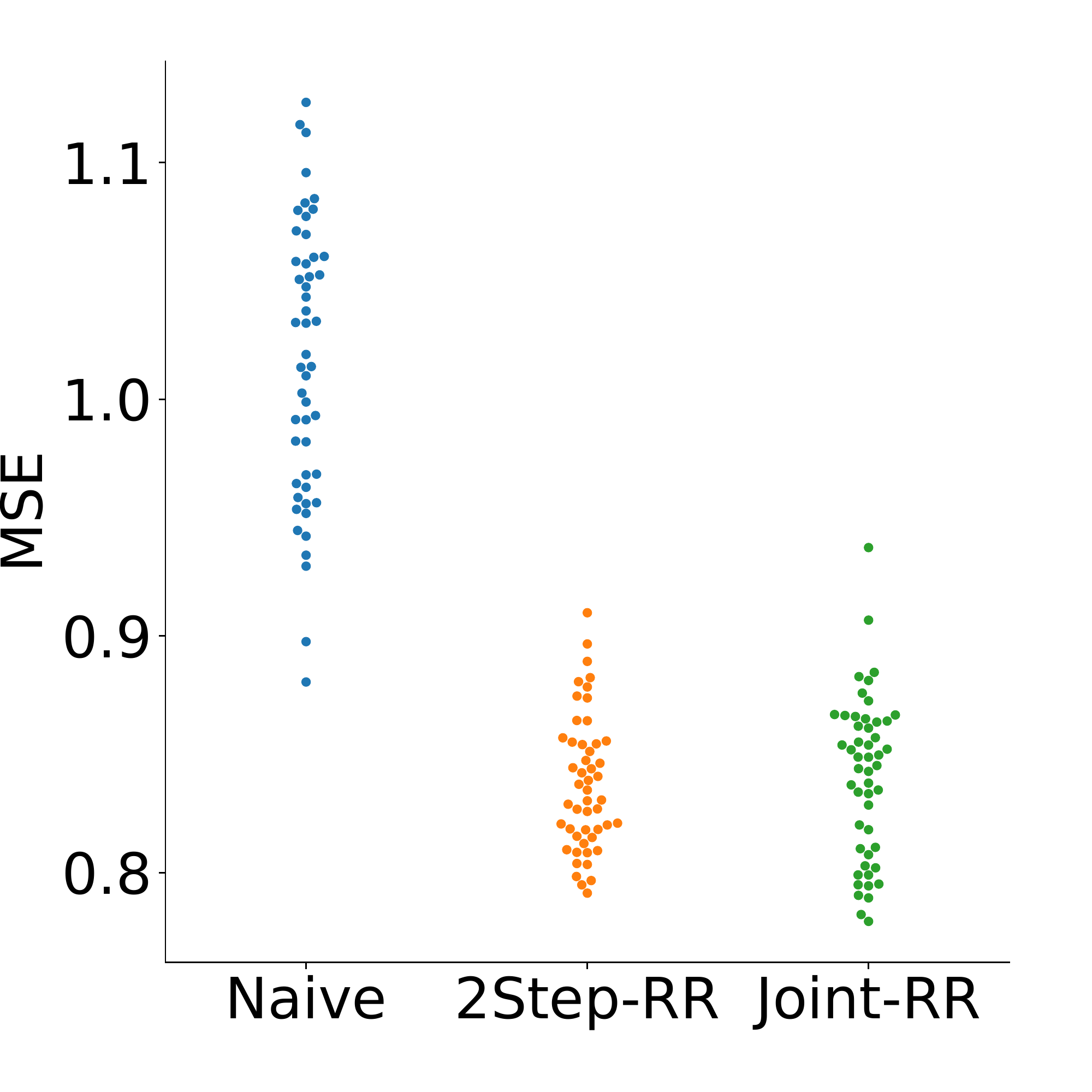}}
\subcaptionbox{Dim. 10 \label{fig:syndim2}}{\includegraphics[width=\columnwidth*23/48]{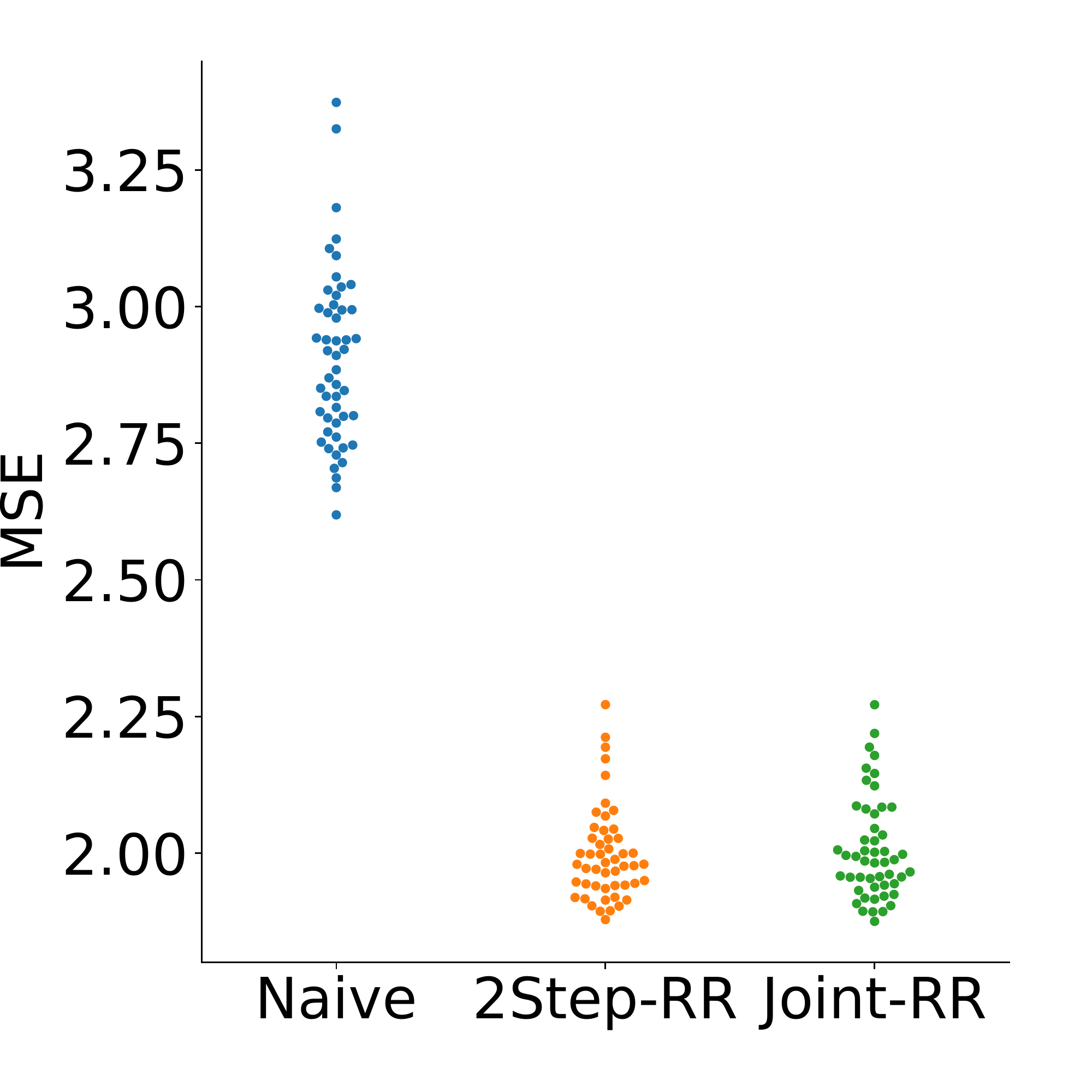}}
\subcaptionbox{Dim. 20 \label{fig:syndim2}}{\includegraphics[width=\columnwidth*23/48]{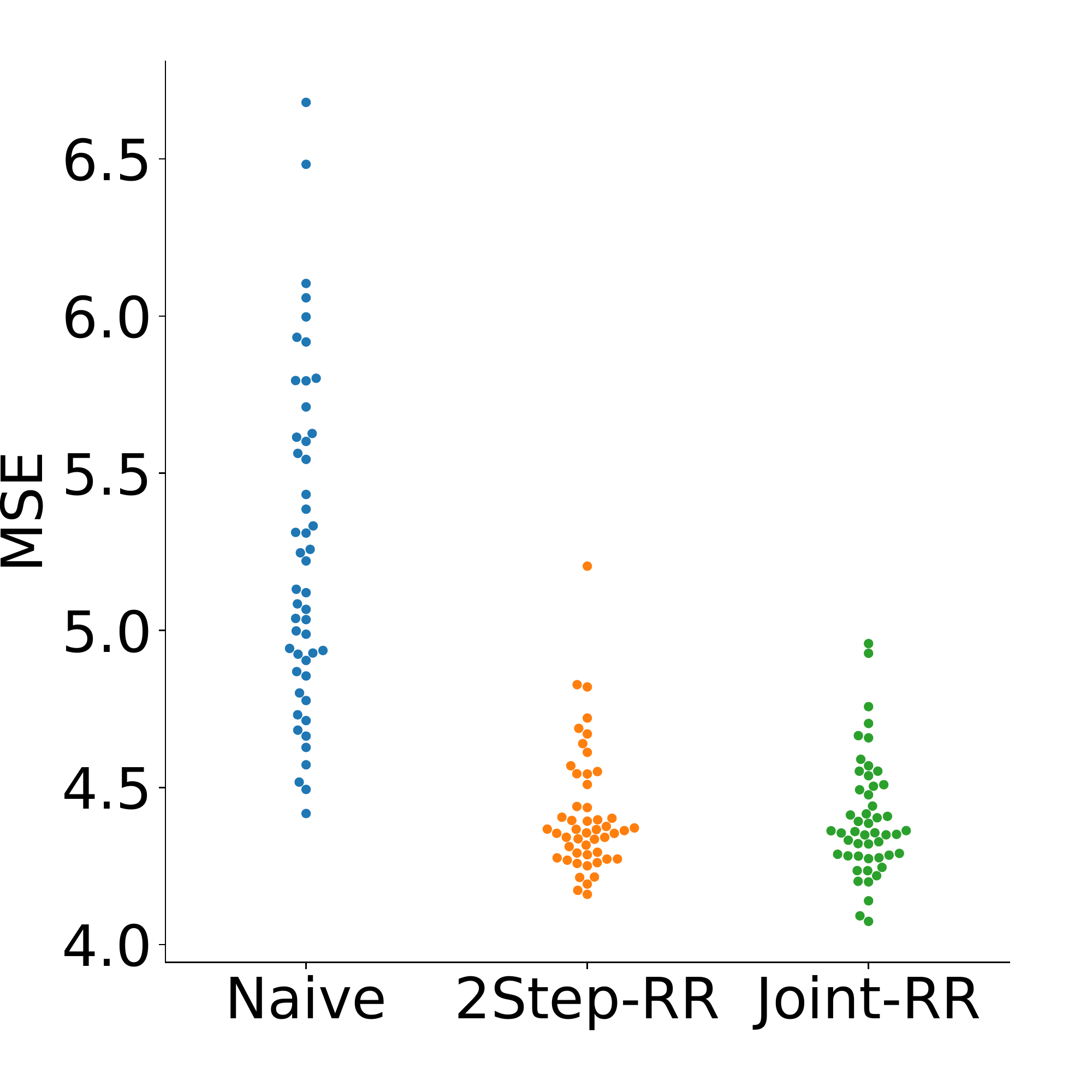}}
\caption{Experiments on synthetic data under the setting satisfying the conditional mean independence (Eq.~\eqref{eq:u_sufficient}).}
\label{fig:synthetic_satisfied}
\end{figure}

\begin{figure}[tp]
\centering
\subcaptionbox{Dim. 2 \label{fig:syndim2}}{\includegraphics[width=\columnwidth*23/48]{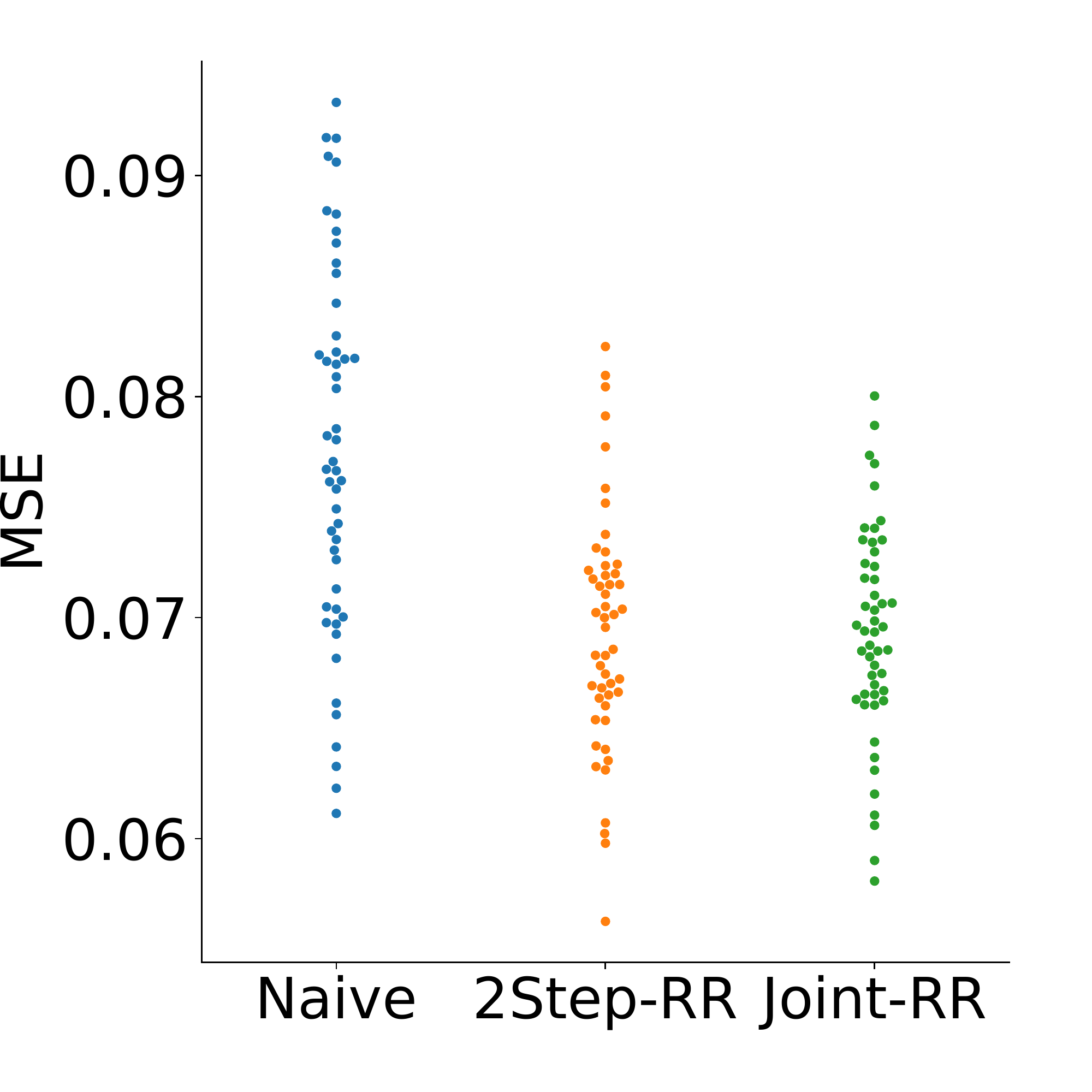}}
\subcaptionbox{Dim. 5 \label{fig:syndim2}}{\includegraphics[width=\columnwidth*23/48]{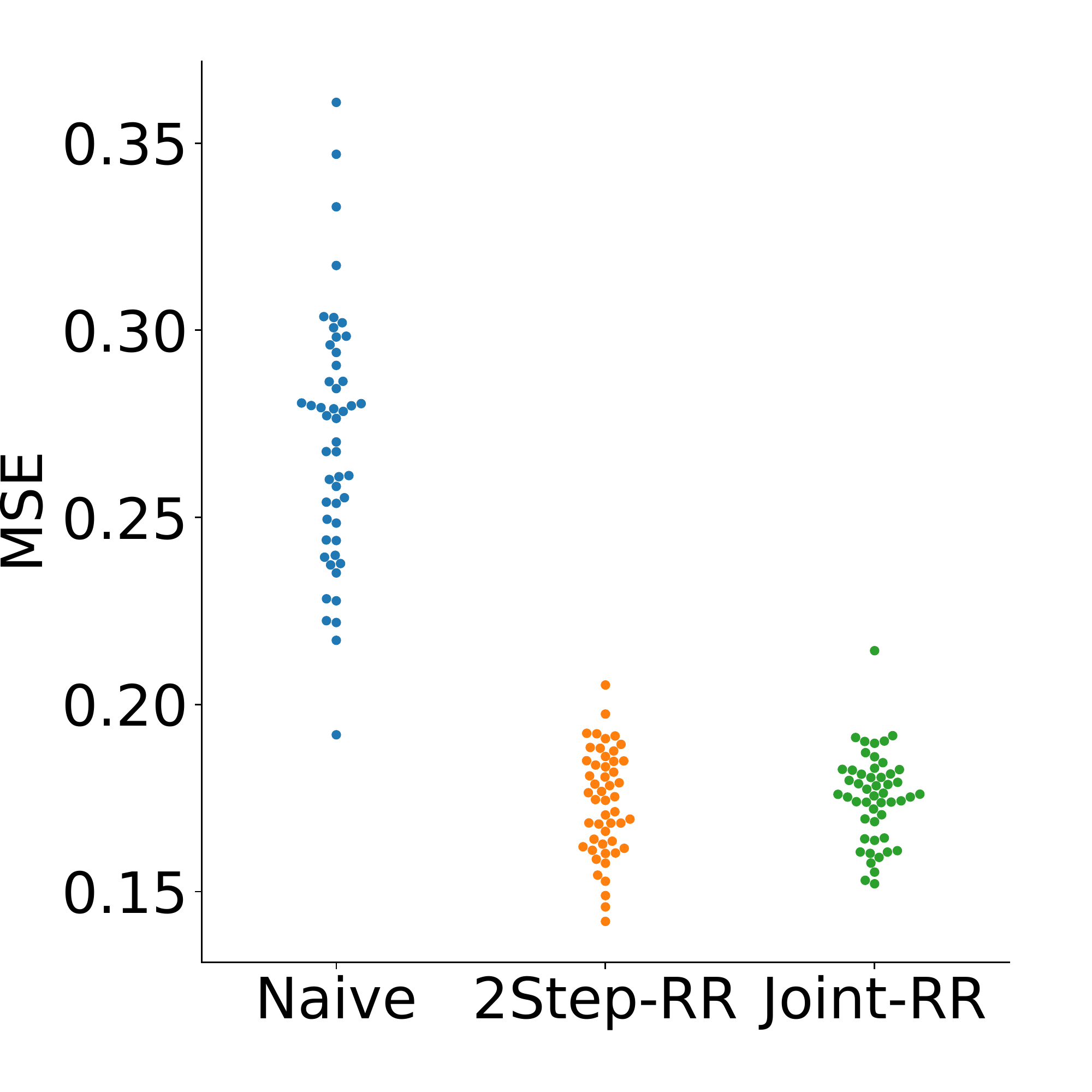}}
\subcaptionbox{Dim. 10 \label{fig:syndim2}}{\includegraphics[width=\columnwidth*23/48]{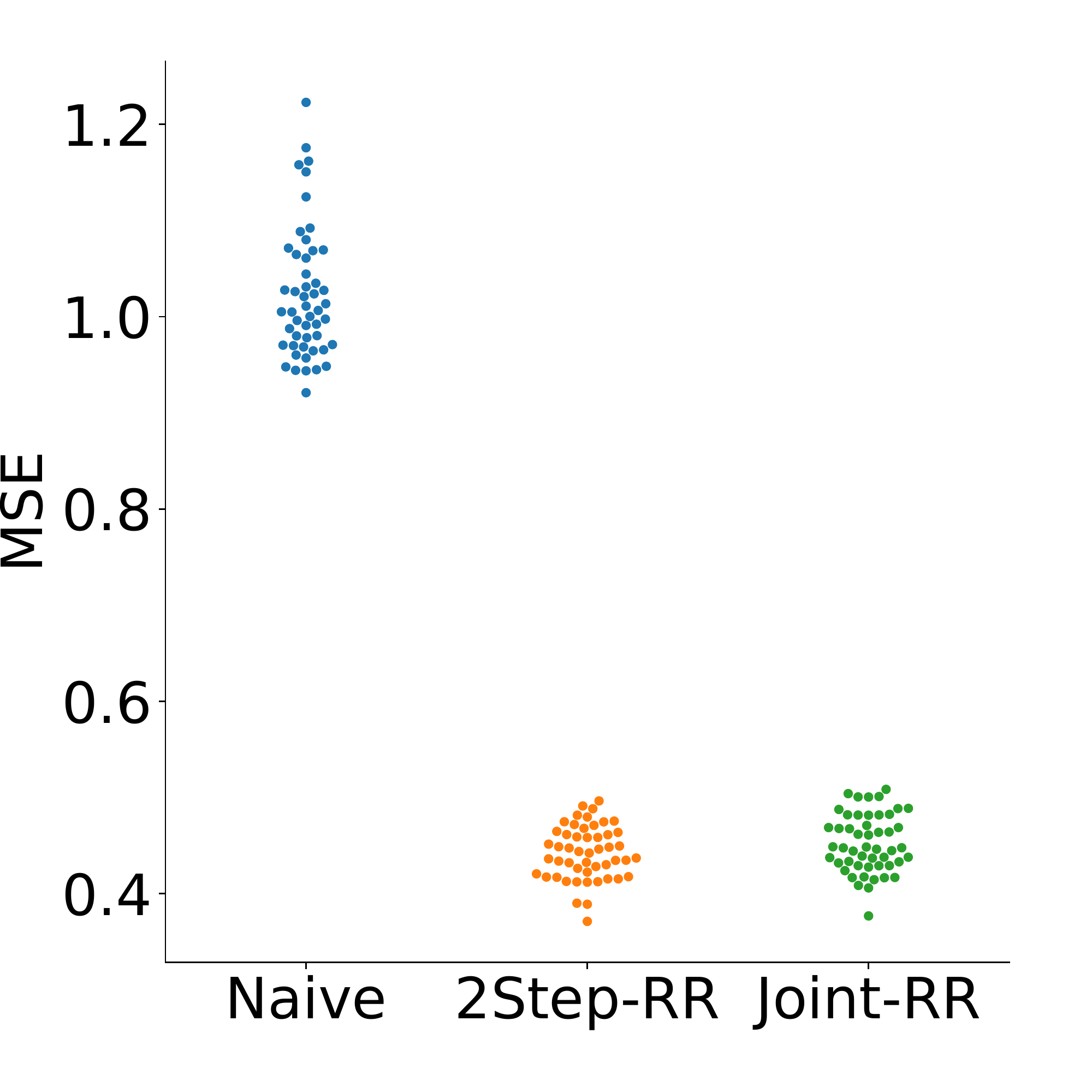}}
\subcaptionbox{Dim. 20 \label{fig:syndim2}}{\includegraphics[width=\columnwidth*23/48]{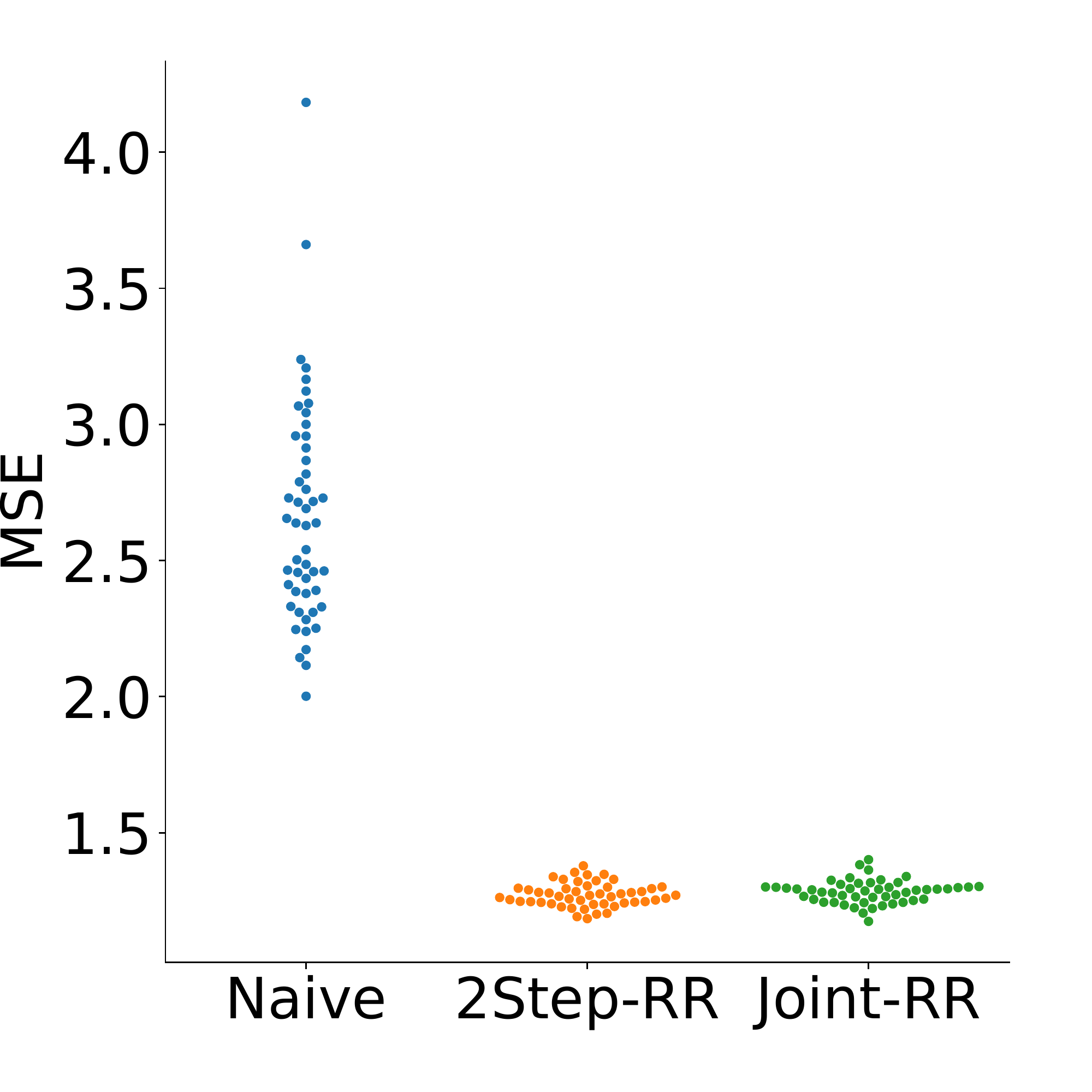}}
\caption{Experiments on synthetic data under the setting violating the conditional mean independence (Eq.~\eqref{eq:u_sufficient}).}
\label{fig:synthetic_violated}
\end{figure}

\subsection{Classification of Low-quality Images}
In this section, we test our methods in a more realistic scenario using image benchmark datasets, MNIST~\citep{mnist}, Fashion-MNIST~\citep{xiao2017fashionmnist}, CIFAR-10, and CIFAR-100~\citep{krizhevsky2009learning}.
The task here is to train a model that classifies \emph{low-quality} images using mediated uncoupled data generated from those benchmark data, where the mediating variable is high-quality images.

The motivation behind the setup is that in some applications such as on-board electronics and the Internet of Things~\citep{madakam_internet_2015},
the prediction is often done by uploading low quality images to a remote server to fit limited network bandwidths.
However, low-quality images can be difficult for human labelers to accurately label in the phase of training data collection.
Moreover, the quality of images may depend on the device, the network, and the required response time.
Instead of directly labeling low-quality images for different cases each time, we may prepare labeled high-quality images in advance only once or reuse existing data of this kind and collect pairs of high- and low-quality images in an ad hoc manner to adapt the data to each specific case.

In our experiment, we created low-quality images by down-sampling images of the benchmark datasets with
average pooling with stride \((2, 2)\) and kernel \((2, 2)\),
and we took other original images as high-quality images,
i.e.,
\(X\) is a down-sampled image, \(U\) is an image of the original resolution, and \(Y\) is a class label.
Let \(K \in \Nat\) be the number of class labels.
We use the one-hot representation for class labels, i.e., \(Y\) is \(K\)-dimensional vector with all elements being zero except for the dimension corresponding to the represented class.
In order to make models \(f\) and \(h\) output \(K\)-dimensional probability vectors,
we use the following ``square-softmax'' function for the last layers of \(f\) and \(h\):
\begin{equation*}
  \Re^{K} \in (a_1, \dots, a_K) \mapsto \frac{a_{y}^{2}}{\sum_{y'\in[K]}a_{y'}^{2}} \in \Re^{K},
\end{equation*}
where \(a_{1}, \dots, a_{K} \in \Re\) are the outputs of the second last layer.
It is similar to the standard softmax function, but it uses the square function instead of the exponential function.\footnote{In our experiments, we found that applying the square-softmax function makes training much easier than the standard softmax function.}
We use this architecture because the proposed methods are based on the squared loss whereas the softmax function typically uses cross-entropy loss.
We evaluate MSEs and the objective functions
with the squared \(\ell_{2}\)-norms \(\norm{f(x) - y}^{2}\), \(\norm{f(x) - h(u)}^{2}\), and \(\norm{h(u) - y}^{2}\)
for any \(x \in \cX\), \(f\colon\cX \to \Re^{K}\), \(h\colon\cU \to \Re^{K}\), and \(y\in\Re^{K}\),
in place of their one-dimensional versions proposed in Section~\ref{sec:problem_setting}.

We train models using mediated uncoupled data consisting of samples of \((X, U)\) and \((U, Y)\).
In the test evaluation phase, we use coupled \((X, Y)\)-data,
and we let the trained models to classify each low-quality image and compare the prediction with
the true class label with the zero-one loss.

As in the synthetic experiments, we use the three methods but with the following configurations.
\begin{itemize}
  \item For the naive method, we use a U-Net~\cite{ronneberger_u-net_2015} for predicting \(U\) from \(X\)
        and a ResNet~\citep{he_deep_2016} implemented by \citet{Idelbayev18a} for predicting \(Y\) from \(U\).
        These are considered to be state-of-the-art deep neural network architectures
        for image-to-image translation (\(X\) to \(U\)) and image classification (\(U\) to \(Y\)), respectively.
  \item For 2Step-RR, we use ResNets.
        Note that both predicting \(Y\) from \(U\) and \(Y\) from \(X\) are image classification.
  \item For Joint-RR, we again use ResNets
        since the trained models essentially are image classifiers.
        We set \(w = 0.5\).
\end{itemize}
We train all models with Adam~\citep{kingma_adam_2017} for \(200\) epochs.
We turn off the weight decay and set the other tuning parameters of Adam as in PyTorch~\citep{pytorch_NEURIPS2019}: the learning rate is \(0.001\), the \(\beta\) is \((0.9, 0.999)\).
We use randomly sampled \(10{,}000 \times 2\) mediated uncoupled data for training and \(10{,}000\) coupled \((X, Y)\)-data for test evaluation. We repeat the experiment for \(50\) times.

\begin{table*}[!h]
\centering
\caption{
  Accuracy rates and MSEs for the experiment on classification of low-quality images with the image benchmark datasets.
  The numbers outside of parentheses are means,
  and those in parentheses are standard errors
  calculated from $50$ repetitions of the experiments.
  The scores comparable to the best in terms of Wilcoxon's signed rank test are emphasized in bold fonts.
}
\label{tab:low_image}
\begin{tabular}{l|ccc|ccc}
\toprule
        & \multicolumn{3}{|c}{Accuracy} & \multicolumn{3}{|c}{MSE}\\
\midrule
Dataset & Naive & 2Step-RR & Joint-RR & Naive & 2Step-RR & Joint-RR\\
\midrule
  MNIST
        & 88.06$\%$ (1.13) & \textbf{96.19}$\%$ (0.26) & \textbf{96.34$\%$} (0.28)
        & 0.184 (0.018) & \textbf{0.059} (0.003) & \textbf{0.056} (0.003)\\
  Fashion-MNIST
        & 73.12$\%$ (0.37) & 85.53$\%$ (0.16) & \textbf{86.93$\%$} (0.14)
        & 0.417 (0.005) & 0.213 (0.002) & \textbf{0.194} (0.002)\\
  CIFAR-10
        & 48.35$\%$ (0.28) & 67.60$\%$ (0.13) & \textbf{69.10$\%$} (0.11)
        & 0.778 (0.005) & 0.444 (0.002) & \textbf{0.424} (0.001)\\
  CIFAR-100
        & 19.97$\%$ (0.12) & 27.43$\%$ (0.08) & \textbf{28.05$\%$} (0.08)
        & 0.935 (0.001) & 0.850 (0.000) & \textbf{0.846} (0.000)\\
\bottomrule
\end{tabular}
\end{table*}

\begin{figure}[tp]
\centering
\subcaptionbox{MNIST. \label{fig:MNIST_downsampled_acc}}{\includegraphics[width=\columnwidth*23/48]{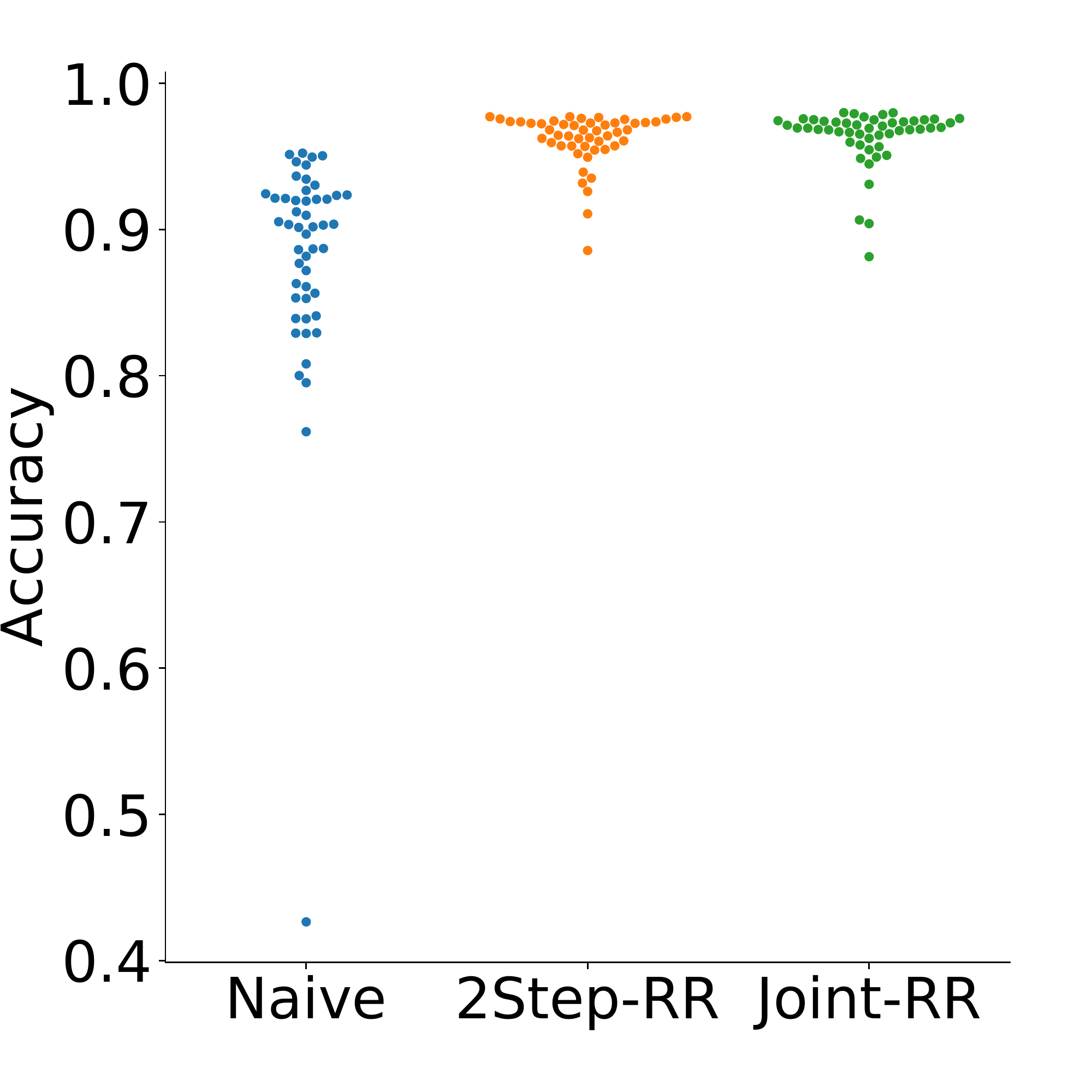}}
\subcaptionbox{Fashion-MNIST. \label{fig:FashionMNIST_downsampled_acc}}{\includegraphics[width=\columnwidth*23/48]{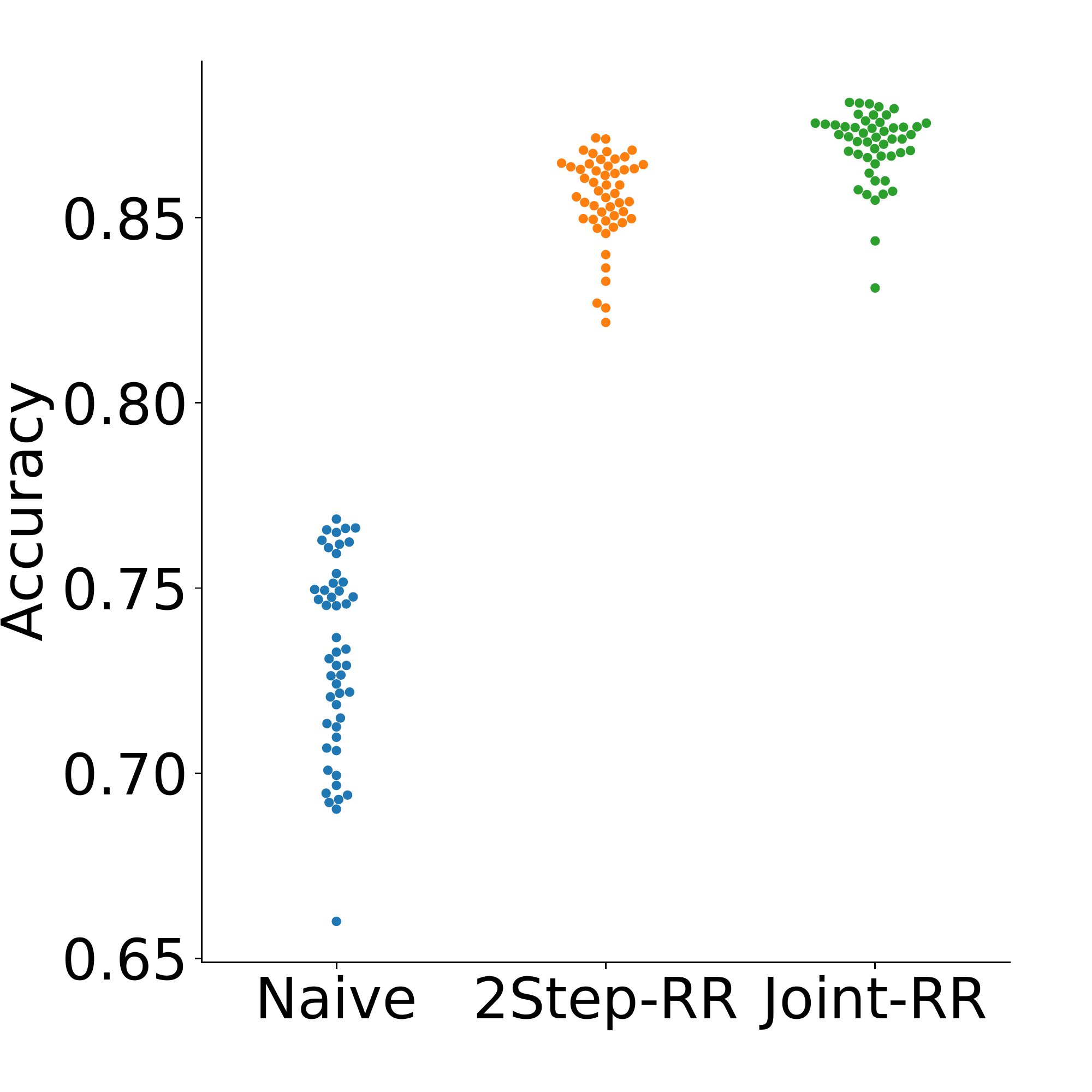}}
\subcaptionbox{CIFAR-10. \label{fig:CIFAR10_downsampled_acc}}{\includegraphics[width=\columnwidth*23/48]{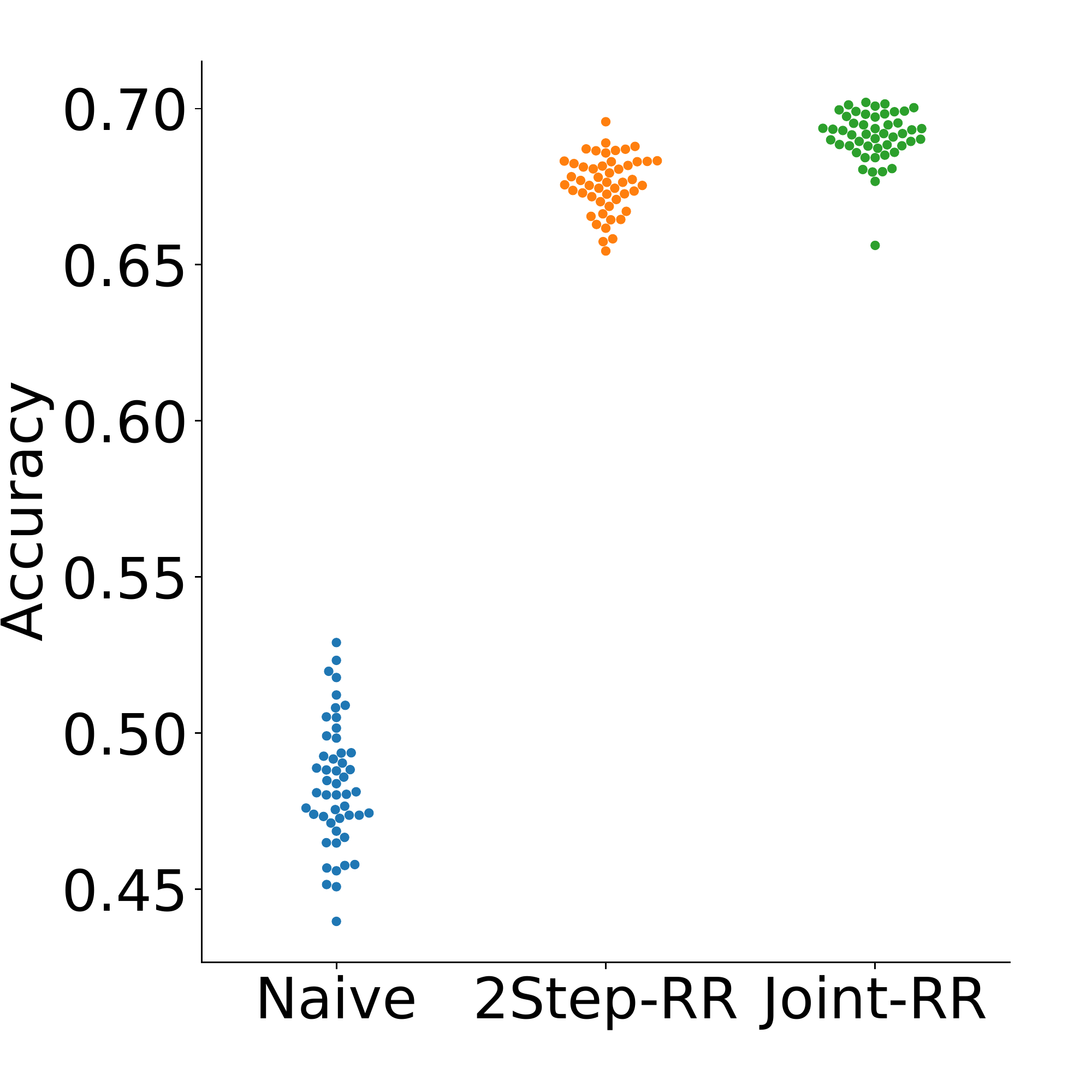}}
\subcaptionbox{CIFAR-100. \label{fig:CIFAR100_downsampled_acc}}{\includegraphics[width=\columnwidth*23/48]{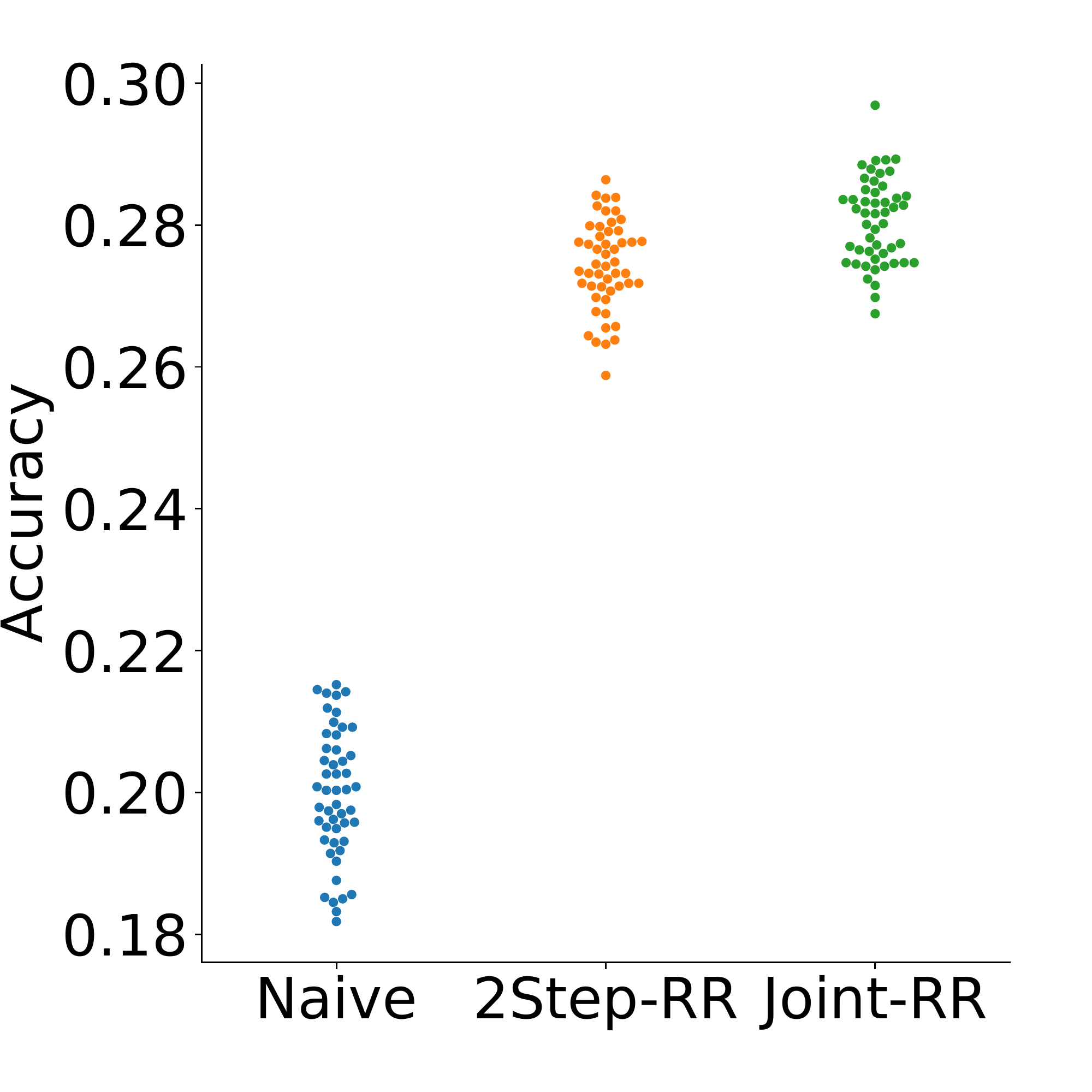}}
\caption{Accuracy rates for the experiments on low-quality image classification.}
\label{fig:ACC_downsampled}
\end{figure}

Table~\ref{tab:low_image} shows the averages and the standard errors of the accuracy rates and the MSEs obtained by each method.
In terms of accuracy, the two proposed methods outperformed the naive method.
Joint-RR improved the accuracy compared to 2Step-RR for all datasets but MNIST.
We can see the same tendency for the MSEs.
Note that these accuracy rates are far from those of state-of-the-art supervised methods (e.g., \citet{rmdl2018, foret2021sharpnessaware, fine_tuning_darts})
not only due to the uncoupled setting but also due to the down-sampling that significantly reduces the amount of information contained in images.

Figure~\ref{fig:ACC_downsampled} shows more details of the results with scatter plots of the accuracy rates.
The figure indicates that the accuracy rates for the proposed methods and the naive method are clearly isolated except for MNIST,
meaning that the proposed ones consistently performed better than the naive one for those datasets.
Similar results are observed for the MSEs (see Figure~\ref{fig:MSE_downsampled} in Appendix~\ref{sec:MSE_plot_downsampled} in the supplementary material).
We can also see that the performance of the naive method highly deviates over the trials
while those of the proposed methods tend to be more concentrated and show stable performances.
2Step-RR and Joint-RR gave comparable performance with each other,
but Joint-RR showed slightly better performance.
This performance gain may come from the regularization effect of Joint-RR (see Section~\ref{sec:UB_is_regularized}).

\section{Conclusion}
In this paper, we considered learning from mediated uncoupled data.
We proposed a method that learns a function directly predicts the target variable.
We showed an excess error bound for the proposed method
and demonstrated its practical usefulness through experiments.
This paper focused on the squared loss,
which is a standard choice for regression problems but not necessarily popular for classification.
In future work, we investigate other loss functions for classification such as the logistic loss.

\section*{Acknowledgements}
We thank Han Bao, Naoto Yokoya, Nontawat Charoenphakdee, Takashi Ishida, Yann Chevaleyre, and Yivan Zhang for the valuable discussions.
IY and MS were supported by JST CREST Grant Number JPMJCR18A2.
JH was supported by KAKENHI 21K11747.
IY and FY acknowledge the support of the ANR as part of the ``Investissements d'avenir'' program, reference ANR-19-P3IA-0001 (PRAIRIE 3IA Institute).

\bibliographystyle{icml2021}
\bibliography{main}


\clearpage
\renewcommand\thesubsection{\Alph{subsection}}
\setcounter{page}{1}
\section*{Appendix}

\subsection{Proof of Theorem~\ref{thm:ub}}
\label{apx:proof_ub_2}
We prove the following theorem.

\newtheorem*{thm:ub}{Theorem~\ref{thm:ub}}
\begin{thm:ub}
  The MSE can be bounded as
  \begin{equation}
    \E[(f(X) - Y)^2] \le J_w(f, h), \label{apxeq:ub_any_h}
  \end{equation}
  where
  \begin{align*}
    J_w(f, h)
    &\coloneqq \frac{1}{w}\E[(f(X) - h(U))^2]\\
    &\phantom{\coloneqq\ } + \frac{1}{1-w}\E[(h(U) - Y)^2]
  \end{align*}
  for any \(w \in (0, 1)\) and any \(h \in L_U^2 \coloneqq \{h\colon\mathcal{U} \to \mathcal{Y} \given \E[h(U)^2] < \infty\}\).
\end{thm:ub}

\begin{proof}
  First, from Jensen's bound, we have
  \begin{align*}
    (a + b)^{2}
    &= \lr(){w\frac{a}{w} + (1-w)\frac{a}{1-w}}^{2}\\
    &= w\lr(){\frac{a}{w}}^{2} + (1-w)\lr(){\frac{a}{1-w}}^{2}\\
    &= \frac{a^{2}}{w} + \frac{a^{2}}{1-w},
  \end{align*}
  for any \(a\in\Re\), \(b\in\Re\), and \(w \in (0, 1)\).
  Using the inequality, we obtain
  \begin{align*}
    (f(X) - Y)^{2}
    &= (f(X) - h(U) + h(U) - Y)^{2}\\
    &\le \frac{(f(X) - h(U))^{2}}{w} + \frac{(h(U) - Y)^{2}}{1-w}.
  \end{align*}
  Taking the expectations of both sides, we complete the proof.
\end{proof}

\subsection{Le Cam's Method}
We suppose all involved probability distributions have density functions.
For any density function \(p(x, u, y)\) over \(\cX \times \cU \times \cY\),
we denote its marginal distributions by \(p(x)\), \(p(u)\), \(p(y)\), \(p(x, u)\), \(p(x, y)\), and \(p(u, y)\)
and conditional distributions by \(p(u \given x)\), \(p(y \given x)\), \(p(y \given u)\), \(p(y \given u, x)\), and so on following the usual convention.

\begin{definition}
  Fix any probability density function \(p(x)\) over \(\cX\).
  For any \(c \in [0, \infty)\),
  let \(\mathbb{P}_{p, c}\) denote the set of density functions over \(\cX \times \cU \times \cY\) defined as follows.
  Any density function \(q(x, u, y)\) over \(\cX \times \cU \times \cY\) is a member of \(\mathbb{P}_{p, c}\)
  if and only if for \((X, U, Y) \sim q(x, u, y)\),
  \begin{itemize}
    \item \(X\) follows \(p(x)\), and
    \item the difference between \(\E[Y \given X]\) and \(\E[\E[Y \given U] \given X]\) is bounded by \(c\) from above in the sense of \(L^2(p)\)-distance:
          \begin{align*}
            \E[(\E[Y \given X] - \E[\E[Y \given U] \given X])^2] \le c^2.
          \end{align*}
  \end{itemize}
\end{definition}

\newcommand{\Sntil}{\Stil_{\ntil}}
\newcommand{\Sep}[1]{\operatorname{sep}[#1]}
\begin{definition}
  \label{def:sep_sample}
  For an underlying density function \(p^*(x, u, y) \in \mathbb{P}_{p, c}\),
  \emph{separate samples with proxy} is a set of random variables of the form \(S_{n,\ntil} \coloneqq ((X_1, U_1), \dots, (X_n, U_n), (\Util_1, \Ytil_1), \dots, (\Util_{\ntil}, \Ytil_{\ntil}))\),
  where \((X_1, U_1), \dots, (X_n, U_n), (\Util_1, \Ytil_1), \dots, (\Util_{\ntil}, \Ytil_{\ntil})\) are independent,
  \((X_i, U_i) \sim p^*(x, u)\), and \((\Util_i, \Ytil_i) \sim p^*(u, y)\).
  We denote the set of all possible realizations of \(S_{n, \ntil}\) by \(\mathcal{S}_{n, \ntil}\)
  and the density function of \(S_{n, \ntil}\) by \(p^*_{n, \ntil}(s)\), where
  \begin{align*}
    s
    &\equiv ((x_1, u_1), \dots, (x_n, u_n), (\xtil_1, \util_1), \dots, (\xtil_{\ntil}, \util_{\ntil}))\\
    &\in (\cX \times \cU)^{n + \ntil}.
  \end{align*}
\end{definition}

In this section, we will obtain a lower bound of the expected error that the best learner has to suffer for the worst-case instance of \(p^*(x, u, y) \in \mathbb{P}_{p, c}\) for a fixed \(c \in [0, \infty)\) and a density function \(p(x)\) over \(\cX\):
\begin{align*}
  &E_\text{minimax}\\
  &\coloneqq \inf_{\widehat{f}_{(\cdot)}\colon \mathcal{S}_{n, \ntil} \to \{f: \cX \to \cY\}}
  \sup_{p^* \in \mathbb{P}_{p, c}}
  \E[(\widehat{f}_{S_{n, \ntil}}(X) - \E[Y \given X])^2],
\end{align*}
where \((X, Y) \sim p^*(x, y)\),
and the expectation is taken over \(S_{n, \ntil}\) and \((X, Y)\).
\(\widehat{f}_{(\cdot)}\) represents a learning algorithm ranging over all mappings that input \(S_{n, \ntil}\) and output a function from \(\cX\) to \(\cY\), which include computationally intractable ones.

\begin{definition}
  Define a semi-distance metric on \(\mathbb{P}_{p, c}\), \(\rho\colon \mathbb{P}_{p, c}^2 \to [0, \infty)\), by
  \begin{align*}
    &\rho(q_1(x, u, y), q_2(x, u, y))\\
    &\coloneqq \E[(\E[Y_1 \given X] - \E[\E[Y_2 \given U_2] \given X])^2]
  \end{align*}
  for any \((q_1(x, u, y), q_2(x, u, y)) \in \mathbb{P}_{p, c}^2\),
  where \((X, U_1, Y_1) \sim q_1(x, u, y)\) and \((X, U_2, Y_2) \sim q_2(x, u, y)\).
  Note that these two tuples share the common variable \(X\) in the definition.
\end{definition}

Take any \(2\delta\)-separated density functions, \((p_1, p_2) \in \mathbb{P}_{p, c}^2\), in terms of \(\rho\): \(\rho(p_1, p_2) > 2\delta\).
Le Cam's method states that
\begin{align*}
  E_\text{minimax}
  &\ge \frac{1}{2}\delta^2 \lr(){1 - \TV(p_1, p_2})\\
  &\ge \frac{1}{2}\delta^2 \lr(){1 - \sqrt{\frac{1}{2}\KL(p_1, p_2})},
\end{align*}
where \(\TV(\cdot, \cdot)\) is the total variation distance,
and \(\KL(\cdot, \cdot)\) is the Kullback-Leibler divergence.

\subsection{Proof of Theorem~\ref{thm:proactive-two-step}}
\label{sec:proof_two_step}
The goal of this subsection is to show the following theorem.

\newtheorem*{thm:proactive-two-step}{Theorem~\ref{thm:proactive-two-step}}
\begin{thm:proactive-two-step}
  Suppose that
  \(h^*(u) \coloneqq \E[Y \given U = u] \in \cH\)
  and \(f^*(x) \coloneqq \E[h^*(U) \given X = x] \in \cF\)
  for some \(\cF \subseteq L^2_X\) and \(\cH \subseteq L^2_U\).
  Then,
  \begin{align*}
    (f^*, h^*) \in \lim_{w \toup 1} \argmin_{(f, h) \in \cF \times \cH}
    J_w(f, h) \label{eq:proposed_minus_C},
  \end{align*}
  where \(C > 0\) is a constant that does not depend on \(f\) or \(h\).
\end{thm:proactive-two-step}

Note that \((f^*, h^*)\) is the solution pair to the optimization problem of 2Step-RR with the population-level objective functionals:
\begin{align*}
  &h^* \in \argmin_{h \in \cH} \E[(h(U) - Y)^2]\\
  &\text{and}\quad
    f^* \in \argmin_{f \in \cF} \E[(f(X) - h^*(U))^2].
\end{align*}
On the other hand, the constant in Eq.~\eqref{eq:proposed_minus_C} is subtracted merely to prevent the objective value from diverging and make the solution well-defined in the limit.
Thus, the theorem states that \((f^*, h^*)\) is the limit of the solution pair to the optimization problem of the proposed method with the population-level objective functional.

\begin{definition}
  For \(w \in (0, 1)\), \(f \in L^2_X\), and \(h \in L^2_U\), define
  \begin{align*}
    Q(w, f, h)
    &\coloneqq J_w(f, h) - \frac{1}{1-w}\E[(Y - h^*(U))^2],\\
    R(w, f)
    &\coloneqq \inf_{h\in \cH} Q(w, f, h),\\
    R(1, f)
    &\coloneqq \E[(f(X) - h^*(U))^2].
  \end{align*}
\end{definition}

Recall that
\begin{align*}
  J_w(f, h)
  &\equiv \frac{1}{w} \E[(f(X) - h(U))^2]\\
  &\phantom{\equiv\ } + \frac{1}{1 - w} \E[(h(U) - Y)^2],
\end{align*}
Using the identity
\begin{align*}
  &\E[(h(U) - Y)^2]\\
  &= \E[(h(U) - \E[Y \given X])^2] + \E[(\E[Y \given X] - Y)^2]\\
  &= \E[(h(U) - h^*(U))^2] + \E[(h^*(U) - Y)^2],
\end{align*}
we obtain
\begin{align*}
  &Q(w, f, h)\\
  &= \frac{1}{w} \E[(f(X) - h(U))^2]
    + \frac{1}{1 - w} \E[(h(U) - h^*(U))^2]\\
  &\phantom{=}\ + \frac{1}{1 - w} \E[(h^*(U) - Y)^2]
    - \frac{1}{1 - w} \E[(h^*(U) - Y)^2]\\
  &= \frac{1}{w} \E[(f(X) - h(U))^2] + \frac{1}{1 - w} \E[(h(U) - h^*(U))^2].
\end{align*}
Moreover, note that
\begin{align*}
  f^* = \argmin_{f \in \cF} R(1, f).
\end{align*}

\begin{lemma}\label{lem:R_conti}
  \((w, f) \mapsto R(w, f)\) is continuous on \(\{1\} \times \cF\),
  and \(w \mapsto \inf_{f\in\cF} R(w, f)\) is continuous at \(w = 1\).
\end{lemma}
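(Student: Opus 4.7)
The plan rests on the elementary Jensen/Cauchy--Schwarz-type inequality $(a+b)^2 \le a^2/w + b^2/(1-w)$ for $w \in (0,1)$ already used in the proof of Theorem~\ref{thm:ub}. Applying it pointwise with $a = f(X) - h(U)$ and $b = h(U) - h^*(U)$, so $a+b = f(X) - h^*(U)$, and taking expectations yields
\begin{equation*}
R(1, f) \;=\; \E[(f(X) - h^*(U))^2] \;\le\; Q(w, f, h)
\end{equation*}
for every $h \in \cH$ and $w \in (0,1)$. Infimizing over $h \in \cH$ gives the lower bound $R(1, f) \le R(w, f)$. The matching upper bound is just as cheap: since $h^* \in \cH$ by hypothesis, the choice $h = h^*$ in $Q$ kills the second term and yields $R(w, f) \le Q(w, f, h^*) = R(1, f)/w$. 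I thus obtain the central sandwich
\begin{equation*}
R(1, f) \;\le\; R(w, f) \;\le\; \tfrac{1}{w}\, R(1, f)
\qquad (f \in \cF,\; w \in (0, 1)).
\end{equation*}

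Both parts of the lemma fall out of this sandwich once I observe that $f \mapsto R(1, f) = \|f(X) - h^*(U)\|_{L^2}^2$ is continuous on $\cF \subseteq L^2_X$, immediate from the reverse triangle inequality in $L^2$. For the joint continuity at $(1, f_0)$, I would take any sequence $(w_n, f_n) \to (1, f_0)$ in $(0, 1] \times \cF$ and apply the sandwich for those indices with $w_n < 1$; this gives $R(1, f_n) \le R(w_n, f_n) \le R(1, f_n)/w_n$, and both bounds converge to $R(1, f_0)$, so the squeeze theorem forces $R(w_n, f_n) \to R(1, f_0)$. The case $w_n = 1$ is immediate by continuity of $R(1, \cdot)$.

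For Part~2, I take $\inf_{f \in \cF}$ throughout the sandwich. The hypothesis $f^*(x) = \E[h^*(U) \given X = x] \in \cF$ exhibits $f^*$ as the $L^2_X$-projection of $h^*(U)$ onto functions of $X$, hence $f^* \in \argmin_{f \in \cF} R(1, f)$ and $\inf_f R(1, f) = R(1, f^*)$. The upper half of the sandwich evaluated at $f = f^*$ yields $\inf_f R(w, f) \le R(w, f^*) \le R(1, f^*)/w$, while infimizing the lower half over $f$ gives $\inf_f R(w, f) \ge \inf_f R(1, f) = R(1, f^*)$. Sending $w \to 1^-$ squeezes $\inf_f R(w, f) \to R(1, f^*)$, as required.

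The ``hard'' step here is really just noticing that the same Cauchy--Schwarz move used to bound the MSE \emph{from above} by $Q$ also gives a matching inequality $R(1, f) \le R(w, f)$ going the other direction, and that the hypothesis $h^* \in \cH$ makes the regularization term in $Q$ attainable at zero by the choice $h = h^*$. Without that hypothesis the right-hand inequality in the sandwich would fail, and one would need a more delicate argument controlling the approximation bias $\inf_{h \in \cH} \|h - h^*\|_{L^2}$; similarly the assumption $f^* \in \cF$ is what makes the upper bound in Part~2 collapse cleanly to $R(1, f^*)/w$.
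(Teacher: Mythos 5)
Your proof is correct and follows essentially the same route as the paper's: the lower bound $R(1,f)\le R(w,f)$ via the Jensen-type inequality $(a+b)^2\le a^2/w+b^2/(1-w)$, and the upper bound $R(w,f)\le R(1,f)/w$ by plugging in $h=h^*$ (using $h^*\in\cH$), followed by a squeeze combined with continuity of $f\mapsto\E[(f(X)-h^*(U))^2]$ and, for the second part, evaluation at $f^*$. Your packaging of these two inequalities as an explicit sandwich $R(1,f)\le R(w,f)\le R(1,f)/w$ is a tidier presentation of the same bounds the paper derives in the form $R(w,f_1)-R(1,f_0)$.
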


\begin{proof}
  For any \(f_0, f_1 \in \cF\) and any \(w \in (0, 1)\), we have
  \begin{align*}
    &R(w, f_1) - R(1, f_0)\\
    &= \inf_{h \in \cH} \bigg[
      \frac{1}{w}\E[(f_1(X) - h(U))^2]\\
    &\phantom{=\inf_{h \in \cH} \bigg[} + \frac{1}{1-w}\E[(h(U) - h^*(U))^2]\bigg]\\
    &\phantom{=} - \E[(f_0(X) - h^*(U))^2]\\
    &\le \frac{1}{w}\E[(f_1(X) - h^*(U))^2] - \E[(f_0(X) - h^*(U))^2].
  \end{align*}
  Since \(h = h^*\) cannot go below the infimum, we have
  \begin{align*}
    &R(w, f_1) - R(1, f_0)\\
    &\le \E[(f_1(X) - h^*(U))^2] - \E[(f_0(X) - h^*(U))^2]\\
    &\phantom{\le} + \frac{1 - w}{w}\E[(f_1(X) - h^*(U))^2]\\
    &= \E[(f_1(X) - f_0(X))(f_1(X) + f_0(X) - 2h^*(U))]\\
    &\phantom{=} + \frac{1 - w}{w}\E[(f_1(X) - h^*(U))^2]\\
    &= \E[(f_1(X) - f_0(X))(f_1(X) - f_0(X) + 2f_0(X) - 2h^*(U))]\\
    &\phantom{=} + \frac{1 - w}{w}\E[3^2(\{f_1(X) - f_0(X)\}/3 + f_0(X)/3 - h^*(U)/3)^2]\\
    &\le
      \norm{f_1 - f_0}_2(\norm{f_1 - f_0}_2 + 2\norm{f_0}_2 + 2\norm{h^*}_2)\\
    &\phantom{\le} + \frac{3(1 - w)}{w}(\norm{f_1 - f_0}_2^2 + \norm{f_0}_2^2 + \norm{h^*}_2^2)\\
    &\phantom{\le} \quad\text{(from the CauchySchwarz and Jensen's inequality)}\\
    &\to 0 \quad \text{(as \(w \toup 1\) and \(f_1 \to f_0\))}.
  \end{align*}
  On the other hand,
  \begin{align*}
    &R(w, f_1) - R(1, f_0)\\
    &= \inf_{h \in \cH} \bigg[
      \frac{w}{w^2} \E[(f_1(X) - h(U))^2]\\
    &\phantom{\inf_{h \in \cH} \bigg[\ } + \frac{1-w}{(1-w)^2}\E[(h(U) - h^*(U))^2]\bigg]\\
    &\phantom{=} - \E[(f_0(X) - h^*(U))^2]\\
    &\ge \inf_{h \in \cH} \bigg[
      \E\bigg[\bigg(w\cdot\frac{f_1(X) - h(U)}{w}\\
    &\phantom{\ge \inf_{h \in \cH} \bigg[}
      + (1-w)\cdot\frac{h(U) - h^*(U)}{1-w}\bigg)^2\bigg]\bigg]\\
    &\phantom{\ge}- \E[(f_0(X) - h^*(U))^2]
  \end{align*}
  from Jensen's inequality. Hence,
  \begin{align*}
    &R(w, f_1) - R(1, f_0)\\
    &= \E[(f_1(X) - h^*(U))^2] - \E[(f_0(X) - h^*(U))^2]\\
    &= \E[(f_1(X) - f_0(X))(f_1(X) + f_0(X) - 2h^*(U))]\\
    &\ge -\norm{f_1 - f_0}(\norm{f_1 - f_0} + 2\norm{f_0} + 2\norm{h^*})\\
    &\phantom{\le} \quad\text{(from the CauchySchwarz inequality)}\\
    &\to 0 \quad \text{(as \(f_1 \to f_0\))}.
  \end{align*}
  By the squeeze theorem, \(R(w, f_1) \to R(1, f_0)\) as \(w \toup 1\) and \(f_1 \to f_0\).

  Similarly, for any \(w \in (0, 1)\), we have
  \begin{align*}
    &\inf_{f\in\cF} R(w, f) - \inf_{f\in\cF} R(1, f)\\
    &= \inf_{f\in\cF} R(w, f) - R(1, f^*)\\
    &= \inf_{(f, h)\in\cF\times\cH} \bigg[
      \frac{1}{w}\E[(f(X) - h(U))^2]\\
    &\phantom{= \inf_{(f, h)\in\cF\times\cH} \bigg[\ }
      + \frac{1}{1-w}\E[(h(U) - h^*(U))^2]\bigg]\\
    &\phantom{=\ } - \E[(f^*(X) - h^*(U))^2]\\
    &\le \frac{1}{w}\E[(f^*(X) - h^*(U))^2]
      - \E[(f^*(X) - h^*(U))^2]\\
    &\phantom{\le} \quad\text{(since \((f, h) = (f^*, h^*)\) cannot go below the infimum)},\\
    &\le \lr(){\frac{1}{w} - 1}\E[(f^*(X) - h^*(U))^2]\\
    &\to 0 \quad \text{(as \(w \toup 1\))}.
  \end{align*}
  On the other hand,
  \begin{align*}
    &\inf_{f\in\cF} R(w, f) - \inf_{f\in\cF} R(1, f)\\
    &\ge \inf_{f\in\cF} R(w, f) - R(1, f^*)\\
    &= \inf_{(f, h)\in\cF\times\cH} \bigg[
      \frac{w}{w^2} \E[(f(X) - h(U))^2]\\
    &\phantom{= \inf_{(f, h)\in\cF\times\cH} \bigg[}
      + \frac{1-w}{(1-w)^2}\E[(h(U) - h^*(U))^2]\bigg]\\
    &\phantom{=} - \E[(f^*(X) - h^*(U))^2]\\
    &\ge \inf_{(f, h)\in\cF\times\cH}
      \E\bigg[\big(w\cdot\frac{f(X) - h(U)}{w}\\
    &\phantom{\ge \inf_{(f, h)\in\cF\times\cH}}
      + (1-w)\cdot\frac{h(U) - h^*(U)}{1-w}\big)^2\bigg]\\
    &\phantom{\ge}- \E[(f^*(X) - h^*(U))^2]
      \quad \text{(from Jensen's inequality)}\\
    &\ge \inf_{(f, h)\in\cF\times\cH}
      \E\lr[]{\lr(){f(X) - h^*(U)}^2}\\
    &\phantom{\ge \inf_{(f, h)\in\cF\times\cH}}
    - \E[(f^*(X) - h^*(U))^2]\\
    &= 0.
  \end{align*}
  By the squeeze theorem,
  \begin{equation}
    \inf_{f\in\cF} R(w, f) \to \inf_{f\in\cF} R(1, f)
  \end{equation}
  as \(w \toup 1\).
\end{proof}

\begin{lemma}
  \(f \mapsto R(1, f)\) has a \emph{well-separated} minimum \citep{vaart1998asymp}, i.e.,
  there exists a minimizer
  \begin{equation}
    f^\dagger \in \argmin_{f \in \cF} R(1, f)
  \end{equation}
  that satisfies
  \begin{equation}
    R(1, f^\dagger) < \inf_{f \in \cF, \norm{f - f^\dagger}_2 \ge \delta} R(1, f)
  \end{equation}
  for every \(\delta > 0\).
\end{lemma}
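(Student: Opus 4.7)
The plan is to exhibit $f^\dagger = f^*$ as an explicit well-separated minimizer by showing that $R(1, \cdot)$ is, up to an $f$-independent additive constant, the squared $L^2_X$-distance to $f^*$. Since $f^* \in \cF$ is guaranteed by the hypothesis of Theorem~\ref{thm:proactive-two-step}, this will immediately give the strict quadratic separation needed.

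First I would decompose $R(1, f) = \E[(f(X) - h^*(U))^2]$ by inserting $\pm f^*(X)$ inside the square:
\begin{align*}
    R(1, f)
    &= \E[(f(X) - f^*(X))^2]\\
    &\phantom{=\ } + 2\E[(f(X) - f^*(X))(f^*(X) - h^*(U))]\\
    &\phantom{=\ } + \E[(f^*(X) - h^*(U))^2].
\end{align*}
The cross term vanishes by the tower property: conditioning on $X$, the factor $f(X) - f^*(X)$ is $X$-measurable, while $\E[f^*(X) - h^*(U) \mid X] = f^*(X) - \E[h^*(U) \mid X] = 0$ by the very definition $f^*(x) = \E[h^*(U) \mid X = x]$. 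Consequently,
\begin{equation*}
    R(1, f) = \norm{f - f^*}_2^2 + R(1, f^*).
\end{equation*}

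Since the second term does not depend on $f$ and $f^* \in \cF$, the function $f^\dagger \coloneqq f^*$ is a minimizer of $R(1, \cdot)$ over $\cF$. For any $\delta > 0$ and any $f \in \cF$ with $\norm{f - f^\dagger}_2 \ge \delta$, the identity above yields
\begin{equation*}
    R(1, f) - R(1, f^\dagger) = \norm{f - f^\dagger}_2^2 \ge \delta^2,
\end{equation*}
so that $\inf_{f \in \cF,\ \norm{f - f^\dagger}_2 \ge \delta} R(1, f) \ge R(1, f^\dagger) + \delta^2 > R(1, f^\dagger)$, which is exactly the well-separated minimum property.

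There is essentially no obstacle here beyond correctly identifying the orthogonality that makes the cross term vanish; the argument is an $L^2_X$ projection calculation, and the only place the hypotheses of Theorem~\ref{thm:proactive-two-step} are used is to ensure $f^* \in \cF$ so that the unconstrained $L^2_X$-minimizer is actually attainable within the hypothesis class.
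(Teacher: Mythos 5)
Your proposal is correct and follows essentially the same route as the paper: both take $f^\dagger = f^*$ and use the $L^2$ orthogonality of $f(X)-f^*(X)$ against $f^*(X)-h^*(U)$ (the paper writes it as the bias--variance decomposition of $\E[(f(X)-h^*(U))^2]$, you write it by expanding the cross term and invoking the tower property -- the same calculation) to obtain $R(1,f)-R(1,f^\dagger)=\norm{f-f^\dagger}_2^2\ge\delta^2$.
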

The minimizer turns out to be unique when it is well-separated, so \(f^\dagger = f^*\).

\begin{proof}
  Let
  \begin{align*}
    f^\dagger
    &\coloneqq f^*
    = \E[h^*(U) \given X = x]\\
    &\in \argmin_{f\in\cF} \E[(f(X) - h^*(U))^2].
  \end{align*}
  For any \(\delta > 0\) and any \(f \in \cF\) such that \(\norm{f - f^\dagger} \ge \delta\), we have
  \begin{align*}
    &R(1, f) - R(1, f^\dagger)\\
    &= \E[(f(X) - h^*(U))^2] - \E[(f^\dagger(X) - h^*(U))^2]\\
    &= \E[(f(X) - \E[h^*(U) \given X])^2]\\
    &\phantom{=\ } + \E[(h^*(U) - \E[h^*(U) \given X])^2]\\
    &\phantom{=\ } - \E[(\E[h^*(U) \given X] - h^*(U))^2]\\
    &= \E[(f(X) - f^\dagger(X))^2]\\
    &\ge \delta^2.
  \end{align*}
  Hence, for any \(\delta > 0\), it holds that
  \begin{equation}
    \inf_{f\in\cF,\norm{f-f^\dagger}\ge\delta} R(1, f)
    \ge R(1, f^\dagger) + \delta^2
    > R(1, f^\dagger).
  \end{equation}
\end{proof}

\begin{proof}[Proof of Theorem~\ref{thm:proactive-two-step}]
  Let
  \begin{equation}
    h_{f, w} \coloneqq \argmin_{h \in \cH} Q(w, f, h).
  \end{equation}
  First, we show that \(h_{f, w} \to h^*\) as \(w \toup 1\) for any \(f \in \cF\).
  Since \(Q(w, f, h_{w, f}) \le Q(w, f, h^*)\), we have
  \begin{align*}
    &\phantom{\le\ } \frac{1}{w}\E[(f(X) - h_{f, w}(U))^2]\\
    &\phantom{\le\ } + \frac{1}{1-w}\E[(h_{f, w}(U) - h^*(U))^2]\\
    &\le \frac{1}{w} \E[(f(X) - h^*(U))^2],
  \end{align*}
  which implies
  \begin{align*}
    &\frac{1}{1-w}\E[(h_{f, w}(U) - h^*(U))^2]\\
    &\le \frac{1}{w} \E[(f(X) - h^*(U))^2]
      - \frac{1}{w}\E[(f(X) - h_{f, w}(U))^2]\\
    &\le \frac{1}{w} \E[(f(X) - h^*(U))^2].
  \end{align*}
  Thus,
  \begin{align*}
    &\norm{h_{f, w} - h^*}_2^2
      \equiv \E[(h_{f, w}(U) - h^*(U))^2]\\
    &\le \frac{1 - w}{w} \E[(f(X) - h^*(U))^2]
      \to 0 \quad \text{(as \(w \toup 1\))}.
  \end{align*}

  Next, we show
  \begin{equation}
    f_w \coloneqq \argmin_{f \in \cF} R(w, f) \to f^*~\text{as \(w \toup 1\)}.
  \end{equation}
  From the continuity of \((w, f) \mapsto R(w, f)\) and \(w \mapsto \inf_{f \in \cF} R(w, f)\) at \(w = 1\) (Lemma~\ref{lem:R_conti}),
  for every \(\varepsilon > 0\), there exists \(\delta > 0\) such that for every \(\widetilde{w} \in (0, 1)\),
  \begin{align*}
    &\abs{\widetilde{w} - 1} < \delta\\
    &\implies \abs{R(1, f_{\widetilde{w}}) - R(\widetilde{w}, f_{\widetilde{w}})} < \varepsilon / 2\\
    &\phantom{\implies\ }~\text{and}~
      \abs{\inf_{f \in \cF} R(\widetilde{w}, f) - \inf_{f \in \cF} R(1, f)} < \varepsilon / 2\\
    &\implies \abs{R(1, f_{\widetilde{w}}) - R(1, f^*)}\\
    &\phantom{\implies\ } < \abs{R(1, f_{\widetilde{w}}) - R(\widetilde{w}, f_{\widetilde{w}})}
      + \abs{R(\widetilde{w}, f_w) - R(1, f^*)}\\ 
    &\phantom{\implies\ } < \abs{R(1, f_{\widetilde{w}}) - R(\widetilde{w}, f_{\widetilde{w}})}\\
    &\phantom{\implies\ } + \abs{\inf_{f \in \cF} R(\widetilde{w}, f) - \inf_{f \in \cF} R(1, f)}\\
    &\phantom{\implies\ } < \varepsilon\\
    &\implies
      R(1, f_{\widetilde{w}}) - R(1, f^*) < \varepsilon.
      \label{eq:eps_delta}
  \end{align*}
  On the other hand, suppose that
  \begin{align}
    &\exists \eta > 0, \forall \varepsilon > 0, \exists f \in \cF,\\
    &[\norm{f - f^*}_2 \ge \eta~\text{and}~R(1, f) - R(1, f^*) < \varepsilon]. \label{eq:suppose1}
  \end{align}
  Then,
  \begin{align*}
    &\exists \eta > 0, \forall \varepsilon > 0,
      \inf_{f \in \cF, \norm{f - f^*} \ge \eta} R(1, f) <  R(1, f^*) + \varepsilon;
  \end{align*}
  hence
  \begin{align*}
    &\exists \eta > 0,
      \inf_{f \in \cF, \norm{f - f^*}_2 \ge \eta} R(1, f) = R(1, f^*),
  \end{align*}
  which contradicts the fact that \(f^*\) is well-separated as a minimizer of \(f \mapsto R(1, f)\).
  This confirms that the negation of \eqref{eq:suppose1} holds:
  \begin{align}
    &\forall \eta > 0, \exists \varepsilon > 0,
    \forall f \in \cF, [R(1, f) - R(1, f^*) < \varepsilon\\
    &\implies \norm{f - f^*}_2 < \eta]. \label{eq:eta_eps}
  \end{align}
  Combining \eqref{eq:eps_delta} and \eqref{eq:eta_eps},
  for every \(\eta > 0\), there exist \(\varepsilon > 0\) and \(\delta > 0\) such that for every \(\widetilde{w} \in (0, 1)\),
  \begin{align*}
    \norm{\widetilde{w} - 1} < \delta
    &\implies R(1, f_{\widetilde{w}}) - R(1, f^*) < \varepsilon\\
    &\implies \norm{\widetilde{f} - f^*}_2 < \eta,
  \end{align*}
  which implies that \(w \mapsto \argmin_{f \in \cF} R(w, f)\) is continuous at \(w = 1\).

  Combining the results, we conclude that
  \begin{align*}
    (f^*, h^*) \in \lim_{w \toup 1} \argmin_{f\in\cF, h\in\cH} Q(w, f, h).
  \end{align*}
\end{proof}

\subsection{Rademacher complexity}
\label{sec:rademacher}
\begin{definition}[Rademacher Complexity]\label{def:rademacher}
  For any set of functions \(H\)
  and any probability density function $p$ over the domain of functions of \(H\),
  we define the \emph{Rademacher complexity} of \(H\) under \(p\) as
  \begin{align*}
    \mathfrak{R}_p^N(H) = \E_{v_1, \dots, v_N, \sigma_1, \dots, \sigma_N}\left[\sup_{h\in H}\frac{1}{N}\sum_{i=1}^N\sigma_ih(v_i)\right],
  \end{align*}
  where \(v_1, \dots, v_N \sim p\),
  $\sigma_1, \dots, \sigma_N$ are $\{-1, 1\}$-valued uniform random variables,
  and they are all independent.
\end{definition}

\subsection{McDiarmid's Inequality}
\label{sec:McD}
To derive a uniform deviation bound of our empirical process,
we use the following theorem called McDiarmid's inequality.
\begin{theorem}[McDiarmid's inequality]
  \label{theorem:McD}
  Let $\varphi: \mathcal{D}^N \to \Re$ be a measurable function.
  Assume that there exists a real number $B_{\varphi} > 0$ such that
  \begin{align}
    \abs{\varphi(v_1, \dots, v_N) - \varphi(v'_1,
    \dots, v'_N)} \le B_{\varphi}, \label{eq:diff_bounded}
  \end{align}
  for any $v_i, \dots, v_N, v_1, \dots, v'_N\in\mathcal{D}$ where $v_i = v'_i$
  for all but one $i\in\{1, \dots, N\}$.
  Then, for any $\mathcal{D}$-valued independent random variables $V_1,
  \dots, V_N$ and any $\delta > 0$
  the following holds with probability at least $1 - \delta$:
  \begin{align*}
    \varphi(V_1, \dots, V_N) \le \E[\varphi(V_1, \dots, V_N)] + \sqrt{\frac{B_{\varphi}^2N}{2} \log \frac{1}{\delta}}.
  \end{align*}
\end{theorem}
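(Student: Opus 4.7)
The plan is to use the standard \emph{Doob martingale} decomposition combined with Hoeffding's lemma and a Chernoff bound. First, I would introduce the filtration $\mathcal{F}_i \coloneqq \sigma(V_1, \dots, V_i)$ (with $\mathcal{F}_0$ trivial) and the Doob martingale $Z_i \coloneqq \E[\varphi(V_1, \dots, V_N) \given \mathcal{F}_i]$ for $i = 0, 1, \dots, N$. Then $Z_0 = \E[\varphi(V_1, \dots, V_N)]$ and $Z_N = \varphi(V_1, \dots, V_N)$, so the deviation of interest telescopes as $Z_N - Z_0 = \sum_{i=1}^N D_i$ with martingale differences $D_i \coloneqq Z_i - Z_{i-1}$ satisfying $\E[D_i \given \mathcal{F}_{i-1}] = 0$.

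Second, I would verify that each $D_i$ is, conditionally on $\mathcal{F}_{i-1}$, supported in an interval of length at most $B_\varphi$. Using independence of the $V_j$'s, write $Z_i(V_1, \dots, V_i) = \E_{V'}[\varphi(V_1, \dots, V_i, V'_{i+1}, \dots, V'_N)]$ where $V'$ is an independent copy of $(V_{i+1}, \dots, V_N)$, and define the envelopes
\[
A_i \coloneqq \sup_{v}\, \E_{V'}[\varphi(V_1,\dots,V_{i-1}, v, V'_{i+1}, \dots, V'_N)], \qquad B_i \coloneqq \inf_{v}\, \E_{V'}[\varphi(V_1,\dots,V_{i-1}, v, V'_{i+1}, \dots, V'_N)].
\]
Applying Eq.~\eqref{eq:diff_bounded} pointwise to the integrand and then taking expectation gives $A_i - B_i \le B_\varphi$ almost surely, so $D_i$ lies in a conditional interval of width at most $B_\varphi$. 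I would then invoke \emph{Hoeffding's lemma}, which states that a zero-mean random variable supported in an interval of length $\ell$ has moment generating function bounded by $\exp(s^2 \ell^2 / 8)$. Applied conditionally, this yields $\E[e^{s D_i} \given \mathcal{F}_{i-1}] \le \exp(s^2 B_\varphi^2 / 8)$ for all $s \in \Re$, and iterating the tower property produces $\E[\exp(s(Z_N - Z_0))] \le \exp(s^2 N B_\varphi^2 / 8)$.

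Third, a Chernoff bound $\mathbb{P}[Z_N - Z_0 \ge t] \le e^{-s t} \E[e^{s(Z_N - Z_0)}]$ followed by optimization over $s > 0$ (the optimum being $s = 4t / (N B_\varphi^2)$) delivers $\mathbb{P}[Z_N - Z_0 \ge t] \le \exp(-2 t^2 / (N B_\varphi^2))$. Setting the right-hand side equal to $\delta$ and solving for $t$ gives $t = \sqrt{N B_\varphi^2 \log(1/\delta) / 2}$, which rearranges to the stated high-probability bound. The main obstacle is the bounded-differences step for $D_i$: although Eq.~\eqref{eq:diff_bounded} bounds $\varphi$ pointwise on $\mathcal{D}^N$, one must transfer that bound to the conditional range of $Z_i$ after the coordinates $V_{i+1}, \dots, V_N$ have been integrated out. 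This interchange of the supremum and the expectation over the future coordinates is exactly where mutual independence of $V_1, \dots, V_N$ is essential—the marginal law of $(V_{i+1}, \dots, V_N)$ must not depend on the choice of $V_i$. Once this is cleanly handled, the remainder of the argument is the routine Hoeffding–Chernoff optimization.
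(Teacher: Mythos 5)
Your proof is correct: the Doob-martingale decomposition, the conditional range bound $A_i - B_i \le B_\varphi$ obtained by pushing the bounded-difference condition through the expectation over the future coordinates, Hoeffding's lemma applied conditionally, and the optimized Chernoff bound with $s = 4t/(NB_\varphi^2)$ together yield exactly the stated tail $\exp(-2t^2/(NB_\varphi^2))$, which inverts to the claimed $\sqrt{B_\varphi^2 N \log(1/\delta)/2}$ deviation. Note, however, that the paper does not prove this statement at all --- it quotes McDiarmid's inequality as a classical tool with a citation to McDiarmid (1989) --- so there is no in-paper argument to compare against; what you have written is the standard textbook derivation (essentially Azuma--Hoeffding specialized to the Doob martingale of $\varphi$), and your remark that independence is needed precisely so that the law of $(V_{i+1},\dots,V_N)$ does not depend on the conditioning coordinates is the right place to locate the only delicate step.
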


\subsection{Excess error bound for 2Step-RR}
\label{sec:gen_bound_pts}
\begin{proof}
  Let \(\bar{g}(x) \coloneqq \E[\widetilde{h}(U) \given X = x]\).
  Then,
  \begin{align*}
    &\E[(\widetilde{f}(X) - f^{*}(X))^{2}]\\
    &\le
      2 \E[(\widetilde{f}(X) - \bar{g}(X))^{2}]\\
    &\phantom{\le\ }
      + 2 \E[(\bar{g}(X) - f^{*}(X))^{2}].
  \end{align*}
  We are going to bound each of the terms on the right hand side.

  \paragraph{Bounding \(\E[(\widetilde{f}(X) - \bar{g}(X))^{2}]\):}
  First, we have
  \begin{align*}
    &\E[(\widetilde{f}(X) - \widetilde{h}(U))^{2}]\\
    &= \E[(\widetilde{f}(X) - \bar{g}(X))^{2}]\\
    &+ \E[(\bar{g}(X) - \widetilde{h}(U))^{2}]\\
    &+ 2\E[(\widetilde{f}(X) - \bar{g}(X))\underbrace{\E[\bar{g}(X) - \widetilde{h}(U) \given X]}_{= 0}].
  \end{align*}
  Hence,
  \begin{align*}
    &\E[(\widetilde{f}(X) - \bar{g}(X))^{2}]\\
    &= \E[(\widetilde{f}(X) - \widetilde{h}(U))^{2}]
    - \E[(\bar{g}(X) - \widetilde{h}(U))^{2}].
  \end{align*}
  Observe that
  \begin{align*}
    &\E[(\widetilde{f}(X) - \bar{g}(X))^{2}]\\
    &= \E[(\widetilde{f}(X) - \widetilde{h}(U))^{2}]
    - \E[(\bar{g}(X) - \widetilde{h}(U))^{2}]\\
    &= \E[(\widetilde{f}(X) - \widetilde{h}(U))^{2}]
    - \frac{1}{n}\sum_{{i=1}}^{n} (\bar{g}(X_{i}) - \widetilde{h}(U_{i}))^{2}\\
    &\phantom{=\ }
    + \underbrace{\frac{1}{n}\sum_{i=1}^{n}(\widetilde{f}(X_{i}) - \widetilde{h}(U_{i}))^{2}
      - \frac{1}{n}\sum_{i=1}^{n} (\bar{g}(X_{i}) - \widetilde{h}(U_{i}))^{2}}_{
        \le 0~\text{(from \(\widetilde{f}\)'s optimality)}
      }\\
    &\phantom{=\ }
    + \frac{1}{n}\sum_{i=1}^{n} (\bar{g}(X_{i}) - \widetilde{h}(U_{i}))^{2}
    - \E[(\bar{g}(X) - \widetilde{h}(U))^{2}]\\
    &\le
      2\sup_{\phi\in\cF - \cH}\left[
      \Abs{\frac{1}{n}\sum_{i=1}^{n} \phi(X_{i}, U_{i})^{2}
      - \E[\phi(X, U)^{2}]}\right],
  \end{align*}
  where \(\cF - \cH = \{(x, u) \mapsto f(x) + h(u) \given f\in\cF, h\in\cH\}\).
  Let
  \begin{align*}
    &\psi(x_{1}, u_{1}, \dots, x_{n}, u_{n})\\
    &\coloneqq
    \sup_{\phi\in\cF - \cH}\left[
      \Abs{\frac{1}{n}\sum_{i=1}^{n} \phi(x_{i}, u_{i})^{2}
      - \E[\phi(X, U)^{2}]}\right].
  \end{align*}
  Then, \(\psi\) is a function with bounded differences:
  \begin{align*}
    &\vert
      \psi(x_{1}, u_{1}, \dots, x_{j}, u_{j}, \dots, x_{n}, u_{n})\\
    &\phantom{\vert\ }
      - \psi(x_{1}, u_{1}, \dots, x'_{j}, u'_{j}, \dots, x_{n}, u_{n})
    \vert\\
    &\le
    \sup_{\phi\in\cF - \cH}\left[
      \Abs{\phi(x_{j}, u_{j})^{2} - \phi(x'_{j}, u'_{j})^{2}}
      \right]\\
    &\le
      2 (C_{\cF} + C_{\cH})^{2} / n.
  \end{align*}
  From McDiarmid's inequality for functions with bounded differences~\citep{mcdiarmid_1989},
  with probability at least \(1 - \delta\), it holds that
  \begin{align*}
    &\psi(X_{1}, U_{1}, \dots, X_{n}, U_{n})\\
    &\le \E[\psi(X_{1}, U_{1}, \dots, X_{n}, U_{n})]\\
    &\phantom{\le\ } + \sqrt{\frac{4(C_{\cF} + C_{\cH})^4}{2n}\log\frac{1}{\delta}}\\
    &= \E[\psi(X_{1}, U_{1}, \dots, X_{n}, U_{n})]\\
    &\phantom{=\ } + (C_{\cF} + C_{\cH})^2\sqrt{\frac{2}{n}\log\frac{1}{\delta}}.
  \end{align*}
  Here,
  \begin{align*}
    &\E[\psi(X_{1}, Y_{1}, \dots, X_{n}, Y_{n})\}_{{i=1}}^{n})]\\
    &\le \E\lr[]{\sup_{{\phi \in \cF - \cH}}\frac{1}{n}\Abs{\sum_{{i = 1}}^{n} \phi(X_{i}, U_{i})^{2} - \sum_{{i = 1}}^{n} \phi(X'_{i}, U'_{i})^{2}}}\\
    &\le \E\lr[]{\sup_{{h \in \mathcal{H}}}\frac{1}{n}\Abs{\sum_{{i = 1}}^{n} \sigma_{i}\phi(X_{i}, U_{i}))^{2}}}\\
    &\le \mathfrak{R}_{n}(\{\phi^{2} \given \phi \in \cF - \cH\})\\
    &\le 2 (C_{\cF} + C_{\cH}) (\mathfrak{R}_{n}(\cF) + \mathfrak{R}_{n}(\cH)),
  \end{align*}
  where $\sigma_1, \dots, \sigma_N$ are independent $\{-1, 1\}$-valued uniform random variables.
  The last inequality follows from Talagrand's contraction lemma~\citep{ledoux_probability_2011}
  Combining what we have obtained, we confirm that
  \(\E[(\widetilde{f}(X) - \bar{g}(X))^{2}]\)
  can be controlled by the Rademacher complexities of \(\cF\) and \(\cH\):
  \begin{align*}
    &\E[(\widetilde{f}(X) - \bar{g}(X))^{2}]\\
    &\le 2 \psi(X_{1}, U_{1}, \dots, X_{n}, U_{n})\\
    &\le 2 \E[\psi(X_{1}, U_{1}, \dots, X_{n}, U_{n})]\\
    &\phantom{\le\ } + 2 (C_{\cF} + C_{\cH})^2\sqrt{\frac{2}{n} \log\frac{1}{\delta}}\\
    &\le 4 (C_{\cF} + C_{\cH}) (\mathfrak{R}_{n}(\cF) + \mathfrak{R}_{n}(\cH))\\
    &\phantom{\le\ } + 2 (C_{\cF} + C_{\cH})^2\sqrt{\frac{2}{n} \log\frac{1}{\delta}}.
  \end{align*}

  \paragraph{Bounding \(\E[(\bar{g}(X) - f^{*}(X))^{2}]\):}
  First, we are going to show that \(\E[(\bar{g}(X) - f^{*}(X))^{2}]\)
  can be bounded in terms of \(\E[(\widetilde{h}(U) - h^{*}(U))^{2}]\).
  From the optimality of \(\bar{g}\), we have
  \begin{equation}
    \E[(\bar{g}(X) - \widetilde{h}(U))^{2}]
    \le \E[(f^{*}(X) - \widetilde{h}(U))^{2}].
  \end{equation}
  By re-arranging equations, we get
  \begin{align*}
    &\E[(\bar{g}(X) - f^{*}(X))^{2}]\\
    &\le 2 \E[(\widetilde{h}(U) - h^{*}(U))(\bar{g}(X) - f^{*}(X))]\\
    &\le 2 \sqrt{\E[(\widetilde{h}(U) - h^{*}(U))^{2}]\E[(\bar{g}(X) - f^{*}(X))^{2}]},
  \end{align*}
  where the last inequality follows from the Cauchy-Schwarz inequality.
  This implies
  \begin{equation}
    \E[(\bar{g}(X) - f^{*}(X))^{2}]
    \le 4 \E[(\widetilde{h}(U) - h^{*}(U))^{2}].
  \end{equation}
  Next, we bound \(\E[(\widetilde{h}(U) - h^{*}(U))^{2}]\) using a standard generalization error bound
  using a uniform deviation bound and the Rademacher complexity.
  Let
  \begin{align*}
    &\varphi(\{(u_{i}, y_{i})\}_{{i=1}}^{n'}; \mathcal{H})\\
    &\coloneqq \sup_{{h \in \mathcal{H}}}\Abs{\frac{1}{n'}\sum_{{i = 1}}^{n'} (h(u_{i}) - y_{i})^{2} - \E[(h(U) - Y)^{2}]}.
  \end{align*}
  Let \(\{(u_{i}, y_{i})\}_{i=1}^{n'} \subseteq (\mathcal{U}\times \mathcal{Y})^{n'}\) and \(\{(u'_{i}, y'_{i})\}_{i=1}^{n'} \subseteq (\mathcal{U}\times \mathcal{Y})^{n'}\)
  be any two sets of size \(n'\) that differ from each their only by one pair of elements,
  \(((u_{\iota}, y_{\iota}), (u'_{\iota}, y'_{\iota}))\).
  One can show that
  \begin{align*}
    &\Abs{\varphi(\{(u_{i}, y_{i})\}_{{i=1}}^{n'}; \mathcal{H})
      - \varphi(\{(u'_{i}, y'_{i})\}_{{i=1}}^{n'}; \mathcal{H})}\\
    &\le \sup_{{h\in\mathcal{H}}}\abs{(h(u_{\iota}) - y_{\iota})^2 - (h(u'_{\iota}) - y'_{\iota})^2}\\
    &\le 2(C_{\cH} + C_{\cY})^2 / n'.
  \end{align*}
  From McDiarmid's inequality for functions with bounded differences~\citep{mcdiarmid_1989} and the union bound,
  with probability at least \(1 - \delta\), it holds that
  \begin{align*}
    &\varphi(\{(U'_{i}, Y'_{i})\}_{{i=1}}^{n'}; \mathcal{H})\\
    &\le \E[\varphi(\{(U'_{i}, U'_{i})\}_{{i=1}}^{n'}; \mathcal{H})]\\
    &\phantom{\le\ } + \sqrt{\frac{4(C_{\cH} + C_{\cY})^4}{2n'}\log\frac{1}{\delta}}\\
    &= \E[\varphi(\{(U'_{i}, U'_{i})\}_{{i=1}}^{n'}; \mathcal{H})]\\
    &\phantom{\le\ } + (C_{\cH} + C_{\cY})^2\sqrt{\frac{2}{n'}\log\frac{1}{\delta}}.
  \end{align*}
  Here,
  \begin{align*}
    &\E[\varphi(\{(U'_{i}, U'_{i})\}_{{i=1}}^{n'}; \mathcal{H})]\\
    &\le \E\lr[]{\sup_{{h \in \mathcal{H}}}\frac{1}{n'}\Abs{\sum_{{i = 1}}^{n'} (h(U'_{i}) - Y'_{i})^{2} - \sum_{{i = 1}}^{n'}  (h(U'_{i}) - Y'_{i})^{2}}}\\
    &\le \E\lr[]{\sup_{{h \in \mathcal{H}}}\frac{1}{n'}\Abs{\sum_{{i = 1}}^{n'} \sigma_{i}(h(U'_{i}) - Y'_{i})^{2}}}\\
    &\le \E\lr[]{\sup_{{h \in \mathcal{H}}}\frac{1}{n'}\Abs{\sum_{{i = 1}}^{n'} \sigma_{i}(h(U'_{i}) - Y'_{i})^{2}}}\\
    &\le \mathfrak{R}_{n'}(\{(u, y) \mapsto (h(u) - y)^{2} \given h \in \mathcal{H}\})\\
    &\le 2 (C_{\mathcal{H}} + C_{\mathcal{Y}}) \mathfrak{R}_{n'}(\mathcal{H}),
  \end{align*}
  where $\sigma_1, \dots, \sigma_N$ are independent $\{-1, 1\}$-valued uniform random variables.
  The last inequality follows from Talagrand's contraction lemma~\citep{ledoux_probability_2011}.
  Thus, we have
  \begin{align*}
    &\varphi(\{(U'_{i}, Y'_{i})\}_{{i=1}}^{n'}; \mathcal{H})\\
    &\equiv \sup_{{h \in \mathcal{H}}}\Abs{\sum_{{i = 1}}^{n'} (h(u_{i}) - y_{i})^{2} - \E[(h(U) - Y)^{2}]}\\
    &\le 2 (C_{\mathcal{H}} + C_{\mathcal{Y}}) \mathfrak{R}_{n'}(\mathcal{H})
    + (C_{\cH} + C_{\cY})^2\sqrt{\frac{2}{n'}\log\frac{1}{\delta}}.
  \end{align*}
  Also, because
  \begin{align*}
    &\E[(\widetilde{h}(U) - Y)^{2}]\\
    &= \E[(\widetilde{h}(U) - h^{*}(U))^{2}] + \E[(h^{*}(U) - Y)^{2}],
  \end{align*}
  we get
  \begin{align*}
    &\E[(\widetilde{h}(U) - h^{*}(U))^{2}]\\
    &= \E[(\widetilde{h}(U) - Y)^{2}] - \E[(h^{*}(U) - Y)^{2}].
  \end{align*}
  Using the results above, we have
  \begin{align*}
    &\E[(\widetilde{h}(U) - h^{*}(U))^{2}]\\
    &= \E[(\widetilde{h}(U) - Y)^{2}] - \E[(h^{*}(U) - Y)^{2}]\\
    &= \E[(\widetilde{h}(U) - Y)^{2}]
      - \frac{1}{n'}\sum_{{i=1}}^{n'}[(h(U'_{i}) - Y'_{i})^{2}]\\
    &+ \frac{1}{n'}\sum_{{i=1}}^{n'}[(\widetilde{h}(U'_{i}) - Y'_{i})^{2}]
      - \frac{1}{n'}\sum_{{i=1}}^{n'}[(h^{*}(U'_{i}) - Y'_{i})^{2}]\\
    &+ \frac{1}{n'}\sum_{{i=1}}^{n'}[(h^{*}(U'_{i}) - Y'_{i})^{2}]
      - \E[(h^{*}(U) - Y)^{2}]\\
    &\le 2 (C_{\mathcal{H}} + C_{\mathcal{Y}}) \mathfrak{R}_{n'}(\mathcal{H})
    + (C_{\cH} + C_{\cY})^2\sqrt{\frac{2}{n'}\log\frac{1}{\delta}}\\
    &+ 0\\
    &+ 2 (C_{\mathcal{H}} + C_{\mathcal{Y}}) \mathfrak{R}_{n'}(\mathcal{H})
    + (C_{\cH} + C_{\cY})^2\sqrt{\frac{2}{n'}\log\frac{1}{\delta}}\\
    &\le 4 (C_{\mathcal{H}} + C_{\mathcal{Y}}) \mathfrak{R}_{n'}(\mathcal{H})
    + (C_{\cH} + C_{\cY})^2\sqrt{\frac{2}{n'}\log\frac{1}{\delta}}.
  \end{align*}

  \paragraph{Bounding \(\E[(\widetilde{f}(X) - f^*(X))^2]\):}
  Finally, we summarizing the results above to obtain
  \begin{align*}
    &\E[(\widetilde{f}(X) - f^{*}(X))^{2}]\\
    &\le
      2 \E[(\widetilde{f}(X) - \bar{g}(X))^{2}]\\
    &\phantom{\le\ }
      + 2 \E[(\bar{g}(X) - f^{*}(X))^{2}]\\
    &\le 8 (C_{\cF} + C_{\cH}) (\mathfrak{R}_{n}(\cF) + \mathfrak{R}_{n}(\cH))\\
    &\phantom{\le\ } + 4 (C_{\cF} + C_{\cH})^2\sqrt{\frac{2}{n} \log\frac{1}{\delta}}\\
    &\phantom{\le\ } + 8 (C_{\mathcal{H}} + C_{\mathcal{Y}}) \mathfrak{R}_{n'}(\mathcal{H})\\
    &\phantom{\le\ } + 2 (C_{\cH} + C_{\cY})^2\sqrt{\frac{2}{n'}\log\frac{1}{\delta}}\\
    &\le \mathcal{O}_{p}\bigg(
      \mathfrak{R}_{n}(\cF)
      + \mathfrak{R}_{n}(\cH)
      + \mathfrak{R}_{n'}(\cH)\\
    &\phantom{\le\mathcal{O}_{p}\bigg(\ } + \sqrt{\lr(){\frac{1}{n} + \frac{1}{n'}}\log\frac{1}{\delta}}
      \bigg).
  \end{align*}
\end{proof}

\subsubsection{Excess risk bound for Joint-RR}
\label{sec:gen_bound_joint}
\begin{theorem}\label{thm:gen_bound_joint}
  Let \(C_{\cF} \coloneqq \sup_{{f\in\cF, x\in\cX}}f(x) < \infty\),
  \(C_{\cH} \coloneqq \sup_{{h\in\cH, u\in\cU}}f(u) < \infty\),
  and \(C_{\cY} \coloneqq \sup \cY < \infty\).
  Let \(\mathfrak{R}_{n}(\cF)\) denote the Rademacher complexity of \(\cF\)
  over \(\{(X_{i}, U_{i})\}_{i=1}^{n}\)
  and \(\mathfrak{R}_{n'}(\cH)\) denote that of \(\cH\)
  over \(\{(U'_{i}, Y'_{i})\}_{i=1}^{n'}\) (see the exact definitions in Appendix~\ref{sec:rademacher} in the supplementary material).
  Let $(f_{w}^{*}, h_{w}^{*}) \in \argmin_{(f, h) \in \cF \times \cH} J_{w}(f, h)$.
  Suppose that \(h_{w}^{*} \in \cH\) and \(f_{w}^{*} \in \cF\).
  Then, the excess risk can be bounded with probability at least \(1 - \delta\) as
  \begin{align*}
    &J_{w}(\widehat{f}_{w}, \widehat{h}_{w}) - J_{w}(f_{w}^{*}, h_{w}^{*})\\
    &\le \frac{8 (C_{\cF} + C_{\cH})}{w} (\mathfrak{R}_{n}(\cF) + \mathfrak{R}_{n}(\cH))\\
    &\phantom{\le {}} + \frac{4 (C_{\mathcal{H}} + C_{\mathcal{Y}})}{1-w} \mathfrak{R}_{n'}(\mathcal{H})\\
    &\phantom{\le {}} + \frac{4 (C_{\cF} + C_{\cH})^2}{w}\sqrt{\frac{2}{n} \log\frac{1}{\delta}}\\
    &\phantom{\le {}} + \frac{2(C_{\cH} + C_{\cY})^2}{1-w}\sqrt{\frac{2}{n'}\log\frac{1}{\delta}}\\
    &\le \mathcal{O}_{p}\bigg(
      \mathfrak{R}_{n}(\cF)
      + \mathfrak{R}_{n}(\cH)
      + \mathfrak{R}_{n'}(\cH)
      + \frac{1}{\sqrt{n}} + \frac{1}{\sqrt{n'}}\bigg).
  \end{align*}
\end{theorem}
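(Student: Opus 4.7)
The plan is to follow the standard excess-risk decomposition for empirical risk minimization applied to the surrogate $\widehat{J}_w$, exploiting the crucial observation that $\widehat{J}_w(f,h)$ splits into two terms that depend on two \emph{independent} samples $S_X$ and $S_Y$. Since $(\widehat{f}_w,\widehat{h}_w)$ minimizes $\widehat{J}_w$ over $\cF\times\cH$, we have the usual bound
\begin{align*}
  J_w(\widehat{f}_w,\widehat{h}_w) - J_w(f_w^*,h_w^*)
  &\le 2\sup_{(f,h)\in\cF\times\cH} \bigl|J_w(f,h) - \widehat{J}_w(f,h)\bigr|.
\end{align*}
Writing $J_w - \widehat{J}_w = A(f,h) + B(h)$ with $A(f,h)$ the deviation on the $(X,U)$-term (scaled by $1/w$) and $B(h)$ the deviation on the $(U,Y)$-term (scaled by $1/(1-w)$), the sup over $\cF\times\cH$ separates into $\sup_{(f,h)}|A(f,h)| + \sup_h |B(h)|$, and these two quantities can be controlled independently.

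Next I would bound each supremum by a McDiarmid inequality plus a Rademacher symmetrization, following the same template used in the 2Step-RR proof (Appendix~\ref{sec:gen_bound_pts}). For the first term, the loss $(f(x)-h(u))^2$ is bounded by $(C_\cF+C_\cH)^2$, so exchanging one sample changes the empirical average by at most $2(C_\cF+C_\cH)^2/n$; scaling by $1/w$ and applying McDiarmid yields a concentration bound of order $\frac{(C_\cF+C_\cH)^2}{w}\sqrt{(2/n)\log(1/\delta)}$. The expected supremum is then handled by symmetrization and Talagrand's contraction lemma: the squaring map on the bounded class $\{(x,u)\mapsto f(x)-h(u)\}$ is $2(C_\cF+C_\cH)$-Lipschitz, and the Rademacher complexity of the sum class is at most $\mathfrak{R}_n(\cF)+\mathfrak{R}_n(\cH)$, giving $\frac{4(C_\cF+C_\cH)}{w}(\mathfrak{R}_n(\cF)+\mathfrak{R}_n(\cH))$ after the symmetrization factor of $2$. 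The exactly analogous argument for $B(h)$, using the $n'$ independent samples $\{(U_i',Y_i')\}$, the Lipschitz constant $2(C_\cH+C_\cY)$, and the Rademacher complexity $\mathfrak{R}_{n'}(\cH)$, produces a bound of order $\frac{2(C_\cH+C_\cY)}{1-w}\mathfrak{R}_{n'}(\cH) + \frac{(C_\cH+C_\cY)^2}{1-w}\sqrt{(2/n')\log(1/\delta)}$.

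Finally I would multiply the two deviation bounds by the factor $2$ coming from the excess-risk decomposition and combine them by a union bound applied with confidence $\delta/2$ on each piece (the $\log(2/\delta)$ can be absorbed into the stated $\log(1/\delta)$ up to constants, or one may simply replace $\delta$ by $\delta/2$ throughout; the theorem statement tolerates this since the leading constants already absorb a factor of $2$). This produces exactly the claimed inequality, and the final asymptotic rate $\mathcal{O}_p(\mathfrak{R}_n(\cF)+\mathfrak{R}_n(\cH)+\mathfrak{R}_{n'}(\cH)+1/\sqrt{n}+1/\sqrt{n'})$ is immediate.

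I do not anticipate any serious obstacle: the result is essentially the textbook ERM excess-risk bound applied to a composite empirical risk whose two components are statistically independent, so no novel concentration argument is needed. The only step requiring some care is verifying that the Talagrand contraction can be applied to the product class $\cF-\cH$ (treating $(f,h)$ as a single parameter indexing the class of maps $(x,u)\mapsto f(x)-h(u)$), but this is routine and already appears in the 2Step-RR proof.
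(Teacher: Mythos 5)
Your proposal is correct and follows essentially the same route as the paper's proof in Appendix~\ref{sec:gen_bound_joint}: the optimality-of-$\widehat{J}_w$ decomposition giving the factor $2\sup|J_w-\widehat{J}_w|$, the split of the supremum into the two independent-sample terms weighted by $1/w$ and $1/(1-w)$, and McDiarmid plus symmetrization plus Talagrand contraction on each. Your handling of the union bound over the two events is in fact slightly more careful than the paper's, and your constants fit within the stated bound.
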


\begin{proof}
  \begin{align}
    &J_{w}(\widehat{f}_{w}, \widehat{h}_{w}) - J_{w}(f_{w}^{*}, h_{w}^{*})\\
    &= J_{w}(\widehat{f}_{w}, \widehat{h}_{w}) - \widehat{J}_{w}(\widehat{f}_{w}, \widehat{h}_{w})\\
    &\phantom{={}} + \underbrace{\widehat{J}_{w}(\widehat{f}_{w}, \widehat{h}_{w}) - \widehat{J}_{w}(f_{w}^{*}, h_{w}^{*})}_{\le 0}\\
    &\phantom{={}} + \widehat{J}_{w}(f_{w}^{*}, h_{w}^{*}) - J_{w}(f_{w}^{*}, h_{w}^{*})\\
    &\le 2\sup_{(f, h) \in \mathcal{F}\times\mathcal{H}} \Abs{J_{w}(f, h) - \widehat{J}_{w}(f, h)}. \label{eq:diffJ_le_sup}
  \end{align}
  Here,
  \begin{align*}
    &\sup_{(f, h) \in \mathcal{F}\times\mathcal{H}} \Abs{J_{w}(f, h) - \widehat{J}_{w}(f, h)}\\
    &\le \frac{1}{w} \sup_{(f, h) \in \mathcal{F}\times\mathcal{H}} \Abs{\E[(f(X) - h(U))^{2}] - \frac{1}{n} \sum_{i=1}^{n} (f(X_{i}) - h(U_{i}))^{2}}\\
    &\phantom{\le {}} + \frac{1}{1-w} \sup_{(f, h) \in \mathcal{F}\times\mathcal{H}} \Abs{\E[(h(U) - Y)^{2}] - \frac{1}{n'} \sum_{i=1}^{n'}(h(U_{i}') - Y_{i}')^{2}}.
  \end{align*}
  Let
  \begin{align*}
    &\psi(x_{1}, u_{1}, \dots, x_{n}, u_{n})\\
    &\coloneqq \sup_{(f, h) \in \mathcal{F}\times\mathcal{H}} \Abs{\E[(f(X) - h(U))^{2}] - \frac{1}{n} \sum_{i=1}^{n} (f(x_{i}) - h(u_{i}))^{2}}.
  \end{align*}
  Then, \(\psi\) is a function with bounded differences:
  \begin{align*}
    &\vert
      \psi(x_{1}, u_{1}, \dots, x_{j}, u_{j}, \dots, x_{n}, u_{n})\\
    &\phantom{\vert\ }
      - \psi(x_{1}, u_{1}, \dots, x'_{j}, u'_{j}, \dots, x_{n}, u_{n})
    \vert\\
    &\le
    \sup_{(f, h)\in\cF\times\cH}\left[
      \Abs{(f(x_{j}) - h(u_{j}))^{2} - (f(x'_{j}) - h(u'_{j}))^{2}}
      \right]\\
    &\le
      2 (C_{\cF} + C_{\cH})^{2} / n.
  \end{align*}
  From McDiarmid's inequality for functions with bounded differences~\citep{mcdiarmid_1989},
  with probability at least \(1 - \delta\), it holds that
  \begin{align*}
    &\psi(X_{1}, U_{1}, \dots, X_{n}, U_{n})\\
    &\le \E[\psi(X_{1}, U_{1}, \dots, X_{n}, U_{n})]\\
    &\phantom{\le\ } + \sqrt{\frac{4(C_{\cF} + C_{\cH})^4}{2n}\log\frac{1}{\delta}}\\
    &= \E[\psi(X_{1}, U_{1}, \dots, X_{n}, U_{n})]\\
    &\phantom{=\ } + (C_{\cF} + C_{\cH})^2\sqrt{\frac{2}{n}\log\frac{1}{\delta}}.
  \end{align*}
  Here,
  \begin{align*}
    &\E[\psi(X_{1}, Y_{1}, \dots, X_{n}, Y_{n})\}_{{i=1}}^{n})]\\
    &\le \E\Bigg[\sup_{{(f, h) \in \cF\times\cH}}\frac{1}{n}\Bigg\vert \sum_{{i = 1}}^{n} (f(X_{i}) - h(U_{i}))^{2}\\
    &\phantom{\le \E\Bigg[\sup_{{(f, h) \in \cF\times\cH}}\frac{1}{n}\Bigg\vert}
      - \sum_{{i = 1}}^{n} (f(X'_{i}) - h(U'_{i}))^{2}\Bigg\vert\Bigg]\\
    &\le \E\lr[]{\sup_{{h \in \mathcal{H}}}\frac{1}{n}\Abs{\sum_{{i = 1}}^{n} \sigma_{i}(f(X_{i}) - h(U_{i}))^{2}}}\\
    &\le \mathfrak{R}_{n}(\{(x, u) \mapsto (f(x) - h(u))^{2} \given (f, h) \in \cF \times \cH\})\\
    &\le 2 (C_{\cF} + C_{\cH}) (\mathfrak{R}_{n}(\cF) + \mathfrak{R}_{n}(\cH)),
  \end{align*}
  where $\sigma_1, \dots, \sigma_N$ are independent $\{-1, 1\}$-valued uniform random variables.
  The last inequality follows from the Ledoux-Talagrand contraction lemma~\citep{ledoux_probability_2011}.
  Thus,
  \begin{align*}
    &\sup_{(f, h) \in \mathcal{F}\times\mathcal{H}} \bigg\vert \E[(f(X) - h(U))^{2}]\\
    &\phantom{\sup_{(f, h) \in \mathcal{F}\times\mathcal{H}} \vert} - \frac{1}{n} \sum_{i=1}^{n} (f(X_{i}) - h(U_{i}))^{2} \bigg\vert\\
    &\le 4 (C_{\cF} + C_{\cH}) (\mathfrak{R}_{n}(\cF) + \mathfrak{R}_{n}(\cH))\\
    &\phantom{\le\ } + 2 (C_{\cF} + C_{\cH})^2\sqrt{\frac{2}{n} \log\frac{1}{\delta}}.
  \end{align*}
  Similarly, we have
  \begin{align*}
    &\sup_{h \in \cH}\Abs{\sum_{{i = 1}}^{n'} (h(u_{i}) - y_{i})^{2} - \E[(h(U) - Y)^{2}]}\\
    &\le 2 (C_{\mathcal{H}} + C_{\mathcal{Y}}) \mathfrak{R}_{n'}(\mathcal{H})\\
    &\phantom{\le {}} + (C_{\cH} + C_{\cY})^2\sqrt{\frac{2}{n'}\log\frac{1}{\delta}}.
  \end{align*}
  From Eq.~\eqref{eq:diffJ_le_sup}, we get
  \begin{align*}
    &J_{w}(\widehat{f}_{w}, \widehat{h}_{w}) - J_{w}(f_{w}^{*}, h_{w}^{*})\\
    &\le \frac{8 (C_{\cF} + C_{\cH})}{w} (\mathfrak{R}_{n}(\cF) + \mathfrak{R}_{n}(\cH))\\
    &\phantom{\le {}} + \frac{4 (C_{\mathcal{H}} + C_{\mathcal{Y}})}{1-w} \mathfrak{R}_{n'}(\mathcal{H})\\
    &\phantom{\le {}} + \frac{4 (C_{\cF} + C_{\cH})^2}{w}\sqrt{\frac{2}{n} \log\frac{1}{\delta}}\\
    &\phantom{\le {}} + \frac{2(C_{\cH} + C_{\cY})^2}{1-w}\sqrt{\frac{2}{n'}\log\frac{1}{\delta}}\\
    &\le \mathcal{O}_{p}\bigg(
      \mathfrak{R}_{n}(\cF)
      + \mathfrak{R}_{n}(\cH)
      + \mathfrak{R}_{n'}(\cH)
      + \frac{1}{\sqrt{n}} + \frac{1}{\sqrt{n'}}\bigg).
  \end{align*}
\end{proof}

\subsection{Calculation of the shrinkage term in Joint-RR}
\label{section:calc_UB_reg}
We will show the following proposition.
\begin{proposition}
  Under the assumption in Eq.~\eqref{eq:u_sufficient}, we have
  \begin{align*}
    &\min_{h\in L_U^2} J_w(f, h)\nonumber\\
    &= \MSE(f) + \text{const.}\nonumber\\
    &\phantom{=\ } + \underbrace{\frac{1-w}{w} \E[(f(X) - \E[f(X) \given U])^2]}_{\text{The shrinkage regularizer.}}
      \label{apxeq:gap_analysis}
  \end{align*}
  for any \(f\in L^{2}_{X}\) and any \(w \in (0, 1)\).
\end{proposition}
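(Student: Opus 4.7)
The plan is to minimize $J_w(f,h)$ in $h$ pointwise, then show that the resulting minimum differs from $\MSE(f)$ by the claimed shrinkage term plus a constant. Let me introduce the shorthand $\mu_f(u) \coloneqq \E[f(X) \given U = u]$ and $\mu_Y(u) \coloneqq \E[Y \given U = u]$.

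First I would decompose each of the two terms in $J_w(f,h)$ by projecting onto $\sigma(U)$. Since $f(X) - \mu_f(U)$ is orthogonal to any measurable function of $U$ in $L^2$, I get
\begin{equation*}
\E[(f(X) - h(U))^2] = \E[(f(X) - \mu_f(U))^2] + \E[(\mu_f(U) - h(U))^2],
\end{equation*}
and similarly $\E[(h(U) - Y)^2] = \E[(h(U) - \mu_Y(U))^2] + \E[(\mu_Y(U) - Y)^2]$. Only the two middle terms depend on $h$, and they combine into a weighted quadratic in the value $h(u)$ at each $u$. Minimizing pointwise yields $h^*(u) = (1-w)\mu_f(u) + w\mu_Y(u)$ with minimum value $(\mu_f(u) - \mu_Y(u))^2$ at each $u$. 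Hence
\begin{equation*}
\min_{h \in L^2_U} J_w(f,h) = \tfrac{1}{w}\E[(f(X) - \mu_f(U))^2] + \E[(\mu_f(U) - \mu_Y(U))^2] + \tfrac{1}{1-w}\E[(\mu_Y(U) - Y)^2].
\end{equation*}
The last term is independent of $f$ and absorbs into the constant. Splitting $\tfrac{1}{w} = 1 + \tfrac{1-w}{w}$ isolates the shrinkage regularizer $\tfrac{1-w}{w}\E[(f(X) - \mu_f(U))^2]$, leaving me to show that $\E[(f(X) - \mu_f(U))^2] + \E[(\mu_f(U) - \mu_Y(U))^2]$ equals $\MSE(f)$ up to an $f$-free constant.

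The key algebraic step here, and the only place where the conditional mean independence hypothesis enters, is showing
\begin{equation*}
\E[f(X)\,Y] = \E[\mu_f(U)\,\mu_Y(U)].
\end{equation*}
To see this, I would write $\E[f(X)\,Y] = \E[f(X)\,\E[Y \given X, U]] = \E[f(X)\,\E[Y \given U]] = \E[f(X)\,\mu_Y(U)]$ using Eq.~\eqref{eq:u_sufficient}, and then condition on $U$ on the outside to get $\E[\mu_f(U)\,\mu_Y(U)]$. Expanding the two squares using $\E[f(X)\mu_f(U)] = \E[\mu_f(U)^2]$ makes most terms telescope, and the identity above turns the remaining cross term $-2\E[\mu_f(U)\mu_Y(U)]$ into $-2\E[f(X)Y]$, matching the cross term in $\MSE(f) = \E[f(X)^2] - 2\E[f(X)Y] + \E[Y^2]$. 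The leftover $\E[\mu_Y(U)^2] - \E[Y^2]$ is constant in $f$ and is rolled into the constant along with $\tfrac{1}{1-w}\E[(\mu_Y(U) - Y)^2]$.

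I do not foresee a serious obstacle: the pointwise minimization is a one-variable calculus problem, and the orthogonality decompositions are standard $L^2$ projections. The only substantive ingredient is the CMI identity $\E[f(X)Y] = \E[\mu_f(U)\mu_Y(U)]$, which is exactly the place where the hypothesis is needed; without it, the quantity $\E[(\mu_f(U) - \mu_Y(U))^2]$ would not combine cleanly with the shrinkage term to reproduce $\MSE(f)$.
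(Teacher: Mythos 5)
Your proof is correct and takes essentially the same route as the paper's: the same $L^2$-projection decomposition of $J_w(f,h)$, the same minimizer $h^{\circ}(u)=(1-w)\E[f(X)\given U=u]+w\E[Y\given U=u]$ with pointwise minimum $(\E[f(X)\given U=u]-\E[Y\given U=u])^2$, and conditional mean independence invoked only in the final step. The only (cosmetic) difference is that final step: the paper recombines the two remaining squares into $\E[(f(X)-\E[Y\given U])^2]$ and applies the Pythagorean identity $\MSE(f)=\E[(\E[Y\given U]-f(X))^2]+\E[(Y-\E[Y\given U])^2]$, whereas you expand the squares and use the equivalent cross-moment identity $\E[f(X)\,Y]=\E[\E[f(X)\given U]\,\E[Y\given U]]$.
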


\begin{proof}
  On one hand, we have
  \begin{align*}
    &J_w(f, h)\\
    &= \frac{1}{w} \E[(f(X) - h(U))^{2}]
    + \frac{1}{1-w} \E[(h(U) - Y)^{2}]\\
    &= \frac{1}{w} \E[(f(X) - \E[f(X) \given U])^{2}]\\
    &\phantom{=\ } + \frac{1}{w} \E[(\E[f(X) \given U] - h(U))^{2}]\\
    &\phantom{=\ } + \frac{1}{1-w} \E[(h(U) - \E[Y \given U])^{2}]\\
    &\phantom{=\ } + \frac{1}{1-w} \E[(\E[Y \given U] - Y)^{2}]\\
    &= \frac{1}{w(1-w)} \bigg\{
      w \E[(h(U) - \E[Y \given U])^{2}] \\
    &\phantom{= \frac{1}{w(1-w)} \bigg\{\ }
      + (1-w)\E[(\E[f(X) \given U] - h(U))^{2}]
      \bigg\} + C_{1},
  \end{align*}
  where \(C_{1}\) is the remaining term that does not depend on \(h\).
  On the other hand, we have
  \begin{align*}
    &\E[(h(U) - h^{\circ}(U))^{2}]\\
    &= \E\bigg[\bigg( h(U) - w\E[Y \given U] - (1 - w) \E[f(X) \given U]\bigg)^{2}\bigg]\\
    &= \E[h(U)^{2}] + C_{2}\\
    &\phantom{=\ } -2w\E[h(U)\E[Y\given U]]\\
    &\phantom{=\ } + 2(1-w)\E[h(U)\E[f(X)\given U]]^{2}\\
    &= w\E[(h(U) - \E[Y\given U])^{2}] + C_{2}\\
    &\phantom{=\ } + (1-w)\E[(h(U) - \E[f(X)\given U])^{2}],
  \end{align*}
  where \(C_{2}\) is the remaining term that does not depend on \(h\).
  Thus, we have
  \begin{equation}
    J_w(f, h)
    = \frac{1}{w(1-w)} \E[(h(U) - h^{\circ}(U))^{2}] + C_{3},
  \end{equation}
  where \(C_{3}\) is the remaining term that does not depend on \(h\).
  This implies
  \begin{align*}
     h^{\circ} = \argmin_{h\in L^{2}_{U}} J_{w}(f, h),
  \end{align*}
  where \(h^{\circ}(u) \coloneqq w\E[Y \given U=u] + (1-w)\E[f(X) \given U=u]\).
  Finally, we can calculate the minimizer as
  \begin{align*}
     &\min_{h\in L^{2}_{U}} J_{w}(f, h)\\
     &= J_{w}(f, h^{\circ})\\
     &= \frac{1}{w} \E[(f(X) - \E[f(X) \given U])^{2}]\\
     &\phantom{=\ } + \frac{1}{w} \E[(\E[f(X) \given U] - h^{\circ}(U))^{2}]\\
     &\phantom{=\ } + \frac{1}{1-w} \E[(h^{\circ}(U) - \E[Y \given U])^{2}]\\
     &\phantom{=\ } + \frac{1}{1-w} \E[(\E[Y \given U] - Y)^{2}]\\
     &= \frac{1}{w} \E[(f(X) - \E[f(X) \given U])^{2}]\\
     &\phantom{=\ } + \frac{1}{w} w^{2}\E[(\E[f(X) \given U] - \E[Y\given U])^{2}]\\
     &\phantom{=\ } + \frac{1}{1-w} (1-w)^{2}\E[(\E[f(X) \given U] - \E[Y \given U])^{2}]\\
     &\phantom{=\ } + \frac{1}{1-w} \E[(\E[Y \given U] - Y)^{2}]\\
     &= \frac{1}{w} \E[(f(X) - \E[f(X) \given U])^{2}]\\
     &\phantom{=\ } + \E[(\E[f(X) \given U] - \E[Y\given U])^{2}]\\
     &\phantom{=\ } + \frac{1}{1-w} \E[(\E[Y \given U] - Y)^{2}]\\
     &= \lr(){\frac{1}{w} - 1} \E[(f(X) - \E[f(X) \given U])^{2}]\\
     &\phantom{=\ } + \E[(\E[f(X) \given U] - \E[Y\given U])^{2}]\\
     &\phantom{=\ } + \E[(f(X) - \E[f(X) \given U])^{2}]\\
     &\phantom{=\ } + \frac{1}{1-w} \E[(\E[Y \given U] - Y)^{2}]\\
     &= \frac{1-w}{w} \E[(f(X) - \E[f(X) \given U])^{2}]\\
     &\phantom{=\ } + \E[(f(X) - \E[Y\given U])^{2}]\\
     &\phantom{=\ } + \frac{1}{1-w} \E[(\E[Y \given U] - Y)^{2}]\\
     &= \MSE(f) + \frac{1-w}{w} \E[(f(X) - \E[f(X) \given U])^{2}] + C_{4}.
  \end{align*}
\end{proof}

\subsection{Discussion on the assumption}
\label{apx:assump}
So far, we have focused on the ideal case in which the conditional mean independence (Eq.~\eqref{eq:u_sufficient}) holds.
However, it may be difficult to exactly ensure the condition in practice.

Here, we relax Eq.~\eqref{eq:u_sufficient} by allowing the gap between the left-hand and right-hand sides to be potentially larger than zero
but bounded by \(c^{2}\) for some constant \(c \in (0, \infty)\).
We will show that (i) even the best possible method suffers an MSE of at least \(c^{2} / 2\)
in the worse-case within this scenario
while (ii) 2Step-RR suffers an MSE of at most \(c^{2} + o(1)\).

To see the claim (ii), note that we have already shown that \(\widetilde{f}\) converges to \(\E[\E[Y \given U] \given X = (\cdot)]\).
This implies that 2Step-RR method suffers an MSE of
\begin{align*}
  &\E[(\E[\E[Y \given U] \given X] - \E[Y \given X])^{2}] + o(1)\\
  &= \E[(\E[\E[Y \given U] - \E[Y \given U, X] \given X])^{2}] + o(1)\\
  &\le \E[(\E[Y \given U] - \E[Y \given U, X])^{2}] + o(1)\\
  &\le c^{2} + o(1)
\end{align*}
under the relaxed assumption.

To show the claim (i), let us first define the class of problem instances satisfying the relaxed assumption.
Fix any probability density function (p.d.f.\@) \(p(x)\) defined over \(\cX\).
For \(c \in (0, \infty]\), let \(\mathcal{P}_{c}\) denote the set of p.d.f.-s over \(\cX \times \cU \times \cY\) defined as follows.
Any p.d.f.\@ \(q(x, u, y)\) over \(\cX \times \cU \times \cY\) is a member of \(\mathcal{P}_{c}\)
if and only if, for random variables \((X, U, Y) \sim q(x, u, y)\),
\(X\) follows \(p(x)\) and Eq.~\eqref{eq:u_sufficient} is violated by \(c\) in \(L^2(p)\)-distance:
\begin{equation*}
  \E[(\E[Y \given U] - \E[Y \given U, X])^2] \le c^{2}.
\end{equation*}
In this setup, we will establish an lower bound of
\begin{equation*}
  E_\text{minimax}
  \coloneqq
  \inf_{\widehat{f}_{(\cdot)}}
  \sup_{q \in \mathcal{P}_{c}}
  \E[(\widehat{f}_{S_{X}, S_{Y}}(X) - \E[Y \given X])^2],
\end{equation*}
where
\(\widehat{f}_{(\cdot)}\) represents any estimator that uses mediated uncoupled data \((S_{X}, S_{Y})\) and produces a function from \(\cX\) to \(\cY\),
and the expectation is taken over \((S_{X}, S_{Y})\) and \((X, Y) \sim q(x, y)\).

  Define \(\rho\colon \mathcal{P}_{c}^2 \to [0, \infty)\) by
  \begin{align*}
    \rho(q_1, q_2)
    &\coloneqq \E[(\E[Y_1 \given X_{1} = X_{0}]\\
    &\phantom{\coloneqq\E[(\ } - \E[Y_2 \given X_{2} = X_{0}])^2]
  \end{align*}
  for any \(q_1, q_2 \in \mathcal{P}_{c}\),
  where \(X_{0} \sim p(x)\), \((X_{1}, U_1, Y_1) \sim q_1(x, u, y)\), and \((X_{2}, U_2, Y_2) \sim q_2(x, u, y)\),
  and they are all independent.

  We will use the following lemma.
  \begin{lemma}
    Suppose \(p(x)\) is a symmetric, centered density function over \(\cX\), i.e., \(p(x) = p(-x)\) and \(\int x p(x) \drm x = 0\).
    Then, we have
    \label{lem:mmlb_dist}
    \begin{align*}
      \inf_{\widehat{p}_{(\cdot)}} \sup_{q\in \mathcal{P}_{c}} \E\lr[]{\rho\lr(){\widehat{p}_{S_{X}, S_{Y}}, q}}
      &\ge \frac{1}{2} c^2.
    \end{align*}
  \end{lemma}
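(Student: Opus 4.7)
My plan is the standard two-point Le Cam reduction. Observe first that $\rho$ depends on $q$ only through the regression function $m_q(x) \coloneqq \E[Y \given X = x]$ under $q$, with $\rho(q_1, q_2) = \|m_{q_1} - m_{q_2}\|_{L^2(p)}^2$, so $\sqrt{\rho}$ is a pseudo-metric on $\mathcal{P}_c$. Running the usual two-point argument --- namely $\sup \ge \tfrac{1}{2}(\E_{q_1} + \E_{q_2})$, followed by Markov applied to the test $\psi \in \{1,2\}$ that picks the nearest $m_{q_i}$, together with the triangle inequality $\|m_{\widehat{p}} - m_{q_1}\|_{L^2(p)} + \|m_{\widehat{p}} - m_{q_2}\|_{L^2(p)} \ge \sqrt{\rho(q_1,q_2)}$ --- yields, for any $q_1, q_2 \in \mathcal{P}_c$,
\[
\inf_{\widehat{p}_{(\cdot)}}\sup_{q\in\mathcal{P}_c} \E\bigl[\rho(\widehat{p}_{S_X,S_Y}, q)\bigr]
\ge \frac{\rho(q_1,q_2)}{8}\bigl(1 - \TV(P_{q_1},P_{q_2})\bigr),
\]
where $P_{q_i}$ is the joint law of the observed samples $(S_X, S_Y)$ under $q_i$. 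It therefore suffices to exhibit $q_1, q_2 \in \mathcal{P}_c$ with $P_{q_1} = P_{q_2}$ and $\rho(q_1, q_2) \ge 4c^2$.

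For the construction I would collapse $U$ to the deterministic constant $0$ (which makes $\E[U]=0$ trivial and reduces the $(X,U)$- and $(U,Y)$-marginals to the univariate marginals of $X$ and $Y$) and set, under $q_i$,
\[
X \sim p, \qquad U \equiv 0, \qquad Y = \varepsilon_i\, c\, \phi(X),
\]
with $\varepsilon_1 = +1$, $\varepsilon_2 = -1$, and $\phi$ any odd function on $\cX$ normalized so that $\E_p[\phi(X)^2] = 1$ (for instance $\phi(x) = x/\sqrt{\E_p[X^2]}$ when the second moment is finite and positive, or a bounded odd substitute otherwise). Then $m_{q_i}(x) = \varepsilon_i c\phi(x)$, so $\rho(q_1, q_2) = \E_p[(2c\phi(X))^2] = 4c^2$. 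Symmetry of $p$ combined with oddness of $\phi$ forces $\E_{q_i}[Y] = 0$, hence $\E_{q_i}[Y\given U] = 0$ and $\E_{q_i}[Y\given U,X] = m_{q_i}(X)$, so the $\mathcal{P}_c$-constraint evaluates to $\E_p[(c\phi(X))^2] = c^2$ and both $q_i$ belong to $\mathcal{P}_c$ (saturating the constraint).

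It remains to verify $P_{q_1} = P_{q_2}$. The $(X, U)$-marginal is $p(x)\otimes\delta_0$ under either $q_i$, and the $(U, Y)$-marginal is $\delta_0$ coupled with the law of $\varepsilon_i c\phi(X)$; the latter is invariant under the sign flip $\varepsilon_i \mapsto -\varepsilon_i$ because $\phi(X)$ is the odd image of a symmetric random variable and is therefore itself symmetrically distributed. Hence $P_{q_1} = P_{q_2}$, $\TV = 0$, and the Le Cam bound gives $\ge \tfrac{4c^2}{8} = \tfrac{c^2}{2}$, as claimed. The main obstacle is engineering a two-point family that simultaneously (i) lies in $\mathcal{P}_c$ with the conditional-mean-independence gap saturated at $c$, (ii) produces matching observable marginals so that $\TV = 0$, and (iii) pushes the regression functions apart by $2c$ in $L^2(p)$; the degenerate choice $U \equiv 0$ is what reconciles the three, because matching $(X,U)$ then reduces to matching the fixed marginal $p(x)$ (automatic) while matching $(U, Y)$ reduces to matching the $Y$-marginal, which is precisely what the symmetry/oddness pairing delivers. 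The familiar Le Cam factor $1/8$ applied to the achieved separation $4c^2$ leaves exactly the $1/2$ appearing in the lemma, whose sharpening the authors flag as future work.
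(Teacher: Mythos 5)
Your proof is correct and is essentially the paper's argument: a two-point Le Cam reduction using a pair of distributions with $Y = \pm c\,\phi(X)$ for a normalized odd $\phi$ (the paper takes $\phi(x)=x/\sqrt{\Var[X]}$) and $U$ independent of $X$ (the paper draws $U\sim q(u)$; your $U\equiv 0$ is a special case), so that symmetry of $p(x)$ makes the $(X,U)$- and $(U,Y)$-marginals coincide while the regression functions are $2c$ apart in $L^2(p)$, yielding the $\tfrac{1}{2}c^2$ bound with the same $1/8$ Le Cam factor. No gaps.
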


  Once we establish Lemma~\ref{lem:mmlb_dist}, we immediately obtain the following two propositions.

  \begin{proposition}
    \label{prop:mmlb_det}
    Suppose \(p(x)\) is a symmetric, centered density function over \(\cX\), i.e., \(p(x) = p(-x)\) and \(\int x p(x) \drm x = 0\).
    For any estimator \(\widehat{f}_{(\cdot)}\) that takes mediated uncoupled data as input and produces a function from \(\cX\) to \(\cY\), we have
    \begin{equation*}
      \sup_{p^* \in \mathbb{P}_{p, c}}
      \E\lr[]{\lr(){\widehat{f}_{S_{X}, S_{Y}}(X) - \E[Y \given X]}^2}
      \ge \frac{1}{2}c^2,
    \end{equation*}
    where \((X, Y) \sim p^*(x, y)\).
    This holds no matter how large \(n\) and \(n'\) are.
  \end{proposition}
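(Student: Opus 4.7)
The plan is to apply Le Cam's two-point method, the standard tool for minimax lower bounds. The key observation is that the observed data $(S_{X}, S_{Y})$ depends only on the marginals $p(x, u)$ and $p(u, y)$, so two joint laws agreeing on these marginals produce statistically indistinguishable samples while still being free to disagree about the target $\E[Y \given X]$. Any estimator must commit to a single function ignorant of which candidate is in force, and the squared-loss gap then forces a worst-case penalty.

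Concretely, I would construct two joint densities $p_{1}^{*}, p_{2}^{*} \in \mathcal{P}_{c}$ satisfying: (i) $p_{1}^{*}(x, u) = p_{2}^{*}(x, u)$ and $p_{1}^{*}(u, y) = p_{2}^{*}(u, y)$, so that the laws of $(S_{X}, S_{Y})$ coincide; and (ii) the conditional means $g_{i}(x) \coloneqq \E_{i}[Y \given X = x]$ are well-separated in $L^{2}(p)$. A concrete candidate lives in the binary setting $\cX = \cU = \cY = \{-1, +1\}$ with $X \indep U$ each uniform on $\{-1,+1\}$, so the symmetry conditions $p(x) = p(-x)$ and $\E[U] = 0$ hold. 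Setting $\E_{i}[Y \given X, U] = (-1)^{i} c X$ with a compatible Bernoulli conditional law of $Y$ given $(X, U)$, direct computation gives $\E_{i}[Y \given U] = 0$, $q_{i}(x, u) = 1/4$, and $q_{i}(u, y) = 1/4$, so both observable marginals match across $i$. The gap $\E[(\E[Y \given U] - \E[Y \given X, U])^{2}] = c^{2}$ holds with equality, so $p_{i}^{*} \in \mathcal{P}_{c}$, and $g_{i}(x) = (-1)^{i} c x$ gives $\|g_{1} - g_{2}\|_{L^{2}(p)}^{2} = 4c^{2}$.

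With the construction in hand, the bound follows from a standard parallelogram-identity argument. Since the total variation between the observed-data laws is zero, $\widehat{f}_{S_{X}, S_{Y}}$ has the same law under both $p_{i}^{*}$, and for any realization of the estimator the squared $L^{2}(p)$-errors against $g_{1}$ and $g_{2}$ sum to at least $\tfrac{1}{2}\|g_{1} - g_{2}\|^{2}$. Combining this with the trivial inequality $\sup_{i} R_{i} \ge \tfrac{1}{2}(R_{1} + R_{2})$ and taking expectations over the estimator's randomness yields
\[
    \sup_{p^{*} \in \mathcal{P}_{c}} \E\bigl[(\widehat{f}_{S_{X}, S_{Y}}(X) - \E[Y \given X])^{2}\bigr] \ge \tfrac{1}{4}\|g_{1} - g_{2}\|^{2} = c^{2} \ge \tfrac{1}{2} c^{2},
\]
establishing the claim (the paper's $1/2$ factor appears to be a conservative outcome of the specific Le Cam calculation used).

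The main obstacle is the construction itself: the constraint defining $\mathcal{P}_{c}$ caps how much $X$ can shift $\E[Y \given X, U]$ away from $\E[Y \given U]$, which is precisely the freedom needed to produce two distributions with matching $(U, Y)$-marginals but differing $\E[Y \given X]$. The symmetric binary construction above threads this needle cleanly, letting feasibility, marginal matching, and the target separation all be verified by direct arithmetic; extensions to broader $\cX, \cU, \cY$ proceed analogously via symmetric, odd-in-$X$ additive shifts.
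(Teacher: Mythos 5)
Your proposal is correct and follows the same two-point Le Cam skeleton as the paper: construct two joint laws whose $(X,U)$- and $(U,Y)$-marginals coincide (so the mediated uncoupled data are identically distributed) but whose regression functions are the sign-flipped pair $\pm c\,x/\sigma$, then convert the separation into a minimax lower bound. The paper instantiates this for the given arbitrary symmetric centered $p(x)$ via $Y=\theta X$ with $\theta=\pm c/\sqrt{\Var[X]}$ and then invokes the generic Le Cam inequality $E_{\mathrm{minimax}}\ge\frac{1}{2}\delta^{2}(1-\TV)$, which is where the factor $1/2$ comes from. Your final step is genuinely different and better: replacing the testing/indicator reduction by the deterministic inequality $\norm{f-g_{1}}^{2}+\norm{f-g_{2}}^{2}\ge\frac{1}{2}\norm{g_{1}-g_{2}}^{2}$, valid pointwise in the data and hence surviving the expectation since the data law is common to both hypotheses, yields the constant $c^{2}$ rather than $c^{2}/2$. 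Since the paper's upper bound for 2Step-RR is $c^{2}+o(1)$ and the paper explicitly leaves "whether one can eliminate the factor $1/2$" as an open question, your argument resolves that question affirmatively (the lower bound matches the upper bound). Two minor caveats: your binary instantiation with $\E_{i}[Y\given X,U]=\pm cX$ and $\cY=\{-1,+1\}$ only defines a valid conditional law when $c\le 1$, and it fixes $p(x)$ to the uniform law on $\{-1,+1\}$ whereas the proposition quantifies over a given arbitrary symmetric centered $p$; both issues disappear if you simply run your parallelogram step on the paper's real-valued construction $Y=\pm(c/\sigma)X$, as your closing remark about odd-in-$X$ additive shifts suggests.
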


  \begin{proposition}
    Suppose \(p(x)\) is a symmetric, centered density function over \(\cX\), i.e., \(p(x) = p(-x)\) and \(\int x p(x) \drm x = 0\).
    For any stochastic estimator \(\widehat{Y}_{(\cdot)}\) that takes mediated uncoupled data as input and produces a \(\cY\)-valued random variable depending on the test sample \(X\), we have
    \label{prop:mmlb_stc}
    \begin{align*}
      \sup_{q\in \mathcal{P}_{c}} \E\lr[]{\lr(){\E[\widehat{Y}_{S_{X}, S_{Y}} \given X] - \E[Y\given X]}^2}
      \ge \frac{1}{2} c^2,
    \end{align*}
    where \((X, Y) \sim q(x, y)\).
    This holds no matter how large \(n\) and \(n'\) are.
  \end{proposition}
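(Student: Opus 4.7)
The plan is to run a two-point, Le Cam style argument that exploits the fact that mediated uncoupled data only identify the marginals $q(x,u)$ and $q(u,y)$, never the full joint $q(x,u,y)$. First I would reduce the stochastic estimator to a deterministic function by setting
\begin{equation*}
\tilde{f}_q(x) \coloneqq \E_q[\widehat{Y}_{S_X,S_Y}\mid X=x],
\end{equation*}
where the expectation averages over $(S_X,S_Y)$ drawn from $q$ and over the estimator's internal randomness. Because the test sample $X$ is independent of $(S_X,S_Y)$, $\tilde{f}_q$ is a well-defined deterministic function of $x$, and with $r_q(x)\coloneqq\E_q[Y\mid X=x]$ the quantity to lower-bound becomes $\E[(\tilde{f}_q(X)-r_q(X))^2]$ for $X\sim p$.

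The key invariance I would exploit is that the law of $(S_X, S_Y)$ depends on $q$ only through the marginals $q(x,u)$ and $q(u,y)$, and $X$ has law $p$ under every $q\in\mathcal{P}_c$. So whenever $q_1, q_2\in\mathcal{P}_c$ agree on those two marginals, the integrand defining $\tilde{f}_q$ and the measure it is integrated against are both unchanged, so $\tilde{f}_{q_1}\equiv\tilde{f}_{q_2}$; write this common function $\tilde{f}$. With $r_i\coloneqq r_{q_i}$ and $\|\cdot\|$ the $L^2(p)$-norm, the parallelogram identity $\|a-b\|^2+\|a-c\|^2\ge\tfrac12\|b-c\|^2$ then yields
\begin{equation*}
\sup_{q\in\mathcal{P}_c}\E[(\tilde{f}_q(X)-r_q(X))^2]\ge\max_{i\in\{1,2\}}\|\tilde{f}-r_i\|^2\ge\tfrac14\|r_1-r_2\|^2,
\end{equation*}
so it suffices to exhibit $q_1, q_2\in\mathcal{P}_c$ with matching $(X,U)$- and $(U,Y)$-marginals and $\|r_1-r_2\|^2\ge 2c^2$.

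For the construction I would take $\widetilde{U}\equiv 0$ (so $\E[\widetilde U]=0$ trivially); the defining constraint of $\mathcal{P}_c$ then collapses to $\|r_q-\E_q[Y]\|^2\le c^2$. I would pick an odd, unit-norm $\phi\in L^2(p)$ (which exists because $p$ is symmetric; e.g., $\phi(x)=\operatorname{sign}(x)$) and let $q_1, q_2$ be the laws of $(X,0,\pm c\phi(X)+\eta)$, respectively, with $\eta$ zero-mean noise independent of $X$. Then $r_i(x)=\pm c\phi(x)$ and $\E_{q_i}[Y]=0$, so $\|r_i\|^2=c^2$ (confirming $q_i\in\mathcal{P}_c$), while $\|r_1-r_2\|^2=4c^2$, giving the claimed $c^2\ge c^2/2$.

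The hard part will be verifying that $q_1$ and $q_2$ really share the $(U,Y)$-marginal: the $(X,U)$-marginal is immediate since $U\equiv 0$ and $X\sim p$ under both, but the $Y$-marginal coincides only because the substitution $x\mapsto -x$, combined with $p(-x)=p(x)$ and $\phi(-x)=-\phi(x)$, gives $c\phi(X)+\eta\stackrel{d}{=}-c\phi(X)+\eta$. This is precisely where the symmetry hypothesis on $p$ enters; without it the sign flip would generally alter the $Y$-marginal, $q_1$ and $q_2$ would be distinguishable from uncoupled data, and the invariance $\tilde{f}_{q_1}\equiv\tilde{f}_{q_2}$ in the preceding paragraph would fail.
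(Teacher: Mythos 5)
Your proposal is correct and takes essentially the same route as the paper's proof: a two-point Le Cam-style argument that exhibits two members of $\mathcal{P}_{c}$ whose $(X,U)$- and $(U,Y)$-marginals coincide (so mediated uncoupled data cannot distinguish them) but whose regression functions are $\pm c\phi(X)$ for an odd, unit-norm $\phi$, the symmetry $p(x)=p(-x)$ being exactly what makes the $Y$-marginals match --- the paper's construction is the special case $\phi(x)=x/\sqrt{\E[X^2]}$ with $U$ drawn independently from an arbitrary density rather than $U\equiv 0$ (you should do the same, since a point mass at $0$ has no density in the paper's setup, and nothing else changes because $\E[Y\given U]=\E[Y]$ either way). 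A minor bonus of your carrying out the two-point step by hand via $\max_{i}\lVert\tilde f-r_i\rVert^2\ge\tfrac14\lVert r_1-r_2\rVert^2$ instead of quoting Le Cam's $\tfrac12\delta^2(1-\mathrm{TV})$ bound is that you obtain the constant $c^2$ rather than $c^2/2$, which bears on the paper's remark that eliminating the $1/2$ factor is left open.
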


  \begin{proof}[Proof of Proposition~\ref{prop:mmlb_stc} and Proposition~\ref{prop:mmlb_det}]
    If the statements were not to hold, it would contradict Lemma~\ref{lem:mmlb_dist}.
  \end{proof}

  We present our proof of Lemma~\ref{lem:mmlb_dist} in details
  since it provides an intuition about the problem with a concrete example.
  \begin{proof}[Proof of Lemma~\ref{lem:mmlb_dist}]
    The proof is based on Le Cam's method.
    We construct two p.d.f.-s within \(\mathcal{P}_{c}\)
    that are distant enough in terms of \(\rho\),
    whose corresponding mediated uncoupled data, however, have the identical distribution.

    For any \(\theta \in \Re\), let \(p(x, u, y; \theta)\) be the joint p.d.f.\@ of the variables defined by
    \begin{align*}
      X \sim p(x), \quad
      U \sim q(u), \quad
      Y = \theta X
    \end{align*}
    with any p.d.f.\@ \(q(u)\) over \(\cU\),
    where \(X\) and \(U\) are independent, and \(\theta \in \Re\).

    Take the two density functions
    \(p_1(x, u, y) \coloneqq p(x, u, y; c/\sigma)\) and \(p_2(x, u, y) \coloneqq p(x, u, y; -c/\sigma)\),
    where \(\sigma \coloneqq \sqrt{\Var[X]}\).
    Note that each of their parameters is the negation of the other.

    \(p_1\) and \(p_2\) both belong to \(\mathcal{P}_{c}\)
    because \(p_1(x) = p_2(x) = p(x)\), and
    both for \((X, Y, U) \sim p_1(x, u, y)\) and for \((X, Y, U) \sim p_2(x, u, y)\),
    \begin{align*}
                      &\E[(\E[Y \given U] - \E[Y \given U, X])^2]\\
                      &= \E[(0 + (c/\sigma) X)^2] \quad \text{(or \(\E[(0 - (c/\sigma) X)^2]\))}\\
                      &= c^2.
    \end{align*}

    Obviously, \(p_1(x, u) =  p(x)q(u) = p_2(x, u)\).
    Since \(p_1(y) = p_2(y)\) from the symmetry \(p(x) = p(-x)\), \(p_1(u, y) = q(u)p_1(y) = q(u)p_2(y) = p_2(u, y)\).
    Hence, the distribution of the mediated uncoupled data induced by \(p_1(x, u, y)\) and that by \(p_2(x, u, y)\)
    are identical to each other.
    On the other hand, \(p_1\) and \(p_2\) are \(2c\)-separated:
    \begin{align*}
      &\rho(p_1(x, u, y), p_2(x, u, y))\\
      &= \E[(\E[Y_1 \given X] - \E[Y_2 \given X])^2]\\
      &= \E[((c/\sigma) X + (c/\sigma) X)^2]\\
      &= (2c)^2.
    \end{align*}

    The argument above intuitively tells that
    it is impossible to distinguish distinct p.d.f.-s \(p_{1}\) and \(p_{2}\)
    only with the information given by the mediated uncouple data,
    and even the best possible guess would suffer loss proportionally to \(c^{2}\)
    in the worst case.
    More formally, by applying Le Cam's method, we obtain
    \begin{equation*}
      \inf_{\widehat{p}_{(\cdot)}} \sup_{q\in \mathcal{P}_{c}} \E\lr[]{\rho\lr(){\widehat{p}_{S_{n, n'}}(x, u, y), q(x, u, y)}}
      \ge \frac{1}{2} c^2.
    \end{equation*}
  \end{proof}

\subsection{Proof of Theorem~\ref{thm:ub}}
\label{apx:proof_ub}
See Appendix~\ref{apx:proof_ub_2}.
(This subsection is only a stub pointing to Appendix~\ref{apx:proof_ub_2} and will be removed in the next version.)

\subsection{Linear-in-parameter models}
\label{apx:linearmodels}
We consider the following regularized version of our method with linear-in-parameter models:
\begin{align*}
    \min_{\bm\theta\in\Re^{b_{\cF} + b_{\cH}}} \left[ \widehat{J}_w(f_{\bm\alpha}, h_{\bm\beta}) + \lambda \bm\theta^\top\bm\theta \right],
\end{align*}
where \(f_{\bm\alpha}(\bm x) \coloneqq \bm\alpha^\top \bm\phi(\bm x)\),
\(h_{\bm\beta}(\bm u) \coloneqq \bm\beta^\top \bm\psi(\bm u)\),
\(\bm\theta \coloneqq (\bm\alpha^\top, \bm\beta^\top)^\top\),
and \(\lambda \in (0, \infty)\).
Here,
\begin{align*}
    \widehat{J}_w(f_{\bm\alpha}, h_{\bm\beta})
    &= \frac{1}{nw}\sum_{i=1}^n (\bm\alpha^\top\bm\varphi(X_i) - \bm\beta^\top\bm\psi(U_i))^2\\
    &\phantom{=\ } + \frac{1}{n'(1-w)}\sum_{i=1}^{n'} (\bm\beta^\top\bm\psi(U'_i) - Y'_i)^2\\
    &= \bm\theta^\top\bm A\bm\theta - \bm b\bm\theta + \text{const.},
\end{align*}
where the matrices \(\bm A\) and \(\bm b\) are defined as
\begin{align*}
    &\bm A
    \coloneqq \frac{1}{nw}\sum_{i=1}^n\lr[]{
    \begin{pmatrix}
        \bm\varphi(X_i)\\
        -\bm\psi(U_i)
    \end{pmatrix}
    \begin{pmatrix}
        \bm\varphi(X_i)\\
        -\bm\psi(U_i)
    \end{pmatrix}^\top
    }\\
    &\phantom{\coloneqq\ }
      + \frac{1}{n'(1-w)}\sum_{i=1}^{n'}\lr[]{
    \begin{pmatrix}
        \bm 0\\
        \bm\psi(U_i)
    \end{pmatrix}
    \begin{pmatrix}
        \bm 0\\
        \bm\psi(U_i)
    \end{pmatrix}^\top
    },\\
    &\text{and}\quad
    \bm b \coloneqq \frac{1}{n'(1-w)}\sum_{i=1}^{n'}(\bm 0^\top, Y'_i\bm\psi(U'_i)^\top)^\top.
\end{align*}
Since \(A\) is positive semi-definite, we obtain the minimizer in a closed form by
\begin{align}
    \widehat{\bm\theta}
    &\coloneqq \argmin_{\bm\theta\in\Re^{b_{\cF} + b_{\cH}}} \left[ \hat{J}_w(\bm\theta) + \lambda \bm\theta^\top\bm\theta \right]\nonumber\\
    &= \argmin_{\bm\theta\in\Re^{b_{\cF} + b_{\cH}}} \left[ \bm\theta^\top(\bm A + \lambda\bm I)\bm\theta - \bm b\bm\theta \right]\nonumber\\
    &= (\bm A + \lambda\bm I_{b_{\cF} + b_{\cH}})^{-1}\bm b  \label{apxeq:theta_full_mat}
\end{align}
for any positive \(\lambda\), where \(\bm I_k\) denotes the \(k\)-by-\(k\) identity matrix.

Furthermore, using the block-wise matrix inversion formula,
we have
\begin{align}
    \widehat{\bm\alpha}
    &\coloneqq \bm M_1^{-1}\bm M_2 \widehat{\bm\beta},\quad\text{and}\quad
    \widehat{\bm\beta}
    (\bm M_3 - \bm M_2^\top\bm M_1^{-1}\bm M_2)^{-1} \bm b_1,  \label{apxeq:block_mat}
\end{align}
where
\begin{align*}
    \bm M_1 &\coloneqq \frac{1}{nw}\sum_{i=1}^n[\bm\varphi(X_i)\bm\varphi(X_i)^\top] + \lambda\bm I_{b_{\cF}},\\
    \bm M_2 &\coloneqq \frac{1}{nw}\sum_{i=1}^n[\bm\varphi(\bm x_i)\bm\psi(U_i)^\top],\\
    \bm M_3 &\coloneqq \frac{1}{nw}\sum_{i=1}^n[\bm\psi(U_i)\bm\psi(U_i)^\top]\\
           &+ \frac{1}{n'(1-w)}\sum_{i=1}^{n'}[\bm\psi(U'_i)\bm\psi(U'_i)^\top] + \lambda\bm I_{b_{\cH}},\\
    \bm b_1 &\coloneqq \frac{1}{n'(1-w)}\sum_{i=1}^{n'}[Y'_i\bm\psi(U'_i)].
\end{align*}
Eq.~\eqref{apxeq:block_mat} involves matrices of size at most \(\max(b_{\cF}, b_{\cH})\)-by-\(\max(b_{\cF}, b_{\cH})\),
which requires less computation resources in terms of both space and time compared to Eq.~\eqref{apxeq:theta_full_mat} involving inversion of a \((b_{\cF} + b_{\cH})\)-by-\((b_{\cF} + b_{\cH})\) matrix.

\subsection{Scatter Plots of MSEs for the Low-quality Image Classification}
\label{sec:MSE_plot_downsampled}
Figure~\ref{fig:MSE_downsampled} shows scatter plots of MSEs for the low-quality image classification.
We can see that the proposed methods outperform the naive method.
\begin{figure}[tp]
\centering
\subcaptionbox{MNIST. \label{fig:MNIST_downsampled_mse}}{\includegraphics[width=\columnwidth*23/48]{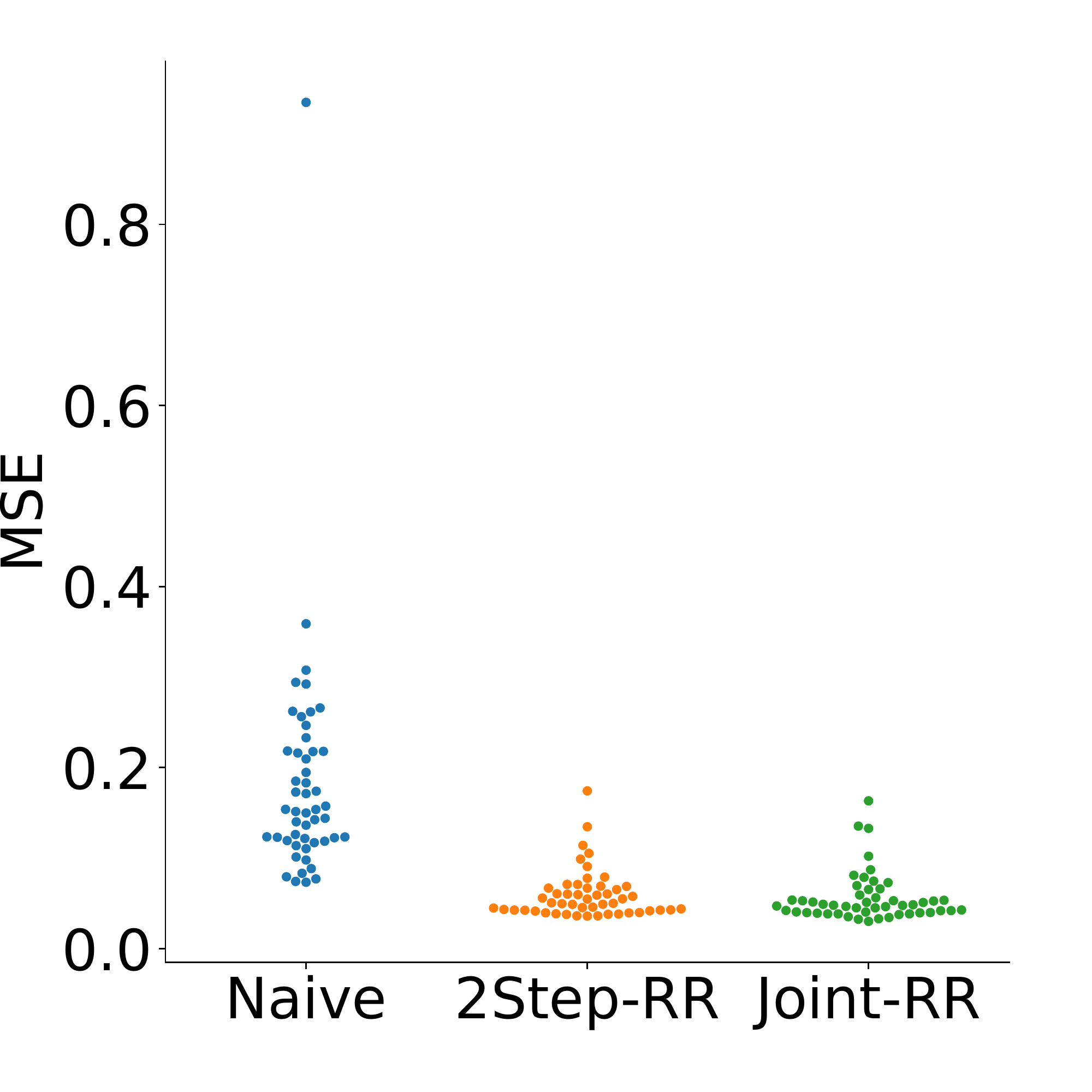}}
\subcaptionbox{Fashion-MNIST. \label{fig:FashionMNIST_downsampled_mse}}{\includegraphics[width=\columnwidth*23/48]{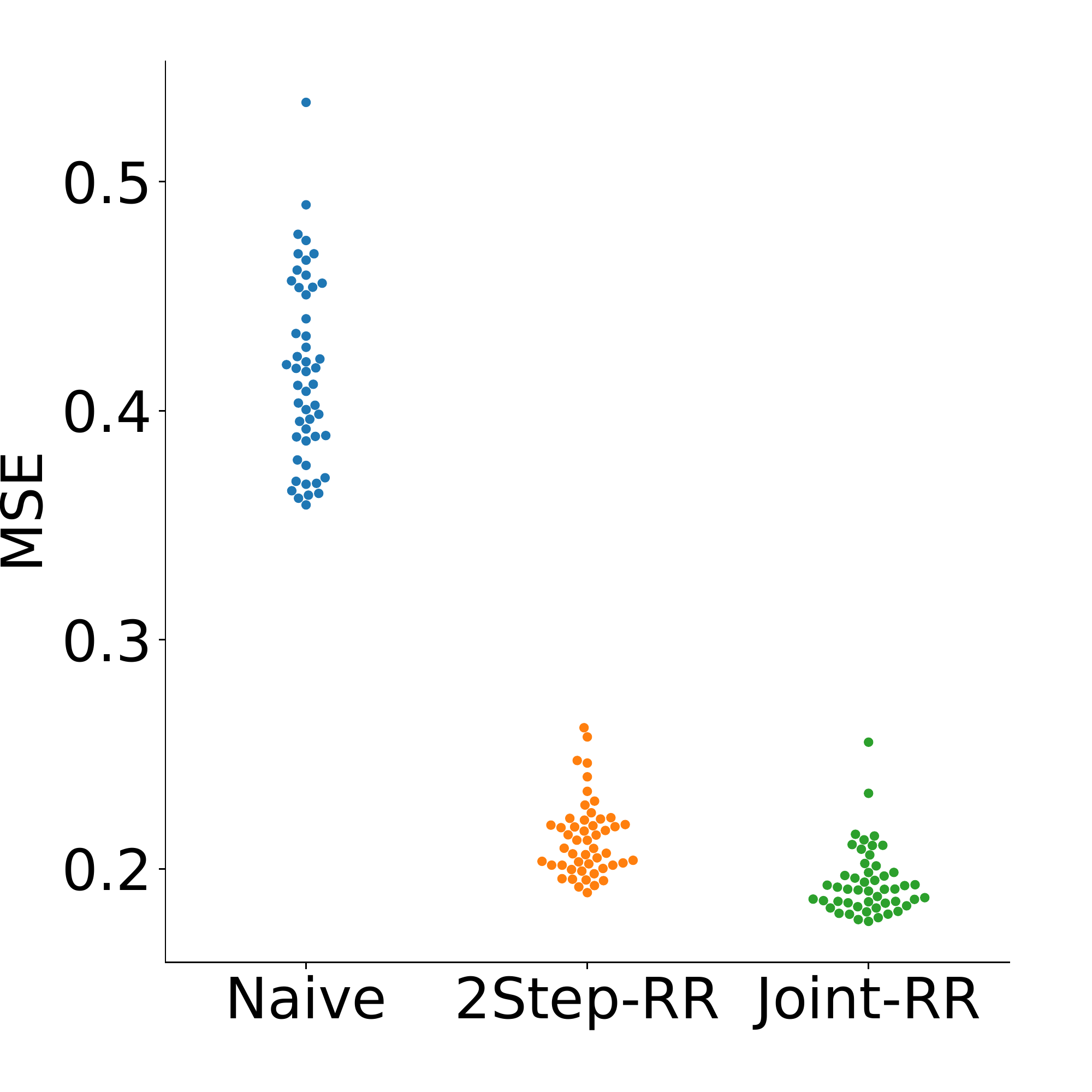}}
\subcaptionbox{CIFAR10. \label{fig:CIFAR10_downsampled_mse}}{\includegraphics[width=\columnwidth*23/48]{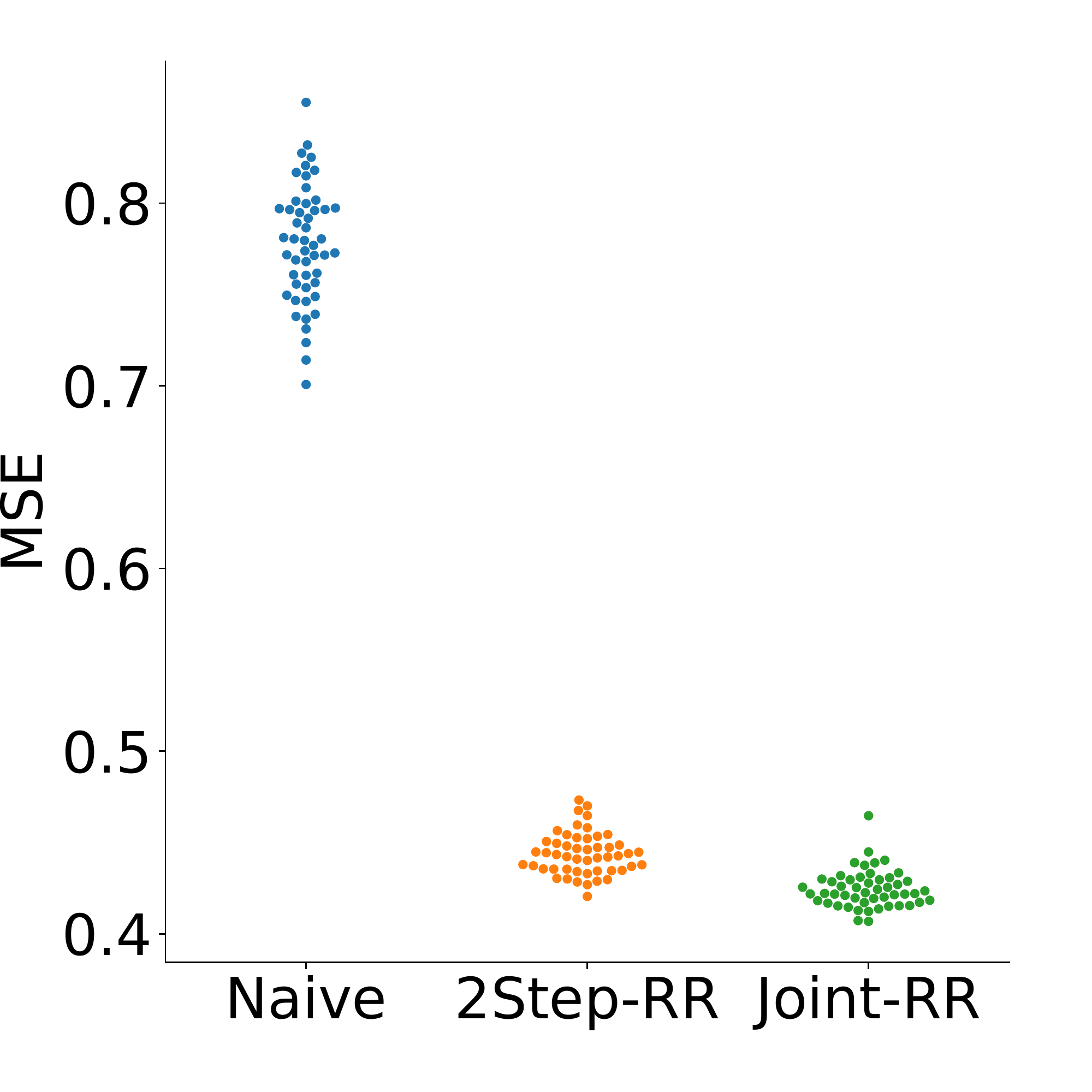}}
\subcaptionbox{CIFAR100. \label{fig:CIFAR100_downsampled_mse}}{\includegraphics[width=\columnwidth*23/48]{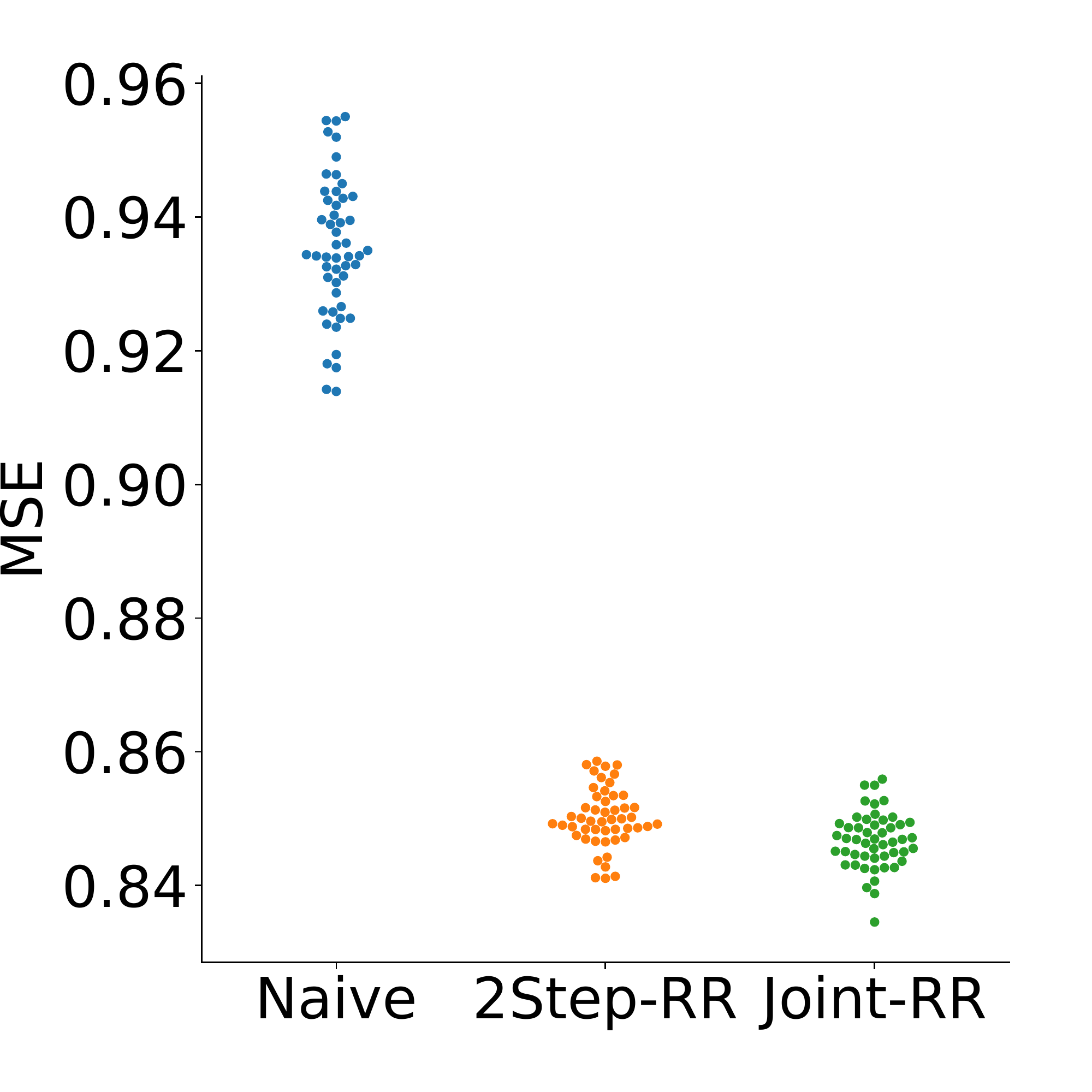}}
\caption{MSEs for the experiments on low-quality image classification.}
\label{fig:MSE_downsampled}
\end{figure}


\end{document}